\documentclass{article}

 \usepackage[main, final]{neurips_2026}

\usepackage[utf8]{inputenc} 
\usepackage[T1]{fontenc}    
\usepackage{hyperref}       
\usepackage{url}            
\usepackage{booktabs}       
\usepackage{amsfonts}       
\usepackage{nicefrac}       
\usepackage{microtype}      
\usepackage{xcolor}         

\usepackage{amsmath}
\usepackage{amssymb}
\usepackage{mathtools}
\usepackage{amsthm}

\usepackage{algorithm}
\usepackage{algorithmic}
\usepackage{microtype}
\usepackage{bbm}
\usepackage{hyperref}
\usepackage{booktabs}   
\usepackage{tabularx}   
\usepackage{capt-of}  
\usepackage{subcaption}
\usepackage{wrapfig}

\usepackage{enumitem}

\newtheorem{remark}{Remark}
\theoremstyle{plain}
\newtheorem{theorem}{Theorem}
\newtheorem{proposition}{Proposition}
\newtheorem{corollary}{Corollary}
\newtheorem{lemma}{Lemma}

\theoremstyle{definition}
\newtheorem{definition}{Definition}
\newtheorem{assumption}{Assumption}

\usepackage{cleveref}
\crefname{equation}{Eq.}{Eqs.}
\crefname{assumption}{Assumption}{Assumptions}
\crefname{definition}{Definition}{Definitions}
\crefname{theorem}{Theorem}{Theorems}
\crefname{lemma}{Lemma}{Lemmas}
\crefname{proposition}{Proposition}{Propositions}
\crefname{remark}{Remark}{Remarks}
\crefname{corollary}{Corollary}{Corollaries}

\title{Provable Test-Time Scaling for Beam Search in LLM Reasoning}

\author{%
  Qijia He\thanks{equal contribution} \\
  The Ohio State University\\
  \texttt{he.2806@osu.edu} \\
  \And
  Yu Huang$^*$ \\
  University of Pennsylvania \\
  \texttt{yuh42@wharton.upenn.edu} \\
  \AND
  Yuan Cheng$^*$ \\
  National University of Singapore \\
  \texttt{yuan.cheng@u.nus.edu} \\
  \And
  Yuxin Chen \\
  University of Pennsylvania \\
  \texttt{yuxinc@wharton.upenn.edu} \\
  \And
  Yingbin Liang \\
  The Ohio State University\\
  \texttt{liang.889@osu.edu} \\
}

\begin{document}

\maketitle

\begin{abstract}
Beam-search–based test-time methods provide an effective way to improve large language model (LLM) performance on long-horizon generation by pruning invalid reasoning paths early, leading to significantly improved reasoning efficiency and more favorable test-time cost scaling. Despite strong empirical success, the theoretical understanding of beam search remains limited. In this paper, we study the test-time compute guarantee of the commonly used beam search framework that uses the model's internal log-likelihood for intermediate scoring, while relying on an external reward model only after a complete response is generated. We first establish a lower bound for vanilla beam search, showing that at least $\Omega(C^\star(x)^2)$ samples are required for the optimal response to survive, where $C^\star(x)$ is the token-level coverage coefficient for prompt $x$.
This motivates our modified confidence-filtered beam search (CF-Beam), which reduces the sufficient coverage dependence from quadratic to nearly linear under prefix competitiveness, for fixed horizon, gap, and target accuracy. We then show that the regret of CF-Beam is upper-bounded by the probability of rare failure events and the reward estimation error scaled by a path-level coverage coefficient, where the rare-failure term vanishes as per-step sampling increases. Our results highlight a fundamental advantage of beam search over sequence-level inference methods such as \textit{Best-of-N} and \textit{Best-of-Majority}. While the guarantees of these approaches typically involve coverage coefficients that grow exponentially with the horizon $L$, CF-Beam controls the dominant search-induced term through a token-level coverage coefficient that scales polynomially with $L$. Our numerical experiments further confirm that beam search is more robust on hard instances and under increasing reasoning horizons.

\end{abstract}

\section{Introduction}

Test-time computing has emerged as a powerful paradigm for improving the performance of large language models (LLMs) by allocating additional computation at inference time. Unlike post-training approaches such as supervised fine-tuning and reinforcement learning from human feedback, which improve performance by updating the model’s parameters, test-time methods operate on a fixed model and improve accuracy via repeated sampling, aggregation, and selection. Due to their simplicity in implementation and striking performance scaling behavior, test-time methods have been widely deployed in real-world systems \citep{compound-ai-blog, team2023gemini}. 

Existing and widely studied test-time methods include Best-of-$N$ sampling (BoN), majority voting, chain-of-thought (CoT) reasoning, and their corresponding variants \citep{jinnai2024regularized, ichihara2025evaluation,wang2022self, liu2025trust, di2025best}. Despite their empirical success, applying these approaches to complex real world tasks can be prohibitively expensive, as BoN and voting-based schemes require a rapidly increasing number of samples to maintain accuracy as context length grows. Moreover, the presence of imperfect verifiers can exacerbate selection errors by over-trusting spurious candidates or discarding correct but low-scoring solutions \citep{huang2025best}. 


To improve the reasoning capabilities of LLMs on long reasoning chains and mitigate the compounding errors in final answers that arise from accumulated small mistakes or model imperfections across intermediate steps \citep{xu2025collaborative, xie2023self}, \emph{beam search} has been found to be a promising test-time approach for sparse-success and long-horizon settings. This method expands candidate responses token by token and applies a step-level verifier, which can be either an external reward model \citep{lightman2023let, setlur2024rewarding} or an internal self-evaluation metric \citep{xie2023self}, to prune low-quality continuations at each step. By aggregating candidate trajectories and pruning inferior paths, beam search allocates inference-time computation to the most promising prefixes toward desired outputs, thereby increasing the probability of identifying correct solutions \citep{yu2025scaling, zhu2024deductive} while simultaneously reducing unnecessary test-time computation.

Despite a rich body of empirical evidence demonstrating the effectiveness of beam search, theoretical analysis of beam search remains limited. \citet{DBLP:conf/emnlp/MeisterCV20} studied what implicit objective function beam search optimizes. Recently, \cite{fadeeva2025don} studied beam search for self-consistency-based uncertainty quantification. To the best of our knowledge, there has not been
a principled theoretical understanding of its test-time scaling performance, which raises the following fundamental research questions:

\begin{enumerate}[itemsep=0pt, parsep=0pt, topsep=0pt,leftmargin=*]
\item[1.] {\em How does beam search scale at test time, particularly when test-time compute is constrained and verifiers are imperfect?} 

\item[2.] {\em Compared to recent test-time guarantees for Best-of-Majority \cite{di2025best} and best-of-N \cite{huang2025best}, does beam search enjoy provable advantages?}
\end{enumerate}
From the analysis standpoint, the main technical obstacle is that step-by-step pruning in beam search induces strong token-level dependencies over time, making the resulting dynamics harder to analyze than in sequence-level methods. Moreover, beam search often relies on heuristic intermediate verifiers whose step-wise scores are not aligned with a single, globally consistent objective. 





In this paper, we answer the above questions by analyzing the test-time performance of beam search in a sampling-access setting, where token probabilities are estimated from sampled continuations and used as an internal verifier, while an external reward model is accessible only after the complete response is generated.
\begin{itemize}[leftmargin=*, itemsep=2pt, topsep=2pt, parsep=0pt, partopsep=0pt]
    \item \textbf{Vanilla-Beam vs.~CF-Beam.} We establish a lower bound of sample complexity, showing that vanilla beam search (Vanilla-Beam) requires the sample size to be at least $\Omega(C^\star(x)^2)$ for the optimal response to survive, where $C^\star(x)$ is the token-level coverage coefficient for prompt $x\in\mathcal{X}$. This motivates us to filter the low-probability samples via the estimated sampling policy, resulting in the confidence-filtered beam search (CF-Beam) as shown in Figure~\ref{fig:beam search pipeline}, which reduces the sufficient coverage dependence from quadratic to nearly linear under prefix competitiveness, for fixed horizon, gap, and target accuracy.
    
        

    \item \textbf{Regret Analysis.} We show that the regret of CF-Beam is upper bounded by the probability of rare failure events and the reward estimation error scaled by a path-level coverage coefficient, where the rare-failure term vanishes as per-step sampling increases. Our results highlight a fundamental advantage of beam search over sequence-level inference methods such as \textit{Best-of-$N$} and \textit{Best-of-Majority}. While the guarantees of those approaches involve coverage coefficients that may potentially grow {\em exponentially} with the horizon $L$, CF-Beam controls the dominant search-induced term through a token-level coverage coefficient that scales only {\em polynomially} with $L$.
    
    \item \textbf{Reward-Model-Free Regret.} We further analyze the regret for a fully  self-consistent CF-Beam. In contrast to standard CF-Beam, whose guarantees depend on the error of an external reward model, the regret bound for the self-consistent version is driven solely by the probability of rare failure events. 
    Since the regret is decoupled from the reward-model error,
    self-consistent CF-Beam admits a sufficient total query budget growing only \textit{linearly} in $L$, up to logarithmic factors, for fixed beam width, token-level coverage, prefix gap, and target accuracy.

    \item \textbf{Numerical experiments.} Our numerical results compare CF-Beam against vanilla beam search, BoN, Majority voting and Best-of-Majority, demonstrating that CF-Beam achieves better performance on hard instances and is more robust to increasing horizon length.
\end{itemize}




\section{Preliminaries}\label{sec: preliminary}

\paragraph{Problem Setting and Decoding.} Test-time algorithms are decoding strategies that use extra inference-time compute to map an input prompt under a fixed LLM model to an improved output response. Formally, given a prompt $x\in \mathcal{X}$, the goal of the algorithm is to produce a response $y$, which is represented as a length-$L$ token sequence
$y=(a_1,\ldots,a_L)\in\mathcal{A}^L$, based on a fixed reference LLM policy
$\pi_{\text{ref}}: \mathcal{X}\to \Delta(\mathcal{A}^L)$. Here, $\mathcal{A}$ denotes the token vocabulary, and $\Delta(\mathcal{A}^L)$ is the set of probability distributions over $\mathcal{A}^L$. 

\begin{wrapfigure}{r}{0.55\linewidth}       
    \centering
    \includegraphics[width=1\linewidth]{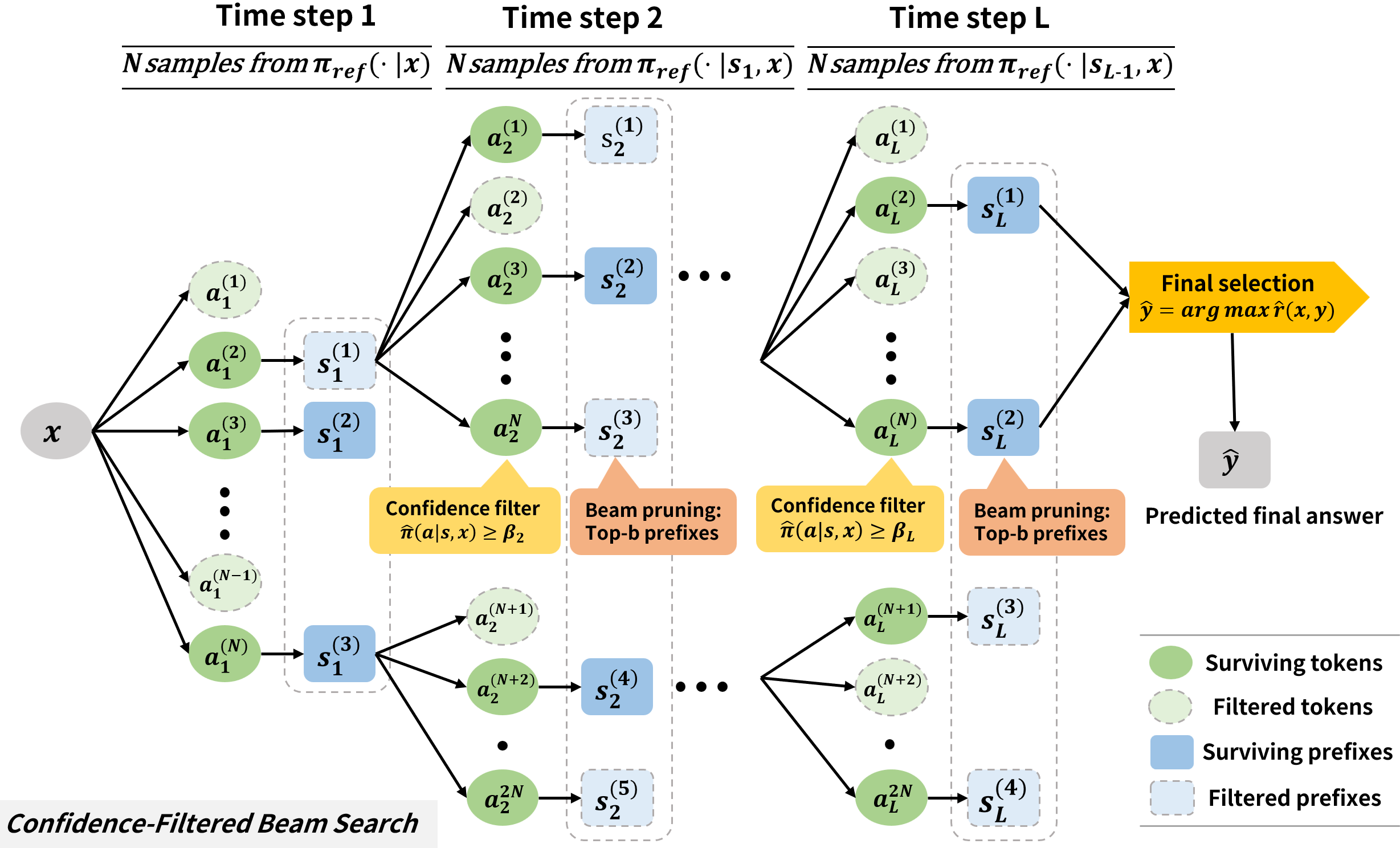}  
    \caption{Confidence-Filtered Beam Search Pipeline}
    \label{fig:beam search pipeline}
\end{wrapfigure}

The quality of the complete response $y$ is evaluated by a \textit{ground-truth} outcome reward $r^\star : \mathcal{X} \times \mathcal{Y} \to [0, 1]$, which is only revealed after the entire response is generated. Such sparse outcome rewards are standard in LLM settings, e.g., in reinforcement learning with verifiable feedback (RLVR) \citep{chen2025outcome, he2025response}. In practice, $r^\star$ is typically unavailable, and the algorithm designer instead relies on an approximate reward model $\hat{r} : \mathcal{X} \times \mathcal{Y} \to [0, 1]$ that assigns a surrogate score $\hat{r}(x, y)$ to assess the quality of response $y$ for any prompt $x$.

The reference policy $\pi_{\text{ref}}$ further admits a standard autoregressive factorization by steps. Let $s_{t-1}:=(a_1,\ldots,a_{t-1})$ denote the token prefix generated up to step $t-1$
(with $s_0=\emptyset$). Then we have
\begin{equation*}
\pi_{\mathrm{ref}}(y\mid x)=\prod_{t=1}^L \pi_{\mathrm{ref}}(a_t\mid x, s_{t-1}),
\end{equation*}
Here, on the right-hand side, with a slight abuse of notation, we use $\pi_{\text{ref}}$ to also denote a token-level reference policy
$\mathcal{X}\times\mathcal{S}\to\Delta(\mathcal{A})$, where $\mathcal{S}$ is the set of all token prefixes from steps 1 to $L$. Since beam search algorithms aggregate and process information at the token level, we focus throughout the remainder of the paper on the token-level policy $\pi_{\text{ref}}: \mathcal{X}\times\mathcal{S}\to\Delta(\mathcal{A})$. We assume black-box sampling access to the reference policy, without access to full logits or exact next-token probabilities, so token probabilities must be estimated from sampled continuations.


\paragraph{Goal and Evaluation.} Given a prompt $x$, the goal of parallel test-time computing is to design an algorithm $\texttt{Alg}$ that
interacts with the base policy $\pi_{\mathrm{ref}}(\cdot\mid s,x)$ through sampling and aggregation and outputs
a response $y\sim \pi_{\text{Alg}}(\cdot\mid x)$, aiming to maximize the expected ground-truth reward
$\mathbb{E}_{y\sim\text{Alg}(\cdot|x)}\big[r^\star(x,y)\big]$.

Let $y^\star:=\arg\max_{y\in\mathcal{Y}} r^\star(x,y)$ denote the optimal response that maximizes the \emph{true} outcome reward. We define the regret on instance $x$ as
\begin{equation}
\mathrm{Reg}(x)
= r^\star(x,y^\star)
-\mathbb{E}_{y\sim\pi_{\text{Alg}}(\cdot\mid x)}\!\left[r^\star(x,y)\right].
\end{equation}

\paragraph{Computational Complexity.} We measure the test-time computational cost of an algorithm $\texttt{Alg}$ by the number of \emph{token-level policy queries} it makes to the base model. Concretely, each time $\texttt{Alg}$ queries $\pi_{\mathrm{ref}}(\cdot\mid s,x)$ to sample from the next-token distribution at a prefix $s$, we count one query.
We use $N$ to denote the (per-prefix) sample complexity, i.e., the number of such queries issued at each prefix $s$,
and let $B(x)$ denote the total number of such queries on instance $x$. We refer to $B(x)$ as the total (query) budget of $\texttt{Alg}$.

\paragraph{Coverage Coefficient.} To quantify the ability of the base policy $\pi_{\mathrm{ref}}$ to generate the optimal response, 
following~\citet{huang2025best}, we introduce coverage coefficients that measure how difficult it is for the base policy $\pi_{\mathrm{ref}}$ to sample the optimal trajectory.
\begin{definition}[Token- and sequence-level coverage]\label{def:coverage}
Fix an instance $x$ and let $y^\star(x)=(a_1^\star,\dots,a_L^\star)$ be the unique optimal response, with optimal prefixes $s_{t-1}^\star=a^\star_{1:t-1}$.
Let $\pi^\star$ denote the oracle policy that deterministically follows the optimal token at each step, i.e.,
$\pi^\star(a_t^\star\mid s_{t-1}^\star,x)=1$ for all $t\in[L]$.
The token-level coverage coefficient is defined as
\begin{equation}
    C_t^\star(x)
    :=\frac{\pi^\star(a_t^\star\mid s_{t-1}^\star,x)}{\pi_{\mathrm{ref}}(a_t^\star\mid s_{t-1}^\star,x)}
    =\frac{1}{\pi_{\mathrm{ref}}(a_t^\star\mid s_{t-1}^\star,x)}.
\end{equation}
We further define $C^\star_{\max}(x):=\max_{t\in[L]} C_t^\star(x)$ and 
$\overline{C}(x):=\prod_{t=1}^{L} C_t^\star(x)$ as the sequence-level coverage coefficient.
\end{definition}

The quantity $C_t^\star(x)$ captures the inverse probability of sampling the optimal token at depth $t$ under $\pi_{\mathrm{ref}}$, while $\overline{C}(x)$ reflects the compounding difficulty along an autoregressive trajectory.
This notion is consistent with the $L_1$ coverage coefficient studied in \citet{huang2025best,di2025best}.



For notational convenience, we omit the dependence on the instance $x$ when no ambiguity arises. 

\section{Algorithm}\label{sec:algorithm}
\begin{wrapfigure}{r}{0.51\textwidth}
\vspace{-1.2em}
\begin{minipage}{\linewidth}
\small
\renewcommand{\baselinestretch}{0.9}\selectfont
\hrule height 0.8pt
\vspace{3pt}
\refstepcounter{algorithm}
\noindent\textbf{Algorithm~\thealgorithm}\ \ 
{\bf Vanilla-Beam} vs {\bf \textcolor{blue}{CF-Beam}}: 
CF-Beam adds highlighted lines {\bf \textcolor{blue}{(Line 14-15)}}
\label{alg:te-beam}
\vspace{3pt}
\hrule
\vspace{3pt}
\begin{algorithmic}[1]
\STATE \textbf{Input:} Base policy $\pi_{\mathrm{ref}}$, beam width $b \ge 1$,
       depth $L$, sample sizes $N$, threshold $\{\beta_t\}_{t=1}^{L}$,
       reward estimator $\hat r$, and prompt $x \in \mathcal{X}$.
\STATE $\mathcal{B}_0 \gets \{\emptyset\}$ \COMMENT{initial beam}
\STATE $Q(s_0) \gets 0$  \COMMENT{initial score over trajectory $s_0=\emptyset$}
\FOR{$t = 1$ \textbf{to} $L$}
  \STATE $\mathcal{D}_t \gets \emptyset$ \COMMENT{candidate prefixes at depth $t$}
  \FORALL{$s_{t-1} \in \mathcal{B}_{t-1}$}
    \STATE initialize $N_t(s_{t-1},a_t) \gets 0$ for all $a_t \in \mathcal{A}$
    \FOR{$i = 1$ \textbf{to} $N$}
      \STATE sample $a^{(i)}_t \sim \pi_{\mathrm{ref}}(\cdot \mid s_{t-1},x)$
      \STATE $N_t(s_{t-1},a^{(i)}_t) \gets N_t(s_{t-1},a^{(i)}_t) + 1$
    \ENDFOR
    \FORALL{$a_t \in \mathcal{A}$ \textbf{with} $N_t(s_{t-1},a_t) \ge 1$}
      \STATE $\hat\pi(a_t \mid s_{t-1},x) \gets N_t(s_{t-1},a_t) / N$
      \IF{\textcolor{blue}{$\hat\pi(a_t \mid s_{t-1},x) < \beta_t$}}
        \STATE \textbf{\textcolor{blue}{skip}} \hfill{\texttt{//CF-Beam only}}
      \ELSE
        \STATE $s'_t \gets s_{t-1} \oplus a_t$ \COMMENT{append $a_t$ to $s_{t-1}$}
        \STATE $Q(s'_t) \gets Q(s_{t-1}) \!+\! \log\hat\pi(a_t \mid s_{t-1},x)$
        \STATE $\mathcal{D}_t \gets \mathcal{D}_t \cup \{s_t'\}$
      \ENDIF
    \ENDFOR
  \ENDFOR
  \STATE $\mathcal{B}_t \gets \text{top-}b\{s'_t \in \mathcal{D}_t \text{ ranked by } Q(s'_t)\}$
\ENDFOR
\STATE $\hat y \gets \arg\max_{y \in \mathcal{B}_L} \hat r(x,y)$
\STATE \textbf{Output:} $\hat y$
\end{algorithmic}
\vspace{0pt}
\hrule height 0.6pt
\end{minipage}
\vspace{-2.0em}
\end{wrapfigure}
Beam search is a search-based decoding algorithm for LLM inference, which explores diverse responses and applies a step-level verifier to prune low-quality continuations at each step, aiming to ultimately concentrate on the most promising prefixes and reduce unnecessary test-time computational cost. 

\paragraph{Vanilla Beam Search (Vanilla-Beam).} Vanilla-Beam, as shown in \Cref{alg:te-beam} \citep{xie2023self, yu2025scaling}, generally maintains a beam of $b$ trajectories. At each step, it expands each trajectory by enumerating multiple next-token candidates (Lines 6-11), scores the newly formed trajectories under a chosen metric (Line 18), and prunes these trajectories to the top $b$ candidates, which forms the beam $\mathcal{B}$ for the next step (Line 23). After reaching the end-of-sequence or length limit, it selects the final output by maximizing an external reward model over the top-$b$ estimated log-likelihood candidates (Line 25). 

In particular, the log-likelihood is used as the scoring metric. 
For each time step $t\in[L]$, we draw $N$ fresh i.i.d.\ samples ${a}_{t}^{(1)},\ldots, {a}_{t}^{(N)}\sim \pi_{\text{ref}}(\cdot\mid s_{t-1},x)$, and then estimate the action distribution at step $t$ by the empirical frequencies:
$
\hat{\pi}(a_t|s_{t-1},x)=\frac{1}{N}\sum_{i=1}^{N}\textbf{1}({a}_{t}^{(i)}=a_t).
$
The score of a trajectory, denoted as $Q(s_t)$ is defined as the sum of estimated log-probabilities:
$
Q(s_t)=\sum_{i=1}^t \log\hat{\pi}(a_i|s_{i-1},x).
$
Samples at distinct prefixes use independent randomness. We keep only the top-$b$ trajectories for the next round, i.e.,
$
\mathcal{B}_t=\text{top-}b\left\{s_t\in{\mathcal{D}_t}:Q(s_t)\right\}.
$
If filtering leaves an empty beam, the algorithm returns any fixed response; ties are resolved by a fixed rule.



In practice, retaining only the top-$b$ trajectories can be fragile in Vanilla-Beam. Since the action space 
$\mathcal{A}$ is typically enormous, most tokens lie in the low-probability tail and are rarely sampled. Yet occasional tail samples amplified by finite-sample estimation and hard top-$b$ pruning can introduce spurious candidates and thereby \emph{crowd out} the optimal prefix $s^\star$.
Since beam search is path-dependent, eliminating $s^\star$ at an early step is irreversible and prevents recovery even if subsequent steps are estimated accurately. The following theorem formalizes this phenomenon by constructing a hard instance where Vanilla-Beam fails (i.e., the correct trajectory is discarded with a constant probability), without a sufficiently large sample size. Detailed proof of constructing the hard instance is deferred to Appendix \ref{Appendix:Counter example for not pruning beta}. 
\begin{theorem}
\label{th:vanilla_beam}
 There \textbf{exists an} LLM instance $\pi \in \Pi$ satisfying $50 \leq C_{i}^{\star}(x) := C^{\star}(x) \ll {(|\mathcal{A}|)^{1/4}}$ for all $i \in [L]$, such that if we set $N \leq (C^{\star}(x))^{2}/16$, the probability of eliminating the correct answer $y^\star$ using Vanilla-Beam with $2\leq b\leq N/2$ is at least $0.5$.
 \end{theorem}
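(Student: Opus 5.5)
Our plan is to build a hard instance in which the optimal token $a_1^\star$ has probability $p:=1/C^\star$, while a huge number of tail tokens each carry tiny mass. The point is that with $N\le (C^\star)^2/16$ samples, many distinct tail tokens get sampled exactly once. These tail tokens receive empirical probability $1/N$, the same as the optimal token whenever $a_1^\star$ is sampled only once, and more than it whenever $a_1^\star$ is not sampled at all. With the fixed tie-breaking rule chosen adversarially (the instance may also be built so that ties favor the tail), the top-$b$ pruning then crowds out $s_1^\star=(a_1^\star)$. The argument is carried out at step $t=1$, since elimination there is irreversible. At later steps $t\ge 2$ we keep the same conditional structure, so that $C_t^\star=C^\star$ for all $t$, but we only need the first step to fail.

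\textbf{Construction.} Set $\pi_{\mathrm{ref}}(a_1^\star\mid s_0,x)=1/C^\star$. Spread the remaining mass $1-1/C^\star$ uniformly over a set $T\subset\mathcal{A}$ of $M=|\mathcal{A}|-1$ tail tokens, each of probability $q=(1-1/C^\star)/M$. The assumption $C^\star\ll|\mathcal{A}|^{1/4}$ gives $Nq\lesssim (C^\star)^2/|\mathcal{A}|\ll 1$. Collisions among tail samples are then negligible: by a birthday bound their expected number is at most $N^2q/2\lesssim (C^\star)^4/|\mathcal{A}|\ll1$. The hypothesis $(C^\star)^4\ll|\mathcal{A}|$ is used exactly here.

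\textbf{Counting events.} Let $K\sim\mathrm{Bin}(N,1/C^\star)$ be the count of $a_1^\star$. We have $\mathbb{E}K=N/C^\star\le C^\star/16$.

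Case $N<C^\star$: here $\mathbb{E}K<1$. Note that $N\ge 2b\ge4$, and $P(K=0)=(1-1/C^\star)^N\ge 1-N/C^\star$. This alone may not exceed $1/2$, so we use $P(K\le1)$ instead. If $K\le 1$, then all $N-K\ge N-1\ge b$ tail samples are distinct with high probability. Each has estimate $1/N\ge\hat\pi(a_1^\star)$, so with adversarial tie-breaking $a_1^\star$ is excluded from the top $b$. For $\lambda=N/C^\star<1$, Poisson approximation gives $P(K\le1)\approx e^{-\lambda}(1+\lambda)\ge 2/e>0.7$, which exceeds $0.5$ once we subtract the collision probability.

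Case $N\ge C^\star$: here we need a different mechanism, since $a_1^\star$ typically has count about $N/C^\star\ge1$, which beats singleton tail tokens. Tail tokens alone therefore cannot crowd it out. To handle this, we modify the construction by adding $b$ \emph{medium} tokens, each of probability $2/C^\star$ (so there are $b\le N/2$ of them). The mass check still works because $b\cdot 2/C^\star$ must be at most $1$. This is the delicate point: it needs $b\le C^\star/2$. If $b$ is larger, we instead use $\lfloor C^\star/4\rfloor$ medium tokens and let the singletons fill the rest.

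Each medium token has mean count $2N/C^\star$ versus $N/C^\star$ for $a_1^\star$. The condition $N\le (C^\star)^2/16$ means the standard deviation of $K$, about $\sqrt{N/C^\star}$, is at least comparable to $N/C^\star$ times the factor $\sqrt{C^\star/N}\ge 4/\sqrt{C^\star}$. I expect the main obstacle to be this quantitative step: choosing the medium probabilities and counts so that, for every $N$ in the range, $a_1^\star$ falls below the $b$-th largest empirical frequency with probability at least $1/2$. We would use a Chernoff lower-tail bound for $K$ together with an anti-concentration bound for the medium counts, and use $C^\star\ge50$ to make the constants work.

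If this split proves messy, the fallback is a single-regime instance. In it, the top of the distribution consists of $b$ competitor tokens whose probabilities are slightly above $1/C^\star$ (by a margin of order $1/\sqrt{NC^\star}$, chosen so as not to be detectable with $N$ samples). We would then use a symmetry or exchangeability argument: with $N\lesssim (C^\star)^2$ samples the empirical ranking among these near-tied tokens is close to uniformly random. Hence $a_1^\star$ is ranked outside the top $b$ with probability at least $1-1/(b+1)\ge 2/3$ up to approximation error, and this again yields probability at least $0.5$ of eliminating $y^\star$.
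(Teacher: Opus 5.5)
Your case $N<C^\star$ is essentially sound, though for $K=1$ it needs ties to go against $a_1^\star$. That is a property of the algorithm's fixed tie-breaking rule, not of the instance; the paper avoids ties for small $N$ by using only $K=0$, since $(49/50)^{34}>1/2$.

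\textbf{The depth-$1$ plan cannot work for large $b$.} The real gap is the regime $C^\star\le N\le (C^\star)^2/16$, which is where the quadratic threshold lives. There a depth-$1$ crowd-out is not just hard to quantify; it is impossible for large $b$. The root counts sum to $N$. So if $a_1^\star$ has count $K\ge 1$ and is pruned, at least $b$ other tokens have count $\ge K$, which forces $(b+1)K\le N$. For $b=\lfloor N/2\rfloor$ this means $K\le 1$. Since $\pi_{\mathrm{ref}}(a_1^\star\mid x)=1/C^\star$ is fixed by the theorem, $\Pr(K\le 1)\approx 0.18$ already at $C^\star=50$, $N=156$, and it tends to $0$ as $C^\star$ grows. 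This holds no matter what else you place at depth $1$. Put differently, only about $N/K\approx C^\star$ tokens can match $a_1^\star$'s empirical frequency, while $b$ can be of order $(C^\star)^2$. Your remedy of letting singletons ``fill the rest'' fails because singletons lose to $a_1^\star$ once $K\ge 2$.

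\textbf{Further problems with the proposal.}
\begin{itemize}
\item The medium tokens of mass $2/C^\star$, and the fallback's competitors above $1/C^\star$, make $a_1^\star$ non-modal at $t=1$. This violates Assumption~\ref{assump:prefix}, which the paper's instance is meant to satisfy so that the contrast with CF-Beam is meaningful. Without that assumption, Proposition~\ref{prop:necessity} already gives failure for every $N$.
\item The fallback's exchangeability estimate is inverted. Among $b+1$ near-tied tokens, top-$b$ pruning excludes $a_1^\star$ with probability about $1/(b+1)$, not $1-1/(b+1)$.
\item Your instance changes with $N$ and $b$, whereas the statement asks for one instance valid for all admissible $(N,b)$.
\end{itemize}

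\textbf{The missing idea: make the failure happen at depth $2$.} There the comparison is $1/N$ versus $1/(C^\star)^2$, rather than $1/N$ versus $1/C^\star$.
\begin{itemize}
\item \emph{Construction.} The paper keeps $a_1^\star$, and $a_2^\star$ after $a_1^\star$, at mass $1/C^\star$, with the remaining mass spread uniformly. After any suboptimal first token, it puts mass $1/2$ on each of two tokens.
\item \emph{Suboptimal prefixes look too good.} A suboptimal first token sampled once gets $\hat\pi=1/N$, a gross overestimate of its true mass. Because its continuation is concentrated, both of its children have product score above $1/(4N)$ with probability at least $1-2e^{-N/8}$.
\item \emph{The optimal prefix looks too weak.} The optimal depth-$2$ score is $K_1K_2/N^2$. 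The conditional mean of $K_1K_2$ is at most $(2+N/C^\star)N/C^\star\le N/9$, using $N\le (C^\star)^2/16$ and $C^\star\ge 50$. By Markov, the score is below $1/(4N)$ with probability at least $5/9$.
\item \emph{Crowd-out.} Work on the event that $s_1^\star$ survives depth $1$ (otherwise $y^\star$ is already lost) and that the beam holds $b-1$ distinct singleton competitors. This is where $b\le N/2$ and the no-collision bound from $C^\star\ll|\mathcal{A}|^{1/4}$ enter. Enough of these prefixes succeed that at least $b$ children strictly beat $s_2^\star$, so no tie-breaking assumption is needed.
\item \emph{Assumption~\ref{assump:prefix} still holds}, because those depth-$2$ prefixes have true mass about $1/(2|\mathcal{A}|)\ll 1/(C^\star)^2$.
\end{itemize}

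The concentrated continuation is essential. If suboptimal prefixes were followed by the same spread-out structure as the optimal path, their best children would score only about $1/(NC^\star)$. The effect would then disappear once $N\ge C^\star$.
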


\Cref{th:vanilla_beam} shows that without the sample size $N$ to be at least $N>\Omega((C_{\max}^{\star})^2)$, Vanilla-Beam cannot survive the correct response to the end of the process. 

\paragraph{Confidence-Filtered Beam Search (CF-Beam).} In order to reduce the sample complexity of Vanilla-Beam, inspired by \cite{di2025best}, we introduce a \emph{thresholding} step (Lines 14-15 in \Cref{alg:te-beam}), which filters low-probability samples based on the estimation of their probability of sampling. We call this modified algorithm Confidence-Filtered Beam Search (CF-Beam). To be specific, for each prefix $s_t,\ t\in [L]$, after generating $N$ actions, we keep only those satisfying $\hat{\pi}_t(a_t\mid s_{t-1},x)\geq\beta_t$, and then select the top-$b$ trajectories from such a filtered action set. We will show in \Cref{sec:regret} (\Cref{main theorem} and \Cref{cor:complexity}) that this simple modification yields a sample complexity bound of $\widetilde O(C_{\max}^{\star})$ for fixed horizon, prefix gap, and target accuracy, thereby improving the sample complexity from the quadratic to nearly linear dependence on $C^\star_t(x)$.

\section{Regret Analysis}

\subsection{Technical Assumptions}

In this subsection, we introduce the technical assumptions to establish the performance guarantee for the beam search method.

First, while the performance metric is defined with respect to the true reward $r^\star$, the algorithm typically has access only to an approximate reward model $\hat r$.
We capture the quality of $\hat r$ via a distributional estimation error under $\pi_{\mathrm{ref}}$, which is a commonly adopted assumption in the literature \citep{huang2025best, di2025best}. 
\begin{assumption}[Reward estimation error]\label{assump:reward-err}
For each instance $x \in \mathcal{X}$, the reward model satisfies
\begin{equation}
\mathbb{E}_{y \sim \pi_{\mathrm{ref}}(\cdot \mid x)}
\left[\left(r^\star(x, y)-\hat{r}(x, y)\right)^2\right]
\leq \epsilon_{\mathrm{RM}}^2(x).
\end{equation}
\end{assumption}

\begin{assumption}[Globally optimal response]\label{assump:opt}
For each instance $x$, there exists a unique optimal response
\begin{equation}
y^\star(x)=\arg\max_{y\in\mathcal{Y}} r^\star(x,y),
\end{equation}
with $r^\star(x,y^\star(x))=1$. Here,
$y^\star(x)=(a^\star_1,\dots,a^\star_L)$ captures the entire optimal path. Moreover, the reward model error at the optimum is bounded as
\begin{equation}
\big|r^\star(x,y^\star(x))-\hat{r}(x,y^\star(x))\big|
\le \epsilon_{\mathrm{opt}}(x).
\end{equation}
\end{assumption}

It is obvious that the reward model error is crucial for the success of inference time algorithms, and building an accurate reward model has been extensively studied in the Reinforcement Learning with Human Feedback (RLHF) framework \citep{wang2026reward,sun2025uncertainty,duan2025efficient,yu2025self}. Since our work mainly focuses on sampling strategy of the base model, we directly assume that we have access to a reward model with the estimation errors controlled by $\epsilon_{\text{RM}}^2$ and $\epsilon_{\text{opt}}$.

Furthermore, since the reward signal is only available at the \emph{sequence level}, the algorithm can evaluate a completed response $y$ using $\hat r(x,y)$, but does not observe intermediate-step rewards for partial prefixes.
In such settings, inference-time decisions must rely on signals intrinsic to the base policy, such as its self-likelihood and uncertainty over candidate continuations.
As having been justified by prior analyses \citep{cordero2025certified,feng2025optimal}, we impose the following prefix separability condition.
\begin{assumption}[Prefix competitiveness]\label{assump:prefix}
There exists a common $\kappa\in(0,1]$ such that, for each depth $t\in[L]$ and any prefix
$s=(a_1,\dots,a_t)\in\mathcal{A}^{t}$ with $s\neq s^\star_t=(a^\star_1,\dots,a^\star_t)$, we have
\begin{equation}
 \frac{1}{t}\sum_{i=1}^t\log\left(\frac{\pi_{\mathrm{ref}}(a^\star_i\mid s^\star_{i-1},x)}{\pi_{\mathrm{ref}}(a_i\mid s_{i-1},x)}\right) \ge \kappa,
\end{equation}
where $s_{i-1}=a_{1:i-1}$ and $s^\star_{i-1}=a^\star_{1:i-1}$.
\end{assumption}
Any larger positive gap lower bound also implies the condition with $\kappa=1$, so restricting the chosen lower bound to $(0,1]$ does not restrict the model class.

Such a gap condition is needed whenever intermediate pruning decisions
rely on likelihood-based signals rather than direct reward feedback. 
Related gap-dependent analyses have also appeared in self-consistency and
certification-based decoding settings; see, e.g.,
\citet{cordero2025certified, aeeneh2024new}. The following proposition
shows that, in our beam-search setting, this prefix-competitiveness
condition is not merely technical: without it, likelihood-based pruning
can irreversibly discard the optimal trajectory even with an arbitrarily
large sampling budget.

\begin{proposition}[Necessity of Assumption~\ref{assump:prefix}]
\label{prop:necessity}
Without Assumption~\ref{assump:prefix}, there exists an LLM instance
$(\pi_{\mathrm{ref}}, r^\star)$ on which, for every sample size $N\ge 1$,
beam search with beam width $b=1$ prunes the optimal trajectory $y^\star$
from the final beam with probability at least $3/4$.
\end{proposition}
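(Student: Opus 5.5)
We will prove Proposition~\ref{prop:necessity} by an explicit two-step instance ($L=2$) in which the optimal path begins with a locally unlikely token that the likelihood score penalizes. Take vocabulary $\mathcal{A}=\{u,v\}$ (pad with zero-probability tokens if a larger alphabet is wanted). At step~1 set $\pi_{\mathrm{ref}}(u\mid\emptyset,x)=p$ and $\pi_{\mathrm{ref}}(v\mid\emptyset,x)=1-p$ with $p$ small (say $p=1/8$). At step~2 let $\pi_{\mathrm{ref}}(\cdot\mid u,x)$ put mass $1$ on $u$, and let $\pi_{\mathrm{ref}}(\cdot\mid v,x)$ be arbitrary. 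Define $r^\star(x,(u,u))=1$ and $r^\star(x,y)=0$ otherwise, so $y^\star=(u,u)$ is unique with reward $1$ (Assumption~\ref{assump:opt} holds with $\hat r=r^\star$, $\epsilon_{\mathrm{opt}}=0$). Assumption~\ref{assump:prefix} fails at $t=1$, because the prefix $s=(v)$ satisfies $\log(p/(1-p))<0$. We will analyze both Vanilla-Beam and CF-Beam; for CF-Beam any threshold with $\beta_1\le 1-p$ only shrinks the candidate set in a way that cannot help $u$.

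The core argument is that with $b=1$ the beam after step~1 contains only the top-scored prefix, and $y^\star$ survives only if $\mathcal{B}_1=\{(u)\}$. The algorithm draws $N$ i.i.d. samples at the root, and $Q((u))=\log\hat\pi(u)$, $Q((v))=\log\hat\pi(v)$. Thus $(u)$ is retained only if $\hat\pi(u)\ge\hat\pi(v)$, that is, only if $\mathrm{Bin}(N,p)\ge N/2$ (counting ties as favorable to be safe). If $u$ is never sampled it is not even a candidate. So
\[
\Pr(y^\star\text{ survives})\le \Pr\big(\mathrm{Bin}(N,p)\ge N/2\big).
\]

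It remains to bound this uniformly in $N\ge1$ by $1/4$. For $N=1$ the probability is $p=1/8$. For general $N$, Markov's inequality gives $\Pr(\mathrm{Bin}(N,p)\ge N/2)\le 2p=1/4$. This holds for every $N$, so pruning occurs with probability at least $3/4$. Alternatively, a Chernoff bound gives $\exp(-N\,\mathrm{KL}(1/2\|p))$, which decays in $N$, so a larger budget makes things worse rather than better. Once $(u)$ is pruned, every output begins with $v$ and has reward $0$, so the final beam misses $y^\star$.

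The only delicate point is bookkeeping: tie-breaking, and making sure CF-Beam's filter cannot rescue $u$. The filter removes only low-frequency tokens, so it can only delete $u$ and never $v$ when $\hat\pi(v)\ge\hat\pi(u)$. The Markov bound is uniform and needs no asymptotics, so I expect no real obstacle.
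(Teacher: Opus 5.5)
Your proposal is correct and follows essentially the same route as the paper. The paper uses the same two-token construction with optimal first-token mass $1/8$ versus $7/8$, notes that with $b=1$ the optimal prefix survives only if its empirical count is at least $N/2$, and applies Markov's inequality to bound this by $2p=1/4$ uniformly in $N$. The paper works with horizon $L=1$ and does not discuss the filter, so your padding to $L=2$ and your treatment of CF-Beam are harmless refinements rather than a different argument: since the filter is monotone in empirical frequency, it can never keep $u$ while discarding a $v$ with $\hat\pi(v)\ge\hat\pi(u)$.
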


The proof is deferred to Appendix~\ref{appendix: proof of proposition 1},
and additional discussion is provided in
Appendix~\ref{appendix: dis assump3}.

\subsection{Regret Upper Bound}\label{sec:regret}
 
The regret arises from three factors: (i) the accuracy of the reward model used for the final selection, (ii) the capacity of the base model $\pi_{\mathrm{ref}}$, and (iii) search failures, i.e., events under which the optimal response is eliminated during inference. The first factor is captured by Assumptions~\ref{assump:reward-err} and~\ref{assump:opt}, and the second by the coverage coefficients in Definition~\ref{def:coverage}. To control the third factor, we define good events that separate two failure modes: the optimal action fails to pass confidence filtering, or the optimal prefix passes filtering but is discarded by top-$b$ pruning.
 
\paragraph{Pre-sampled histograms.}
Since the algorithm samples only at prefixes in the current beam, the empirical frequency $\hat\pi(a_t^\star\mid s_{t-1}^\star,x)$ is undefined once an optimal prefix has been discarded. To define it for every $t$, we use an equivalent description of the sampling process. Before the algorithm runs, we independently draw, for every prefix $s$ of depth less than $L$, a histogram of $N$ i.i.d.\ samples from $\pi_{\mathrm{ref}}(\cdot\mid s,x)$; the histogram of $s$ is revealed only when the algorithm expands $s$. Since each prefix is expanded at most once, the output of the algorithm has the same distribution as under fresh sampling, and the construction is used only in the analysis. Under this coupling, the empirical frequencies along the optimal path are defined whether or not the path survives, and histograms at distinct prefixes are mutually independent.
 
\begin{definition}[Good events]\label{def: good event}
We define the following events.
 
\textbf{(i) Filtering event.}
The optimal action passes confidence filtering at every depth:
\[
\mathcal{E}_{\beta}:=\bigl\{\forall t\in[L]:\ \hat\pi(a_t^\star\mid s_{t-1}^\star,x)\ge\beta_t\bigr\}.
\]
 
\textbf{(ii) Beam-survival event.}
Let $\mathrm{rank}_t(s)$ denote the rank of $s\in\mathcal D_t$ when $\mathcal D_t$ is sorted by $Q(\cdot)$ in descending order, with ties broken by the rule of the algorithm. Define
\[
\mathcal{E}_b:=\bigl\{\forall t\in[L]:\ s_t^\star\in\mathcal D_t
\ \Rightarrow\ \mathrm{rank}_t(s_t^\star)\le b\bigr\}.
\]
The overall good event is $\mathcal E:=\mathcal E_\beta\cap\mathcal E_b$.
\end{definition}
 
By induction over the depths, $\mathcal E$ holds if and only if $y^\star\in\mathcal B_L$: on $\mathcal E_\beta$, every optimal prefix in the beam generates its optimal extension as a candidate, and on $\mathcal E_b$, top-$b$ pruning retains this extension. We refer to $\mathbb P(\mathcal E^c)$ as the \emph{search error}.  

Now we present our main result on the regret of the CF-Beam algorithm.
 
\begin{theorem}[Regret of CF-Beam]\label{main theorem}
Suppose $L\ge2$, $N\ge \frac{48C_{\max}^{\star}}{\kappa^2}\max\{2,\log(2C_{\max}^{\star})\}$, and $\beta_t=(1-1/L)/C_t^\star(x)$ for all $t\in[L]$. Under Assumptions~\ref{assump:reward-err}, \ref{assump:opt}, and~\ref{assump:prefix}, the regret of CF-Beam in \Cref{alg:te-beam} satisfies
\begin{equation}
\mathrm{Reg}(x)\le\mathbb P(\mathcal E^c)
+\underbrace{\epsilon_{\mathrm{opt}}(x)+2\sqrt{\overline C(x)\epsilon_{\mathrm{RM}}^2(x)}}_{\text{reward-model error}},
\label{eq:main-regret}
\end{equation}
where the search error satisfies
\begin{equation}
\mathbb P(\mathcal E^c)\le
\underbrace{L\exp\!\left(-\frac{N}{2C_{\max}^{\star}L^2}\right)}_{\text{filtering error}}
+\underbrace{\frac{2C_{\max}^{\star}}{b}\exp\!\left(-\frac{\kappa^2N}{48C_{\max}^{\star}}\right)}_{\text{pruning error}}.
\label{eq:main-search-error}
\end{equation}
\end{theorem}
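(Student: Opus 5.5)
The plan is to prove the bound in two stages: a regret decomposition conditional on the good event $\mathcal E$, and then separate probability bounds for the filtering failure $\mathcal E_\beta^c$ and the pruning failure $\mathcal E_\beta\cap\mathcal E_b^c$.

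\emph{Regret decomposition.} On $\mathcal E$ we have $y^\star\in\mathcal B_L$, so the output satisfies $\hat r(x,\hat y)\ge \hat r(x,y^\star)\ge 1-\epsilon_{\mathrm{opt}}$. Hence
\[
r^\star(x,\hat y)\ge 1-\epsilon_{\mathrm{opt}}-|\hat r(x,\hat y)-r^\star(x,\hat y)|.
\]
On $\mathcal E^c$ we bound the regret trivially by $1$. It therefore remains to control $\mathbb E[|\hat r-r^\star|(\hat y)]$. Write this as $\sum_y \pi_{\mathrm{Alg}}(y)|\hat r-r^\star|(y)$ and apply Cauchy--Schwarz against $\pi_{\mathrm{ref}}$:
\[
\mathbb E[|\hat r-r^\star|(\hat y)]\le \sqrt{\textstyle\sum_y \pi_{\mathrm{Alg}}(y)^2/\pi_{\mathrm{ref}}(y)}\;\epsilon_{\mathrm{RM}}.
\]
The density ratio should be bounded using the filtering. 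Every token kept at depth $t$ has $\hat\pi\ge\beta_t$. I would bound $\pi_{\mathrm{Alg}}(y)/\pi_{\mathrm{ref}}(y)$ by a product of per-step ratios that are comparable to $C_t^\star$, using the fact that under prefix competitiveness surviving tokens cannot be much less likely than $a_t^\star$. This would give $\overline C(x)$, up to the factor $2$, which absorbs the $(1-1/L)^{-L}$ slack. This density-ratio step is the delicate one. An alternative, following the Best-of-$N$ analysis of \citet{huang2025best}, is to bound the ratio on the support only by $\overline C$ via $\pi_{\mathrm{Alg}}\le 1$ and $\pi_{\mathrm{ref}}(y)\ge\prod_t \beta_t$ up to estimation error, again splitting by events.

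\emph{Filtering error.} Under the pre-sampled histograms, $N\hat\pi(a_t^\star\mid s^\star_{t-1})\sim\mathrm{Bin}(N,1/C_t^\star)$. Since $\beta_t=(1-1/L)/C_t^\star$, the multiplicative Chernoff bound with deviation $\delta=1/L$ gives failure probability at most $\exp(-N/(2C_t^\star L^2))$. A union bound over $t$ then yields the first term.

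\emph{Pruning error.} This is the main obstacle. On $\mathcal E_\beta$ the optimal extension is a candidate. It is dropped at depth $t$ only if at least $b$ competing candidates $s\in\mathcal D_t$ have $Q(s)\ge Q(s_t^\star)$. By Markov's inequality,
\[
\mathbb P(\text{drop at }t)\le \tfrac1b\,\mathbb E\bigl[\#\{s\in\mathcal D_t: Q(s)\ge Q(s^\star_t)\}\bigr].
\]
For each competing prefix, Assumption~\ref{assump:prefix} gives a true log-likelihood gap of at least $\kappa t$. I must show that the empirical sums $\sum_i\log\hat\pi$ preserve a gap of this order except with probability $\exp(-\kappa^2N/(48C^\star_{\max}))$ times a factor summable over competitors and depths. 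Here the filtering helps in two ways. Every competitor token has $\hat\pi\ge\beta_t\gtrsim 1/C^\star_{\max}$, so there are at most about $C^\star_{\max}$ candidates per parent, and their log-estimates are bounded. For tokens with true probability $p\gtrsim 1/C^\star_{\max}$, relative Bernstein/Chernoff bounds give $|\log\hat\pi-\log p|\le\kappa/4$ with probability $1-2\exp(-c\kappa^2 Np)$. Tokens with true probability far below $\beta_t$ pass the filter only with exponentially small probability. Summing over the at most $C^\star_{\max}$ candidates per step, and using the geometric decay in $t$ that the condition $N\ge 48C^\star_{\max}\max\{2,\log 2C^\star_{\max}\}/\kappa^2$ provides, I would aim for a total count bound of $2C^\star_{\max}\exp(-\kappa^2N/(48C^\star_{\max}))$, which after dividing by $b$ is the pruning term. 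The hard part is handling the dependence between competitors that share ancestors with $s^\star$ and the depth-averaged gap. I would first try conditioning on the shared prefix histograms and applying the concentration only to the divergent suffix.
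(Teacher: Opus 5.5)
Your filtering bound and your top-level decomposition agree with the paper. The decomposition is: trivial bound on $\mathcal E^c$, $\hat r(x,\hat y)\ge\hat r(x,y^\star)$ on $\mathcal E$, and Markov's inequality on the number of competitors scoring at least $Q(s_t^\star)$. The two steps you flag as delicate are not closed, however, and the mechanisms you propose for them would fail.

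\textbf{Density ratio.} Assumption~\ref{assump:prefix} only \emph{upper}-bounds competitor likelihoods. A token of arbitrarily small true probability still passes $\hat\pi\ge\beta_t$ with positive probability. So there is no deterministic per-step ratio bound, and $\pi_{\mathrm{ref}}(y\mid x)\ge\prod_t\beta_t$ fails on the output support. Splitting by events would leave an additive remainder that the theorem does not contain. The missing idea is a per-edge Markov bound. For a fixed $y$, the pre-sampled histograms along its path are independent and each empirical frequency is unbiased, so $\Pr(\hat y=y,\mathcal E)\le\Pr(\mathcal F(y))=\prod_t\Pr(\hat\pi(a_t\mid s_{t-1},x)\ge\beta_t)\le\pi_{\mathrm{ref}}(y\mid x)\prod_t\beta_t^{-1}$. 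This equals $(1-1/L)^{-L}\,\overline C(x)\,\pi_{\mathrm{ref}}(y\mid x)\le 4\overline C(x)\pi_{\mathrm{ref}}(y\mid x)$. Apply Cauchy--Schwarz to this sub-probability on $\mathcal E$, not to the full $\pi_{\mathrm{Alg}}$. Off $\mathcal E$ the beam can be empty, and the fixed fallback response has no density-ratio control.

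\textbf{Pruning.} You cannot apply per-candidate concentration to the members of $\mathcal D_t$. Top-$b$ selection uses the same histograms that enter their scores, so the ancestors of surviving competitors are biased toward high empirical frequencies. Your conditioning on shared prefixes handles dependence on $s^\star$, but not this selection effect. The paper instead counts over the whole depth-$t$ tree of filtered prefixes, which is a non-adaptive superset of the candidates. There your count of $O(C_{\max}^\star)$ children per parent is unavailable, so summability must come from probability mass. The key lemma is the filtered exponential moment, with $\alpha=1-1/L$: $\mathbb E[e^{r(Q(s_t)-Q(s_t^\star))}\mathbf{1}_{\mathcal F(s_t)\cap\mathcal F(s_t^\star)}]\le(\pi_{\mathrm{ref}}(s_t\mid x)/\pi_{\mathrm{ref}}(s_t^\star\mid x))^r\exp(3tC_{\max}^\star r^2/(N\alpha))$. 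It follows from truncated positive and inverse binomial moments of order $2r$ plus Cauchy--Schwarz, which also decouples the shared optimal score. For $r\ge1$, the gap reduces the ratio to $e^{-(r-1)t\kappa}\pi_{\mathrm{ref}}(s_t\mid x)/\pi_{\mathrm{ref}}(s_t^\star\mid x)$. Total mass at most one and $\pi_{\mathrm{ref}}(s_t^\star\mid x)^{-1}\le(C_{\max}^\star)^t$ then bound the tree sum. Choosing $r=\lfloor N\alpha\kappa/(6C_{\max}^\star)\rfloor$ gives $[C_{\max}^\star e^{-\kappa^2N/(48C_{\max}^\star)}]^t$. The sample-size condition makes this base at most $1/2$, so the sum over depths is a geometric series bounded by $2C_{\max}^\star e^{-\kappa^2N/(48C_{\max}^\star)}$. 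Your edgewise route (accuracy $\kappa/4$ per edge, then unions over edges, candidates, and depths) has no such $t$-th power and would pick up extra factors of $L$. Moreover, failure bounds of the form $2e^{-c\kappa^2Np}$ are not proportional to $\pi_{\mathrm{ref}}(s_t\mid x)$, so they do not sum over the full tree independently of $|\mathcal A|$.
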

 Theorem~\ref{main theorem} separates the regret into a search error and a reward-model error. The search error accounts for filtering out an optimal action or pruning its prefix from the beam. Both probabilities decay exponentially with the per-prefix sample size $N$; their rates depend on the token-level coverage $C_{\max}^\star$, the horizon $L$ for filtering, and the prefix gap $\kappa$ for pruning. A wider beam reduces the pruning term, since discarding the optimal prefix requires at least $b$ filtered competitors to match or exceed its score. Once the optimal response survives, the remaining regret comes from the final reward-based selection. Its bound involves the sequence-level coefficient $\overline C(x)$, which arises when transferring the reward-model error guarantee under $\pi_{\mathrm{ref}}$ to the selected output; this term does not decrease with $N$ or $b$. The threshold $\beta_t$ depends on the unknown $C_t^\star(x)$, so this result gives an oracle guarantee rather than a directly implementable rule. For practical use, Appendix~\ref{appendix: details of numerical experiments} describes an empirical variant of CF-Beam that replaces the unknown optimal-token probability $1/C_t^\star(x)$ with the largest empirical next-token frequency at the current prefix, and sets the threshold to $\beta_t(s)=\gamma\max_{a\in\mathcal A}\hat\pi(a\mid s,x)$ for a constant $\gamma\in(0,1)$. This proxy is reasonable when the optimal continuation is locally competitive, in which case the optimal-token probability is of the same order as the largest next-token probability.

\paragraph{Proof sketch.}
We first bound the probability that the optimal response is lost during search. At each depth, the empirical count of the optimal action is binomial; a Chernoff lower-tail bound and a union bound over depths control the filtering error. The threshold lies a relative $1/L$ below the mean frequency, which yields the $L^2$ factor in this bound. For pruning, we count the filtered competitors whose empirical scores match or exceed that of the optimal prefix. Truncated binomial moment bounds control their score differences even when low-probability actions happen to pass the filter. Assumption~\ref{assump:prefix} supplies the true likelihood gap, and summing over competitors uses their total probability mass rather than their number. If the optimal prefix passes filtering but is pruned, at least $b$ competitors must match or exceed its score; Markov's inequality then controls the pruning error.

On $\mathcal E$, the optimal response reaches the final beam, so the remaining regret comes from reward-based selection. For each response $y$, the joint probability $\Pr(\hat y=y,\mathcal E)$ is at most $\pi_{\mathrm{ref}}(y\mid x)\prod_{t=1}^L\beta_t^{-1}\le 4\overline C(x)\pi_{\mathrm{ref}}(y\mid x)$. This comparison and Cauchy--Schwarz transfer the reward-model error bound under $\pi_{\mathrm{ref}}$ to the selected response, while the pointwise error at $y^\star$ contributes $\epsilon_{\mathrm{opt}}(x)$. We provide the comprehensive proof in Appendix~\ref{Appendix: proof of main theorem}.
 
\begin{corollary}[Sample complexity]\label{cor:complexity}
Under the assumptions and threshold choice of Theorem~\ref{main theorem}, for $\delta\in(0,1)$, it suffices to take
\begin{equation}
N\gtrsim C_{\max}^{\star}\max\left\{
L^2\log\frac{2L}{\delta},\quad
\frac{1}{\kappa^2}\left(\log(eC_{\max}^{\star})+
\left[\log\frac{1}{b\delta}\right]_+\right)
\right\}
\label{eq:sample_complexity_bound}
\end{equation}
to achieve regret at most $\delta+\epsilon_{\mathrm{opt}}(x)+2\sqrt{\overline C(x)\epsilon_{\mathrm{RM}}^2(x)}$, where $[u]_+:=\max\{u,0\}$ and the implicit constant is universal.
\end{corollary}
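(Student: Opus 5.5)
The plan is to invoke Theorem~\ref{main theorem} directly and choose $N$ so that each of the two terms in the search-error bound \eqref{eq:main-search-error} is at most $\delta/2$. Then $\mathbb P(\mathcal E^c)\le\delta$, and \eqref{eq:main-regret} gives the claimed regret. Before doing so, I check that the stated $N$ also satisfies the hypothesis of Theorem~\ref{main theorem}, namely $N\ge \frac{48C_{\max}^\star}{\kappa^2}\max\{2,\log(2C_{\max}^\star)\}$. Because $\log(eC_{\max}^\star)\ge 1$ and $\log(eC_{\max}^\star)\ge\log(2C_{\max}^\star)$, we have $\max\{2,\log(2C^\star_{\max})\}\le 2\log(eC^\star_{\max})$. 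So the second entry of the maximum in \eqref{eq:sample_complexity_bound}, taken with a large enough universal constant, already meets this hypothesis.

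For the filtering term, the requirement $L\exp(-N/(2C_{\max}^\star L^2))\le\delta/2$ is equivalent to
\[
N\ge 2C_{\max}^\star L^2\log\tfrac{2L}{\delta}.
\]
This is exactly the first entry of the maximum, with constant $2$.

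For the pruning term, the requirement $\frac{2C_{\max}^\star}{b}\exp(-\kappa^2N/(48C_{\max}^\star))\le\delta/2$ is equivalent to
\[
N\ge\frac{48C_{\max}^\star}{\kappa^2}\log\frac{4C_{\max}^\star}{b\delta}.
\]
I then split the logarithm as $\log\frac{4C^\star_{\max}}{b\delta}=\log(4C^\star_{\max})+\log\frac{1}{b\delta}$. Next I bound $\log\frac{1}{b\delta}\le[\log\frac{1}{b\delta}]_+$, which also covers the case $b\delta>1$, where this term is negative. Finally, $\log(4C^\star_{\max})\le 2\log(eC^\star_{\max})$ holds because $C^\star_{\max}\ge 1$. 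Together these show the second entry of the maximum, times a universal constant (for instance $96$), suffices.

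Taking the implicit constant to be the largest of the constants from these three checks completes the argument. The only step that needs any care is the logarithm bookkeeping: absorbing $\log 4$ and $\log 2$ into $\log(eC^\star_{\max})$, and using the positive part so that large $b$ does not create a negative requirement. Everything else is a direct inversion of the exponential bounds.
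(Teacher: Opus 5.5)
Your proposal is correct and follows the paper's proof: both invert the two exponentials so that each term of \eqref{eq:main-search-error} is at most $\delta/2$, check the sample-size hypothesis of Theorem~\ref{main theorem}, and absorb $\log 4$ into $\log(eC_{\max}^\star)$ using $C_{\max}^\star\ge 1$ and the positive part. Your explicit bound $\max\{2,\log(2C_{\max}^\star)\}\le 2\log(eC_{\max}^\star)$ simply spells out a step the paper leaves to the ``sufficiently large universal constant.''
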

 
\Cref{cor:complexity} shows that the sufficient per-prefix sample size grows linearly in $C_{\max}^\star$ and quadratically in $L$, up to logarithmic factors, and only logarithmically in $1/\delta$. Since the total query budget is at most $LbN$, the total budget is $\widetilde O(bL^3C_{\max}^\star)$ for fixed $\kappa$ and $\delta$. The beam width enters only through $\log(1/(b\delta))$: a wider beam relaxes the requirement from the pruning error until the filtering error or the baseline condition dominates.
 
\paragraph{Benefit of confidence filtering.}
For fixed $L$, $\kappa$, and $\delta$, \Cref{cor:complexity} shows that $N=\widetilde O(C_{\max}^\star)$ samples per prefix suffice for the search error to be at most $\delta$. In contrast, \Cref{th:vanilla_beam} exhibits an instance satisfying Assumption~\ref{assump:prefix} on which Vanilla-Beam with $2\le b\le N/2$ and $N\le (C^\star)^2/16$ discards $y^\star$ with probability at least $1/2$. Confidence filtering therefore reduces the per-prefix sample requirement from quadratic to nearly linear in the token-level coverage.
 
 

\subsection{Comparison with Other Test-time Methods}\label{sec:comparison}

In this subsection, we compare the search-induced component of CF-Beam with other commonly adopted test-time algorithms for which regret guarantees have recently been established.

First, \citet{di2025best} proposed the \textit{Best-of-Majority} algorithm, which first applies majority voting (self-consistency) to discard low-frequency candidates and then selects the final response from the remaining set using an external reward model. They established a regret bound given by
$\textstyle
\mathrm{Reg}(x)\lesssim \epsilon_{\mathrm{opt}}(x)
+ \sqrt{\overline{C}(x)\,\epsilon_{\mathrm{RM}}^{2}(x)}
+ \overline{C}(x)\exp\!\left(-{\widetilde{N}}/{\overline{C}(x)}\right),$
where $\overline{C}(x):=\prod_{t=1}^{L} C_t^\star(x)$ captures the coverage along the entire trajectory in our path-based notation, and $\widetilde{N}$ refers to the number of complete responses sampled. To make the search-induced term at most $\delta$, the sample size $\widetilde{N}$ needs to satisfy
$\widetilde{N} \gtrsim \overline{C}(x)\log\!\left({\overline{C}(x)}/{\delta}\right)$, which leads to the total computation budget $B=L\widetilde{N}$ on the order of
$B \gtrsim L\,\overline{C}(x)\log\!\left({\overline{C}(x)}/{\delta}\right)$.
Whenever $C^\star_{\text{min}}:=\min_{1\leq t\leq L} C_t^\star(x) >1$, this budget can scale {\bf exponentially} with the horizon length $L$.
This exponential scaling of the coverage-dependent term also appears in other test-time algorithms such as \textit{Best-of-$N$} \cite{huang2025best}, where the regret bound takes the form of $O\left({\overline{C}(x)\cdot\log(1/\varepsilon_{\mathrm{RM}}(x))}/{\widetilde{N}}+\sqrt{\widetilde{N}\cdot\varepsilon_{\mathrm{RM}}^2(x)}\right).$ Similarly, the dependence on $\overline{C}(x)$ can lead to exponentially large sample complexity as $L$ increases.

In contrast, our CF-Beam method controls the rare-failure term with a budget
$B = L b N$ that satisfies Corollary \ref{cor:complexity}, which scales only {\bf polynomially} in $L$
(up to the beam-width factor $b$).
Hence, beam-pruning combined with $\beta$-filtering effectively replaces the potentially exponential dependence
on $\overline{C}(x)$ with a polynomial dependence on $L$ in the search-induced component, yielding substantially reduced computation-cost scaling in long-horizon settings when the reward-model error is controlled.


The above comparison suggests that CF-Beam can be more favorable on hard instances. Compared to sequence-level parallel sampling methods, a key advantage of CF-Beam is that its search-induced guarantee scales with $L C_{\max}^{\star}$ rather than the product $\overline{C}(x)=\prod_{t=1}^L C_t^\star(x)$. 
When all $C_t^\star(x)$ are close to $1$ and $L$ is moderate, sequence-level methods such as Best-of-$N$ or Best-of-Majority may be more sample-efficient. 
In contrast, when some $C_t^\star(x)$ are large, $\overline{C}(x)$ grows rapidly, and CF-Beam becomes more favorable due to its dependence on $L C_{\max}^{\star}$ in the search-induced term.
This qualitative trend is consistent with the empirical scaling study of \citet{snell2024scaling}.

\subsection{Regret Lower Bound}
In this subsection, we establish the lower bound for the beam search algorithm. 
\begin{theorem}\label{regret lower bound}
For the CF-Beam algorithm, when $b\geq 2$, there exist hard instances with a sufficiently small constant $\kappa^{\prime}>0$ and $C^\star_t(x)={|\mathcal{A}|}/({1+|\mathcal{A}|\kappa^{\prime}})$ for all $t\in[L]$, such that the regret can be lower bounded as $\Omega\left(\sqrt{{\overline{C}(x)\epsilon_{\text{RM}}^{2}(x)}}\right).$
\end{theorem}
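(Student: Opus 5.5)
We construct a hard instance on which the search itself is nearly uniform, so that beam search keeps a large set of indistinguishable candidates, and then place a reward-model error on a response that CF-Beam is forced to return with probability comparable to its mass under $\pi_{\mathrm{ref}}$ scaled by $\overline C(x)$. The form $C^\star_t=|\mathcal A|/(1+|\mathcal A|\kappa')$ suggests the following token-level policy. At every prefix on the optimal path, the optimal token gets probability $1/C^\star_t=\kappa'+1/|\mathcal A|$ (or a small perturbation of this). The remaining mass is spread evenly over the other $|\mathcal A|-1$ tokens, each getting roughly $(1-\kappa'-1/|\mathcal A|)/(|\mathcal A|-1)$. Off-path prefixes may use the same template. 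With $\kappa'$ small, the optimal token is only slightly preferred, and a modest likelihood gap in the sense of Assumption~\ref{assump:prefix} holds; this is what makes $\kappa'$ ``sufficiently small'' yet positive.

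\textbf{Step 1: a competitor survives with non-negligible probability.} We single out a competitor response $\tilde y$ that differs from $y^\star$ only in the last token, or along a designed subpath. We then show that, with constant probability under $b\ge2$, both $y^\star$ and $\tilde y$ land in $\mathcal B_L$. More generally, we want a lower bound $\Pr(\tilde y\in\mathcal B_L)\gtrsim \overline C(x)\,\pi_{\mathrm{ref}}(\tilde y\mid x)$, which is the matching direction of the upper-bound comparison $\Pr(\hat y=y,\mathcal E)\le 4\overline C\pi_{\mathrm{ref}}(y\mid x)$ from the proof sketch of Theorem~\ref{main theorem}. The simplest route is to let $\tilde y$ share the prefix $s^\star_{L-1}$ and let its last token be a near-tie with $a^\star_L$. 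Then $\tilde y$ enters the final beam whenever its empirical frequency passes $\beta_L$, which happens with constant probability. Since $\pi_{\mathrm{ref}}(\tilde y)\approx\pi_{\mathrm{ref}}(y^\star)=1/\overline C$, the scaling $\Pr(\tilde y\in\mathcal B_L)=\Theta(1)=\Theta(\overline C\,\pi_{\mathrm{ref}}(\tilde y))$ follows.

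\textbf{Step 2: choose the rewards.} We set $r^\star(y^\star)=1$, $\hat r(y^\star)=1$ (so $\epsilon_{\mathrm{opt}}=0$), $r^\star(\tilde y)=0$, and $\hat r(\tilde y)=1$ with a tie-break that favors $\tilde y$. Alternatively, set $\hat r(\tilde y)$ slightly above $1$ after rescaling the range. All other responses get $\hat r=r^\star$. Assumption~\ref{assump:reward-err} then gives $\epsilon_{\mathrm{RM}}^2=\pi_{\mathrm{ref}}(\tilde y\mid x)\approx 1/\overline C$, hence $\sqrt{\overline C\,\epsilon^2_{\mathrm{RM}}}=\Theta(1)$. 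To obtain a genuine scaling family, we instead let $\hat r(\tilde y)-r^\star(\tilde y)=\eta$ with $r^\star(\tilde y)=1-\eta$, so that $\hat r(\tilde y)=1=\hat r(y^\star)$ and ties go to $\tilde y$. Then $\epsilon_{\mathrm{RM}}^2=\eta^2/\overline C$, and the regret is at least $\eta\Pr(\tilde y\in\mathcal B_L)\gtrsim\eta=\sqrt{\overline C\,\epsilon_{\mathrm{RM}}^2}$.

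\textbf{Step 3: verify the rest and identify the obstacle.} We check that the instance satisfies Assumptions~\ref{assump:opt} and \ref{assump:prefix}: $y^\star$ is the unique maximizer once $\eta>0$, and the averaged log-ratio gap is positive for small $\kappa'$. The main obstacle is Assumption~\ref{assump:prefix} for the competitor $\tilde y$. At $t=L$ its gap is $\tfrac1L\log(\pi(a^\star_L)/\pi(\tilde a_L))$, which must be at least $\kappa$, while we still need $\tilde a_L$ to pass the filter $\beta_L=(1-1/L)/C^\star_L$ and rank within the top $b\ge2$ with constant probability. We resolve this by giving $\tilde a_L$ probability a constant factor below $1/C^\star_L$ yet above $\beta_L$. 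This yields a small gap of order $\kappa'/L$, which is why the theorem only asserts the result for a sufficiently small gap constant. Because the other tokens have tiny mass $\approx1/|\mathcal A|$, they rarely reach the threshold, so $\tilde y$'s rank is at most $2\le b$. A Chernoff argument as in Theorem~\ref{main theorem} gives the constant-probability survival of $\tilde y$.
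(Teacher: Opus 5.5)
Your single-competitor design does not go through as written. The step that fails is the rank claim in Step 3, and it fails in the very regime you set up.

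You take the optimal token to be ``only slightly preferred'' for small $\kappa'$. With your template that means $|\mathcal A|\kappa'\ll 1$, which is also the paper's regime ($\kappa'\le 1/|\mathcal A|$). Every other token at the optimal prefix then has probability $1/|\mathcal A|-\kappa'/(|\mathcal A|-1)$. This is not small compared with $1/C_t^\star=1/|\mathcal A|+\kappa'$. Once $|\mathcal A|\kappa'\lesssim 1/L$, its mean frequency exceeds $\beta_t=(1-1/L)/C_t^\star$. So $\Theta(|\mathcal A|)$ siblings clear the filter at every depth, and their probabilities differ from the optimal one only by a relative $O(|\mathcal A|\kappa')$.

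At depth $L$, suppose $\tilde a_L$ receives mass $c/C_L^\star$ with $c\in(1-1/L,1)$. A generic sibling is then more likely than $\tilde a_L$ unless $1-c\lesssim|\mathcal A|\kappa'$. Take $1-c\gg|\mathcal A|\kappa'$ and $b\le|\mathcal A|-2$. For large $N$, $y^\star$ survives, since it is strictly the most likely and Assumption~\ref{assump:prefix} holds. Meanwhile $\tilde y$ is ranked below at least $b$ generic siblings. The output is $y^\star$, so your regret tends to $0$.

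The claim that $\tilde y$ has rank at most $2$ needs every non-optimal token to be negligible relative to $1/C_L^\star$. That is the opposite regime, $|\mathcal A|\kappa'\gg 1$. Even there you still need a union bound over tail tokens that reach the threshold count.

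There are two further holes.
\begin{itemize}
\item Step 1 claims $y^\star$ and $\tilde y$ both reach $\mathcal B_L$ with constant probability. This is false for $N\lesssim C^\star_{\max}$, where $s^\star_{L-1}$ itself is usually lost. You need a separate small-$N$ case. You must also set $r^\star(x,y)\le 1-\eta$ for every $y\ne y^\star$, which you leave unspecified.
\item You cannot choose tie-breaking in favor of $\tilde y$, because the algorithm fixes the rule. Lower $\hat r(x,y^\star)$ instead.
\end{itemize}

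The paper avoids needing any particular competitor by putting the reward-model error on $y^\star$ itself. It sets $r^\star(x,y^\star)=1$, $\hat r(x,y^\star)=1-2\delta$, and $r^\star(x,y)=\hat r(x,y)=1-\delta$ for all $y\ne y^\star$. Then $\epsilon_{\mathrm{RM}}^2=4\delta^2\pi_{\mathrm{ref}}(y^\star\mid x)=4\delta^2/\overline C(x)$. Since $\hat r$ prefers every suboptimal response, the regret equals $\delta\bigl(1-\Pr(\text{the final beam is }\{y^\star\})\bigr)$.

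It then suffices to bound that probability by $3/4$ for every $N$ and every threshold sequence:
\begin{itemize}
\item If $N\le C^\star_{\max}$, an optimal token goes unsampled with probability at least $1/4$.
\item If some $\beta_t\ge 1/C_t^\star$, the optimal token is filtered with probability at least $1/4$.
\item Otherwise, $\kappa'$ is chosen small enough that each sibling's mean exceeds $\beta_t$. Each of the $|\mathcal A|-1$ siblings of the optimal prefix then passes with probability at least $1/4$. Negative association of multinomial counts bounds by $L(3/4)^{|\mathcal A|-1}$ the probability that no sibling passes at some depth. With $b\ge 2$, the final beam is therefore not a singleton.
\end{itemize}
In this design, the many near-tied siblings that break your argument are exactly what forces a suboptimal response into the final beam.

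Your construction could likely be repaired in the regime $|\mathcal A|\kappa'\gg1$, with $|\mathcal A|$ large and the small-$N$ case added. To cover general thresholds you would also need $c$ adapted to $\beta_L$ and a separate case for $\beta_L\ge 1/C_L^\star$. The paper's argument handles all of this directly.
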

This lower bound matches the reward model error term in the upper bound in Theorem~\ref{main theorem}, demonstrating the minimax optimality of the regret with respect to the reward estimation error $\epsilon_{\text{RM}}^{2}(x)$. To be specific, the lower bound is determined by (i) the difficulty of the problem $\overline{C}$ and (ii) the accuracy of the reward model. These two dependencies are necessary, since outputting the optimal response requires both the model's ability to generate it and the reward model's ability to identify it.

\citet{di2025best} established a general lower bound for the pass@k inference problem with a similar structure.
However, their construction cannot be directly applied to our setting, as it does not satisfy
Assumption~\ref{assump:prefix}. Our lower bound instead constructs a family of instances
with an arbitrarily small prefix gap $\kappa$, under which it is information-theoretically difficult for any
intermediate pruning strategy to eliminate all suboptimal prefixes with high probability before the final step.
The incurred regret is then driven by the inability of the imperfect reward model to always select the true optimum among the surviving candidates.

\subsection{Extension to Reward Model Free Setting}
\begin{wrapfigure}{r}{0.5\textwidth}
\begin{minipage}{\linewidth}
\small
\renewcommand{\baselinestretch}{0.9}\selectfont

\hrule height 0.8pt
\vspace{3pt}
\refstepcounter{algorithm}
\noindent\textbf{Algorithm~\thealgorithm}\ \ Self-consistent CF-Beam \par
\label{alg:mv-beam}
\vspace{3pt}
\hrule
\vspace{3pt}
\begin{algorithmic}[1]
\STATE \textbf{Given:} base policy $\pi_{\mathrm{ref}}$, beam width $b \ge 1$,
       depth $L$, sample sizes $N$, threshold $\{\beta_t\}_{t=1}^{L}$.
\STATE \textbf{Input:} prompt $x \in \mathcal{X}$.
\STATE Execute Algorithm~\ref{alg:te-beam}, Lines 2--24.
\STATE $\hat y \gets \arg\max_{y \in \mathcal{B}_L} Q_L(y)$.
\STATE \textbf{return} $\hat y$
\end{algorithmic}
\vspace{3pt}
\hrule height 0.8pt
\end{minipage}
\vspace{-1.2em}
\end{wrapfigure}

In the case when a reliable reward model is unavailable (or severely noisy), finding the most probable path becomes a natural metric to identify the optimality of the path. 
To this end, we can derive a reward-model-free variant of CF-Beam by
replacing the final selection rule, namely, instead of selecting the
candidate with the largest estimated reward, we define
\(Q_L(y):=\sum_{t=1}^L\log\hat{\pi}(a_t\mid s_{t-1},x)\) and output
$
\hat{y}\leftarrow\arg\max_{y\in \mathcal{B}_L}Q_L(y),
$
i.e., the most ``self-consistent'' candidate within the final beam according
to the intrinsic score \(Q_L\).
This modification yields Algorithm~\ref{alg:mv-beam}, which can be viewed as a reward-model-free variant of CF-Beam in Algorithm~\ref{alg:te-beam}.

The same filtered-score analysis controls this variant, with an additional comparison at the final step: the selected complete response must have the largest empirical likelihood, rather than merely remain in the beam. The proof is deferred to Appendix~\ref{Proof of Corollary {corollary: bs-mv alg}}.

\begin{corollary}\label{corollary: bs-mv alg}
Suppose $y^\star$ is an optimal response with $r^\star(x,y^\star)=1$, Assumption~\ref{assump:prefix} holds, $N\ge\frac{48C_{\max}^{\star}}{\kappa^2}\max\{2,\log(2C_{\max}^{\star})\}$, and $\beta_t=1/(2C_t^\star(x))$. The regret of \Cref{alg:mv-beam} satisfies
\begin{align}
\mathrm{Reg}(x)
&\le L\exp\!\left(-\frac{N}{8C_{\max}^{\star}}\right)
+\frac{2C_{\max}^{\star}}{b}\exp\!\left(-\frac{\kappa^2N}{48C_{\max}^{\star}}\right)\nonumber+\left[C_{\max}^{\star}\exp\!\left(-\frac{\kappa^2N}{48C_{\max}^{\star}}\right)\right]^L.
\label{eq:reward-free-regret}
\end{align}
In particular, $N\gtrsim(C_{\max}^{\star}/\kappa^2)\log(eLC_{\max}^{\star}/\delta)$ suffices to achieve regret at most $\delta\in(0,1)$. No reward-model accuracy assumption is needed.
\end{corollary}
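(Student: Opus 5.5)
Since $r^\star\in[0,1]$ and $r^\star(x,y^\star)=1$, the regret equals $\mathbb E[1-r^\star(x,\hat y)]$, which is at most $\mathbb P(\hat y\neq y^\star)$. We bound this by $\mathbb P(\mathcal E^c)+\mathbb P(\mathcal E,\ \hat y\neq y^\star)$, where $\mathcal E=\mathcal E_\beta\cap\mathcal E_b$ is the good event of Definition~\ref{def: good event}. We keep the pre-sampled histogram coupling, so that frequencies along the optimal path are always defined.

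\textbf{Filtering error.} With $\beta_t=1/(2C_t^\star)$, the count of $a_t^\star$ at $s_{t-1}^\star$ is $\mathrm{Bin}(N,1/C_t^\star)$. The Chernoff lower tail with relative deviation $1/2$ gives
\[
\mathbb P\bigl(\hat\pi(a_t^\star\mid s_{t-1}^\star,x)<\beta_t\bigr)\le \exp\bigl(-N/(8C_t^\star)\bigr).
\]
A union bound over $t\in[L]$ then gives the first term $L\exp(-N/(8C^\star_{\max}))$. This is the same argument as in Theorem~\ref{main theorem}, but with a constant relative slack instead of $1/L$, which removes the $L^2$ factor.

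\textbf{Pruning error.} The analysis of Theorem~\ref{main theorem} bounds $\mathbb P(\mathcal E_\beta\cap\mathcal E_b^c)$. It does not depend on the threshold except through $\beta_t\ge 1/(2C_t^\star)$. As there, we take the expected number of filtered competitors at depth $t$ whose empirical score $Q$ is at least $Q(s_t^\star)$, and apply Markov's inequality to get the probability that this number is at least $b$. Truncated binomial moment bounds and Assumption~\ref{assump:prefix} show that each competitor beats $s_t^\star$ with probability at most its reference mass times $C^\star_{\max}e^{-\kappa^2N/(48C^\star_{\max})}$ (after absorbing the filter-induced lower bound on counts). Summing over competitors and depths, and using the $N$ lower bound to make the per-depth terms geometric, reproduces the second term $\frac{2C^\star_{\max}}{b}e^{-\kappa^2N/(48C^\star_{\max})}$. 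I would invoke the corresponding lemma from the proof of Theorem~\ref{main theorem} directly, checking only that the factor-$2$ threshold change affects constants in a way the stated $N$ condition still absorbs.

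\textbf{Final selection.} This is the new step and the main obstacle. On $\mathcal E$, $y^\star\in\mathcal B_L$, and an error occurs only if some $y\in\mathcal B_L\setminus\{y^\star\}$ has $Q_L(y)\ge Q_L(y^\star)$. Unlike the pruning analysis, one competitor suffices, so Markov gives no $1/b$ gain. The plan is to apply the same per-competitor bound at $t=L$ and sum over full-length competitors. Here I would try to exploit the product structure over depths, since the event $Q_L(y)\ge Q_L(y^\star)$ factorizes through independent histograms at distinct prefixes. Chaining an exponential-moment (Chernoff-type) bound across the $L$ steps should give per-step factors $C^\star_{\max}e^{-\kappa^2N/(48C^\star_{\max})}$, and their product yields the third term $\bigl[C^\star_{\max}e^{-\kappa^2N/(48C^\star_{\max})}\bigr]^L$. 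The delicate part is handling competitors that share a prefix with $y^\star$, where the steps coincide. I would address this by conditioning on the branching depth and using the average gap in Assumption~\ref{assump:prefix} at depth $L$. If the clean $L$-th power fails, the fallback is a bound of the same form as the pruning term without the $1/b$, which still supports the stated sample complexity.

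\textbf{Sample complexity.} Summing the three terms, each is at most $\delta/3$ once $N\gtrsim(C^\star_{\max}/\kappa^2)\log(eLC^\star_{\max}/\delta)$, using $\kappa\le1$ and the fact that the bracket is less than $1$.
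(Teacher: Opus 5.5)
Your proposal is correct and follows essentially the paper's proof. Filtering uses the Chernoff lower tail with $\alpha=1/2$, pruning uses the Markov bound on the count $Z_t$ of filtered competitors (Lemma~\ref{lem: E_b} applies verbatim with $\alpha=1/2$), and final selection uses $\Pr(Z_L\ge 1)\le\mathbb E[Z_L]$. The step you flag as the main obstacle is already covered by the full-tree bound \eqref{eq:full-tree-rank} at $t=L$ with $m=1$: the per-competitor factor there is $\bigl[C_{\max}^{\star}e^{-\kappa^2N/(48C_{\max}^{\star})}\bigr]^t$, not the single factor you wrote in the pruning paragraph, and the Cauchy--Schwarz step in Lemma~\ref{QQ comparison lemma} absorbs the shared-prefix dependence, so no branching-depth conditioning or fallback is needed.
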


The first two terms account for filtering and intermediate beam pruning, and the last controls final selection by the empirical score. Since the guarantee does not involve reward-model error, the threshold multiplier can be fixed at $1/2$ independently of $L$. This removes the $L^2$ factor from the filtering contribution to the sufficient per-prefix sample size in Corollary~\ref{cor:complexity}. The resulting total query budget is $\widetilde O(LbC_{\max}^{\star}/\kappa^2)$, with only logarithmic dependence on $L$ in the per-prefix sample requirement.

\section{Experiments}\label{Section: Experiment}

\begin{figure*}[t]
\centering

\begin{minipage}[t]{0.72\textwidth}
    \centering
    \begin{subfigure}[t]{0.32\linewidth}
        \centering
        \includegraphics[width=\linewidth]{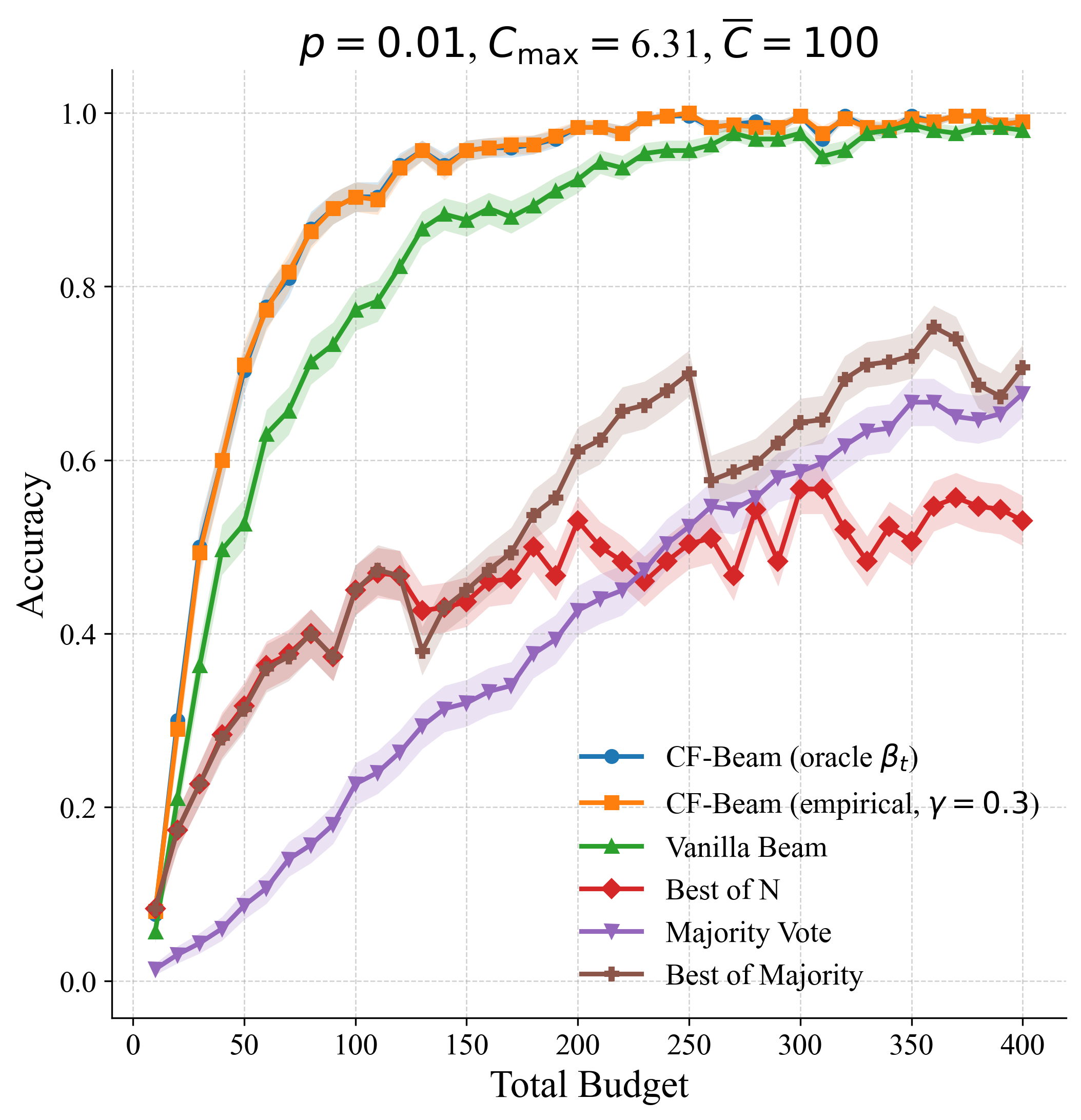}
        \caption{$p=0.01$}
        \label{fig:diff_a}
    \end{subfigure}\hfill
    \begin{subfigure}[t]{0.32\linewidth}
        \centering
        \includegraphics[width=\linewidth]{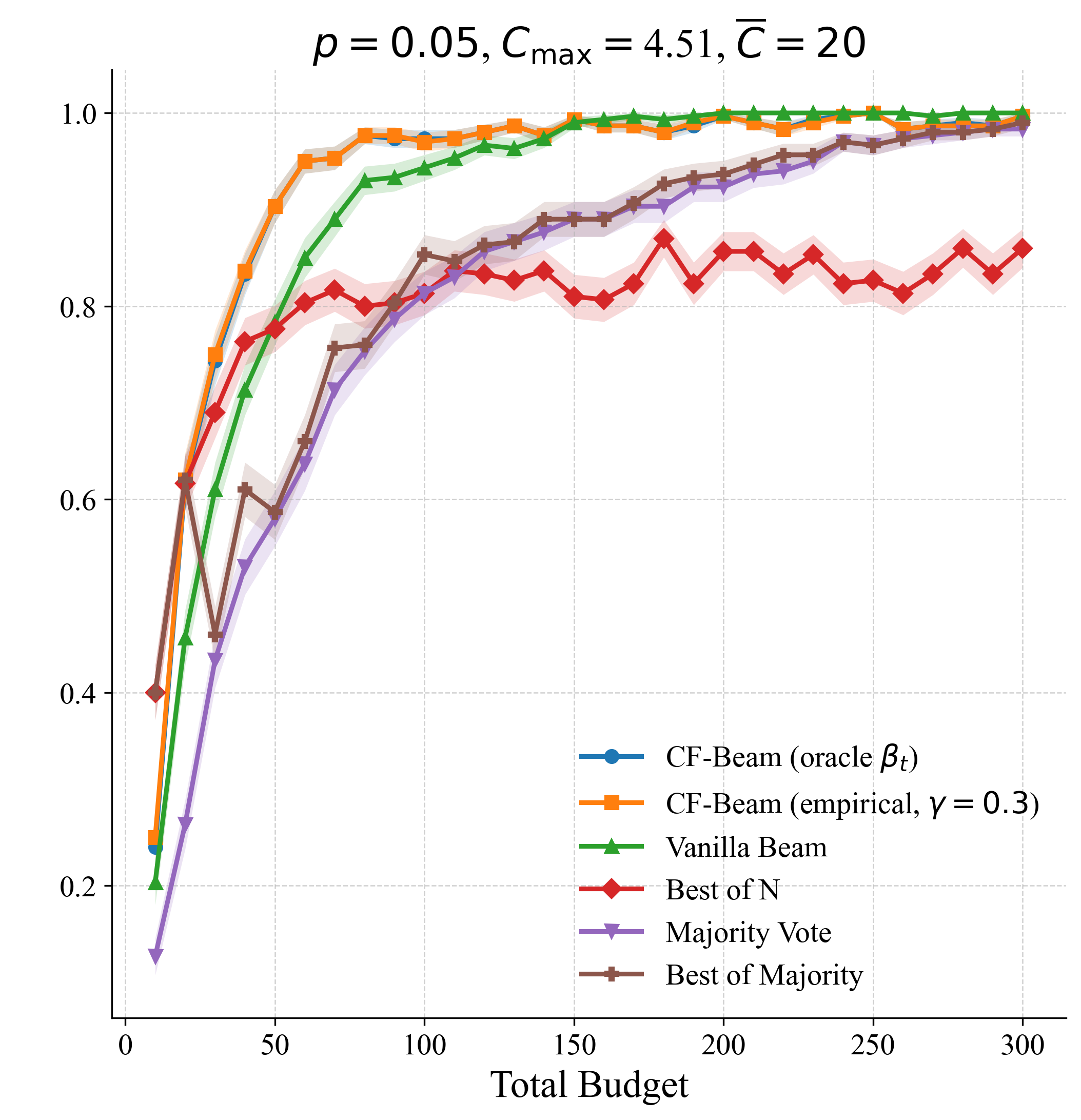}
        \caption{$p=0.05$}
        \label{fig:diff_b}
    \end{subfigure}\hfill
    \begin{subfigure}[t]{0.32\linewidth}
        \centering
        \includegraphics[width=\linewidth]{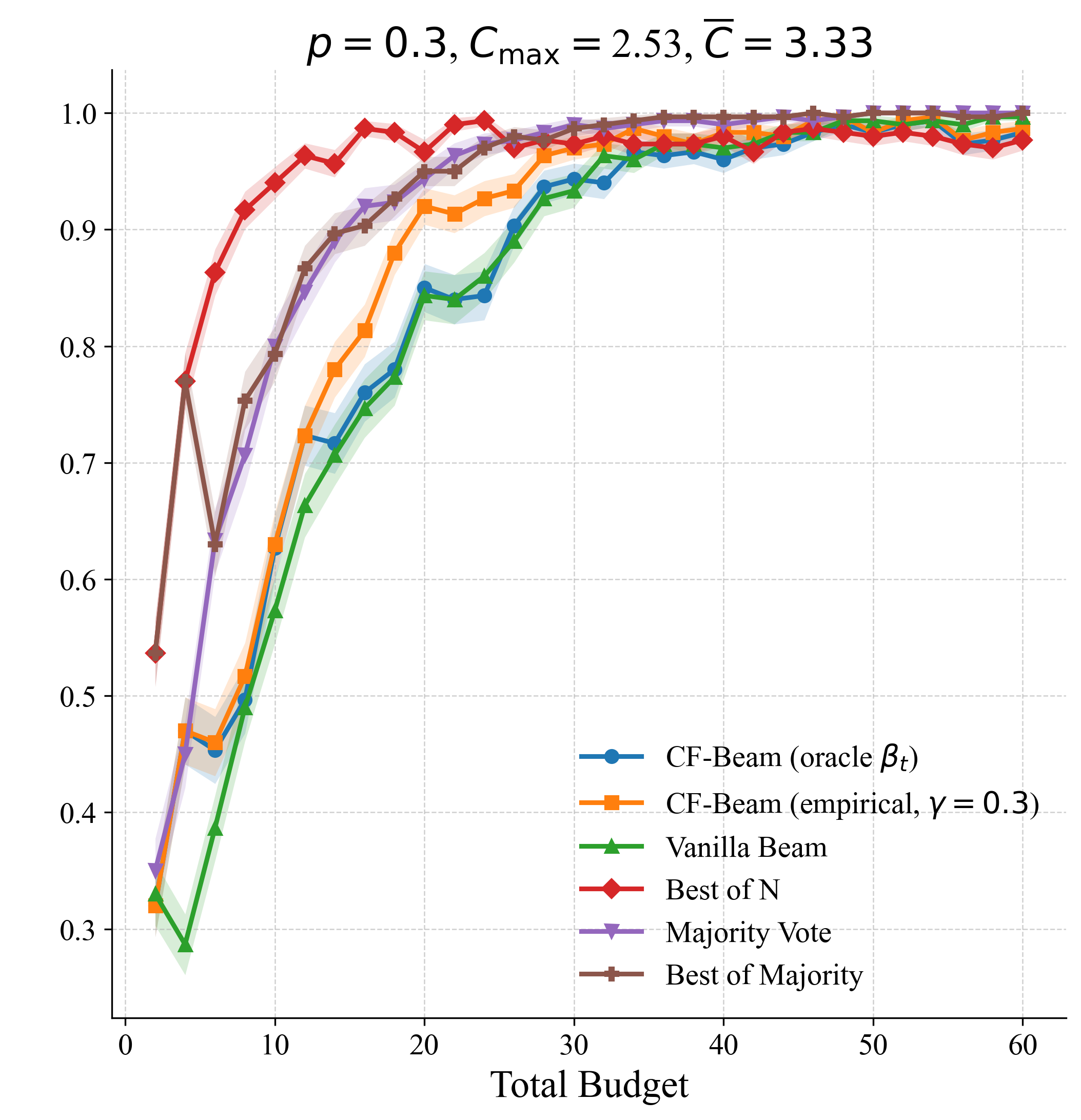}
        \caption{$p=0.3$}
        \label{fig:diff_c}
    \end{subfigure}

    \caption{Algorithm performance on different difficulty-level problems.}
    \label{fig:diff_levels}
\end{minipage}\hfill
\begin{minipage}[t]{0.24\textwidth}
    \centering
    \includegraphics[width=\linewidth]{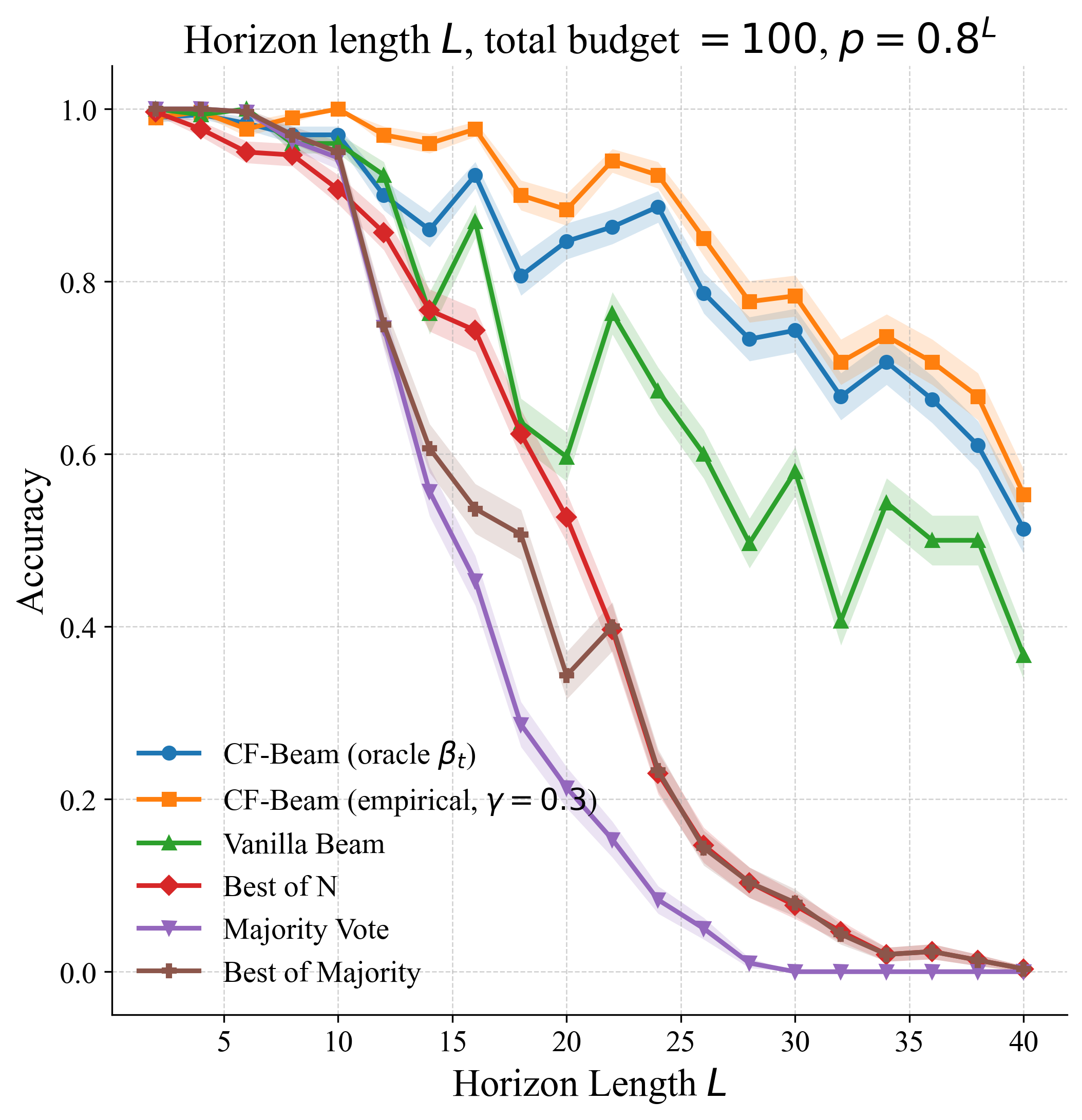}
    \caption{Algorithm performance vs. horizon length.}
    \label{fig:horizon_len}
\end{minipage}

\end{figure*}

We validate our theoretical findings by comparing two implementations of CF-Beam,
Oracle CF-Beam and Empirical CF-Beam, against four baselines: Vanilla beam search, Best-of-$N$ \citep{huang2025best}, Majority Voting, and Best-of-Majority \citep{di2025best}. Oracle CF-Beam uses the threshold in Theorem~\ref{main theorem}, while Empirical CF-Beam sets $\beta_t(s)=\gamma\max_{a\in\mathcal A}\hat\pi(a\mid s,x)$ with $\gamma=0.3$, without requiring the optimal-token probability. Following the RLVR setting, we evaluate each algorithm on a synthetic LLM simulator paired with a noisy reward model of error level $\epsilon_{\mathrm{RM}}^2 = 0.01$, and report the accuracy of identifying the optimal response. Baseline descriptions, the experimental protocol, simulator construction, and parameter setups are deferred to Appendix~\ref{appendix: numerical experiments}. We also evaluate Empirical CF-Beam on a pretrained LLM in Appendix~\ref{appendix:llm-experiments}.

Figure~\ref{fig:diff_levels} compares the algorithms under varying total budgets across three difficulty levels, parameterized by the per-rollout success probability $p \in \{0.01, 0.05, 0.3\}$ (larger $p$ = easier). CF-Beam algorithms achieve the largest advantage in the hardest setting ($p=0.01$, Figure~\ref{fig:diff_a}), where sparse coverage and tail fluctuations are most pronounced---precisely the regime that amplifies the failure mode of Theorem~\ref{th:vanilla_beam}. As $p$ increases (Figures~\ref{fig:diff_b}--\ref{fig:diff_c}), the gap narrows and Vanilla-Beam catches up under sufficiently large budgets. This trend mirrors the relationship between $\overline{C}$ and $C^\star_{\max}$ shown in each panel: as $p$ grows, $\overline{C}$ approaches $C^\star_{\max}$, reducing CF-Beam's edge. These observations align with our theoretical discussion in \Cref{sec:comparison}.

Figure~\ref{fig:horizon_len} reports accuracy as the horizon $L$ grows, with per-step success probability $0.8$. Sequence-level baselines (Best-of-$N$, Best-of-Majority, Majority Voting) degrade rapidly, while beam-based methods remain robust, and CF-Beam consistently attains the highest accuracy. This is consistent with our theory: CF-Beam's dominant sample-complexity term scales only polynomially in $L$, whereas sequence-level methods can suffer exponential dependence on $L$.

\section{Related Works}
In this section, we focus on related work in theoretical studies of test-time scaling. Additional related work is deferred to Appendix~\ref{Appendix: related works}.

Despite extensive literature on test-time computing \citep{wang2022self,kang2025scalable,jinnai2024regularized,ichihara2025evaluation,wei2022chain,ghosal2025does,liu2025trust}, especially beam search \citep{xu2025collaborative,xie2023self,zhu2024deductive}, theoretical analyses have only emerged recently. \citet{hu2024unveiling} decomposed the statistical error of Chain-of-Thought (CoT) estimators and showed it vanishes with more demonstrations. \citet{huang2024self} analyzed \textit{Best-of-N} and showed that larger $N$ does not necessarily improve performance. \citet{cordero2025certified} and \citet{aeeneh2024new} established convergence rates for Majority Voting, while \cite{di2025best} proposed \emph{Best-of-Majority}, which selects the top-$k$ response from a set of highly self-consistent candidates with non-degrading sample complexity. \citet{yu2025limits} combined Best-of-N with a rewind-and-repeat strategy \citep{kim2024guaranteed,beirami2024theoretical} to derive stronger guarantees.

The existing theoretical studies on {\em beam search} methods remains rather limited.  \citet{DBLP:conf/emnlp/MeisterCV20} characterized the implicit objective optimized by beam search, and \cite{fadeeva2025don} analyzed beam search for self-consistency-based uncertainty quantification. To our knowledge, no prior work studies the test-time scaling guarantees of beam search;  Our work fills this research gap and provides the first test-time computing guarantee for beam search type algorithms.

\section{Conclusion and Limitations}
This paper studied test-time guarantees for beam search in LLM reasoning,
where model likelihood scores intermediate trajectories and an external
reward model is queried only after full responses are generated. We showed
that vanilla beam search can be theoretically suboptimal, and proposed
\emph{Confidence-Filtered Beam Search} to filter low
self-consistency tokens before beam selection. CF-Beam improves the
search-induced scaling over sequence-level methods on hard and
long-horizon instances. The limitation is that this advantage relies on
prefix alignment between likelihood and final reward; otherwise,
reward-optimal prefixes may be pruned before final reward evaluation.
Future work includes prompt-adaptive choices between beam-based and
sequence-level methods, and settings with external rewards or process
verifiers available before full responses are generated.

\medskip
{\small
\bibliographystyle{apalike}
\bibliography{ref}
}





\newpage
\appendix

\allowdisplaybreaks
\begin{table}[H]
\centering 
\caption{Summary of Notations Used in the Paper} 
\label{tab:notations} 
\begin{tabularx}{\linewidth}{@{} l >{\raggedright\arraybackslash}X @{}}
\toprule 
\textbf{Notation} & \textbf{Description} \\ 
\midrule
$\pi_{\text{ref}}(\cdot|x)$ & a fixed LLM model that maps ${\mathcal{X}\to\Delta(\mathcal{A}^L)}$\\
$L$ & Response length (horizon) of $y=(a_1,\ldots,a_L)$. \\
$a$ & An action/token from the vocabulary $\mathcal{A}$. \\
$s_t$ & Prefix (trajectory) up to step $t$: $s_t=(a_1,\ldots,a_t)$. \\
$b$ & Beam width, i.e., the number of trajectories retained at each step. \\
$B$ & Test-time compute budget, i.e., the total number of samples drawn from $\pi_{\mathrm{ref}}$. \\
$N$ & Number of samples drawn per prefix $s_t \in \mathcal{B}_t$ at each step. \\
$\mathcal{D}_t$ & Candidate set after confidence filtering at step $t$. \\
$\mathcal{B}_t$ & Beam set at step $t$, consisting of at most $b$ selected prefixes (i.e., $|\mathcal{B}_t|\le b$). \\
$Q(s_t)$ & Estimated score of prefix $s_t$, e.g., $Q(s_t)=\sum_{i=1}^{t}\log\hat{\pi}(a_i\mid s_{i-1},x)$. \\
$\kappa$ & Prefix gap parameter between optimal and suboptimal trajectories; see Assumption~\ref{assump:prefix}. \\
$\epsilon_{\mathrm{RM}}^{2}$ & Reward-model estimation error; see Assumption~\ref{assump:reward-err}. \\
$\epsilon_{\mathrm{opt}}$ & Reward-model error on the optimal response; see Assumption~\ref{assump:opt}. \\
$C_t^\star(x)$, $C_{\max}^\star(x)$, $\overline{C}(x)$ & Coverage coefficients; see Definition~\ref{def:coverage}. \\
$\beta_t$ & Confidence-filtering threshold at step $t$. \\
\bottomrule
\end{tabularx}
\end{table}

\textbf{Notations.} 
We use $\mathrm{Bin}(N,p)$ to denote the binomial distribution with parameters $N \in \mathbb{N}^+$ and $p \in [0,1]$.
\section{Related Works}\label{Appendix: related works}
\paragraph{Test-time computing.} Beyond post-training scaling laws, recent work has increasingly focused on scaling computation at inference time to improve the quality of a fixed base model. Existing work mainly falls into two approaches: parallel and sequential test-time computation. For parallel methods, the compute budget is used to generate multiple candidate trajectories simultaneously, increasing the probability of finding a high-quality response. Existing such type of methods include Best-of-N (BoN) \citep{huang2025best, jinnai2024regularized, ichihara2025evaluation}, majority voting \citep{wang2022self,kang2025scalable}, and beam search \citep{xie2023self, yu2025scaling}. In sequential test-time computation, computation is allocated adaptively through iterative refinement or search, using methods like Chain-of-Thought \citep{wei2022chain, ghosal2025does, hu2024unveiling} and self-verification \citep{liu2025trust,lee2025revise}. Some recent studies aimed to bridge the gap between existing test-time computation methods. \citet{di2025best} proposes Best-of-Majority, which used majority voting as a filtering mechanism and an external reward model to select the final answer. \citet{yu2025limits} included a sequential technique for BoN sampling. The beam search method we study here mainly falls into the first category, since candidates are generated in parallel rather than sequentially.

\paragraph{Beam search.}
Beam search algorithms have been widely used as a decoding strategy for large language models. \citet{xu2025collaborative} and \citet{xie2023self} mainly used self-consistency as the evaluation metric for top-$b$ selection; \citet{zhu2024deductive} introduced beam search to increase the successful rate of chain of thought; \citet{yu2025scaling} empirically compared beam search algorithms with the methods like BoN and majority voting, and pointed out the weakness and strengths of beam search-based algorithms. Despite the empirical success of beam search and its many variants, there has been rather limited theoretical studies of beam search algorithms. \citet{DBLP:conf/emnlp/MeisterCV20} studied what implicit objective function beam search optimizes. A recent work \citet{fadeeva2025don} studied beam search for consistency-based uncertainty quantification and provided a theoretical comparison between beam-based candidate generation and multinomial sampling estimators. To our best knowledge, there has not been any work theoretically analyzing the test-time scaling performance of beam search. Our work provides the first test-time computing guarantee for beam-search decoding algorithms in modern LLM settings.

\paragraph{Self-consistency.}Self-consistency improves reasoning by sampling multiple independent solution paths and selecting the answer that appears most consistently across them. Existing works have demonstrated the effectiveness of self-consistency, when it is used in self-evaluated post-training \citep{fadeeva2025don, liu2025trust,zhao2025learning,sun2025theoretical} and test-time computing \citep{aggarwal2023let, chen2023universal}. \citet{cordero2025certified, feng2025optimal} and \citet{aeeneh2024new} provided theoretical analysis of the convergence rate of self-consistency methods. Our CF-Beam is related to self-consistency methods in that it uses a self-consistency score to prune beams at intermediate steps.

\section{Discussion of Assumptions and Computational Costs} \label{appendix: dis assump3}

We clarify how response uniqueness and prefix competitiveness enter our guarantees, and distinguish the sampling budget from the cost of maintaining and processing candidate prefixes.

\subsection{Uniqueness of the Optimal Response}
Assumption~\ref{assump:opt} makes the optimal response unique to simplify the path-based presentation. The analysis can accommodate several reward-optimal responses by tracking one that satisfies the required prefix and reward-model error conditions. As seen in the following remark.

\begin{remark}[Multiple reasoning trajectories leading to the same answer]
\label{rem:multiple-trajectories}
Multiple reasoning trajectories yielding the same correct answer are a special case of nonunique reward-optimal responses. Let $\mathcal Y^\star(x):=\{y\in\mathcal A^L:r^\star(x,y)=1\}$ be nonempty, and fix an optimal witness $y^\dagger=(a_1^\dagger,\ldots,a_L^\dagger)\in\mathcal Y^\star(x)$ with prefixes $s_t^\dagger:=a_{1:t}^\dagger$. Suppose Assumption~\ref{assump:prefix} holds with $s_t^\star$ replaced by $s_t^\dagger$, and $|r^\star(x,y^\dagger)-\hat r(x,y^\dagger)|\le\epsilon_{\mathrm{opt}}(x)$. The prefix gap must hold against every distinct competing prefix, including prefixes of other correct trajectories.

Define $C_t^\dagger(x):=1/\pi_{\mathrm{ref}}(a_t^\dagger\mid s_{t-1}^\dagger,x)$, $C_{\max}^\dagger(x):=\max_t C_t^\dagger(x)$, and $\overline C^\dagger(x):=\prod_{t=1}^L C_t^\dagger(x)$. With oracle thresholds $\beta_t=(1-1/L)/C_t^\dagger(x)$, the guarantees in Theorem~\ref{main theorem} and Corollary~\ref{cor:complexity} extend under Assumption~\ref{assump:reward-err} and the same sample-size conditions, with the optimal path and its coverage coefficients replaced by those of the witness. The filtering and ranking arguments track this path, and, on its survival event, the selected response $\hat y$ satisfies $\hat r(x,\hat y)\ge\hat r(x,y^\dagger)$, so the reward-error argument applies with $y^\dagger$ as an optimal comparator. This extension does not exploit the aggregate probability mass or shared-prefix structure of all correct trajectories; sharper bounds that do so remain an interesting direction.
\end{remark}

\subsection{Prefix Competitiveness and Its Relaxation}
Assumption~\ref{assump:prefix} formalizes alignment between the likelihood of an optimal prefix and the final outcome reward. A correct trajectory can be difficult to sample in full because of limited token-level coverage or a long horizon, while its prefixes still have a likelihood advantage over competing prefixes. This is the regime in which likelihood-based pruning can exploit token-level coverage without sampling the entire trajectory independently.

When the base policy instead assigns higher likelihood to suboptimal prefixes, intermediate pruning can irreversibly discard the optimal trajectory. Proposition~\ref{prop:necessity} exhibits such an instance with a constant failure probability for every sample size. Thus, increasing the sampling budget alone cannot compensate for arbitrary misalignment between policy likelihood and outcome reward. Sequence-level methods such as Best-of-$N$ or majority voting may be more competitive in these regimes because they postpone selection until complete responses are available \citep{huang2024self}; the relative benefit of beam search depends on the instance.

\begin{remark}[Relaxing prefix competitiveness]
For reward-based CF-Beam, the gap condition can allow a limited number of exceptions. Suppose that, at each depth, at most $k-1$ distinct competing prefixes violate the gap inequality relative to the optimal prefix, where $1\le k\le b$, and every other competing prefix satisfies the same lower bound $\kappa$. If the optimal prefix passes filtering but is pruned, at least $b-k+1$ competitors satisfying the gap inequality must match or exceed its empirical score. Restricting the probability-mass sum in Lemma~\ref{lem: E_b} to these competitors gives the same ranking bound with $b$ replaced by $b-k+1$. The filtering and selected-output bounds remain unchanged, so Theorem~\ref{main theorem} extends with this replacement in the ranking term. We use the top-1 condition in the main text for simplicity. The reward-free final selection in Corollary~\ref{corollary: bs-mv alg} still requires the top-1 gap: a higher-likelihood suboptimal response could otherwise be selected even with exact scores.
\end{remark}

\subsection{Sampling Budget and Candidate Processing}
\begin{remark}[Beam expansion and query cost]
At each depth, Algorithm~\ref{alg:te-beam} samples $N$ next tokens for each active prefix, applies confidence filtering, and then retains at most $b$ candidate prefixes. Consequently, $|\mathcal B_t|\le b$ and, since the initial beam contains only the empty prefix, the total query budget satisfies $B(x)\le N+(L-1)bN$. Before top-$b$ pruning, the candidate pool at depth $t$ contains at most $N|\mathcal B_{t-1}|\le bN$ children; confidence filtering can only shrink this pool and introduces no additional branching. Thus, for fixed $N$ and $b$, both the number of sampling queries and the number of processed candidates grow linearly in $L$, whereas the full output space has size $|\mathcal A|^L$, which makes exhaustive search intractable \citep{stahlberg2019nmt}.
\end{remark}

These bounds count token-level policy draws, as defined in Section~\ref{sec: preliminary}. They are not direct measures of neural-network forward passes, wall-clock latency, or KV-cache memory. For instance, when the model is run locally, the $N$ draws at a common prefix can be generated from a single forward pass, whereas under pure API access each draw may constitute a separate call; actual costs further depend on batching and cache management. Comparisons with tree-search methods such as Tree-of-Thoughts or MCTS-based decoding \citep{yao2023tree,feng2023alphazero,wu2024inference} therefore require specifying both the search configuration and the cost measure.

\section{Proof of Theorem \ref{th:vanilla_beam}}\label{Appendix:Counter example for not pruning beta}

\begin{proof}[Proof of Theorem \ref{th:vanilla_beam}]
    We consider the following LLM instance, where $50 \leq C_{i}^{\star}(x) := C^{\star}(x) \ll {(|\mathcal{A}|)^{1/4}}$.
    For time step $t=1$, there exists a unique optimal action $a^\star_1$ with 
    \[
    \pi_{\text{ref}}(a_1^\star\mid x)=\frac{1}{C^\star(x)},
    \]
    and all other actions $a^\prime\in\mathcal{A}\backslash \{a_1^\star\}$ have equal probability 
    \[
    \pi_{\text{ref}}(a^\prime|x)=\frac{1-1/C^\star(x)}{|\mathcal{A}|-1}.
    \]
    For time step $t=2$, conditioned on the optimal prefix $a_1^\star$, the next-step distribution satisfies
    \[
    \pi_{\text{ref}}(a_2^\star\mid a_1^\star,x)=\frac{1}{C^\star(x)},\quad\pi_{\text{ref}}(a_2^{\prime}\mid a_1^\star,x)=\frac{1-1/C^\star(x)}{|\mathcal{A}|-1},
    \]
    where $a_2^\prime\in\mathcal{A}\backslash \{a_2^\star\}$. Otherwise, if conditioned on any suboptimal prefix $a_1^\prime\neq a_1^\star$, there exist two actions $a^\star_{2,1}$ and $a_{2,2}^{\star}$, such that 
    \[
    \pi_{\text{ref}}(a_{2,1}^\star\mid a_1',x)=\pi_{\text{ref}}(a_{2,2}^\star\mid a_1',x)=\frac{1}{2},\quad\pi_{\text{ref}}(a_2^{\prime}\mid a_1',x)=0,
    \]
    where $a_2^\prime\in \mathcal{A}\backslash\{a_{2,1}^\star,a_{2,2}^\star\}$.
    Notice that this constructed $\pi_{\text{ref}}$ does not violate our Assumption \ref{assump:prefix}, since $\pi_{\text{ref}}(s^\star_2)=1/(C^\star(x))^2  \geq 1/(2(|\mathcal{A}|-1))>\pi_{\text{ref}}(s_2^\prime)$ for any $s_2' \neq s^\star_2$.
    For time step $t>2$, we let $\pi_{\text{ref}}(\cdot|s_{t-1},x)$ be any distribution such that Assumption \ref{assump:prefix} holds. 
    \paragraph{Case 1: $N\leq 34$.} Under this case, we show that the probability that the optimal prefix will not even be sampled is at least $1/2$:
    \begin{align*}
        \Pr(y^\star \text{ is eliminated})&\geq \Pr(y^\star \text{ is eliminated in the first round})\\
        &\geq \left(1-\frac{1}{C^\star(x)}\right)^N\geq \left(\frac{49}{50}\right)^{34}>0.5.
    \end{align*}

    \paragraph{Case 2: $N\geq 35$.} Before proceeding, we define the following events. 
Let $E_1$ denote the event that the optimal trajectory $s^\star$ is eliminated in the first round, and let $E_2$ denote the event that $s^\star$ is eliminated in the second round. 
Let $\mathcal{B}_1$ be the top-$b$ beam (candidate set) retained after the first round, and define $B_\star \;:=\; \bigl\{\, |\mathcal{B}_1| = b\}$, which implies the event that we sample at least $b$ distinct trajectories from $x$.

    By the definition of our events, we have 
    \begin{align*}
        \Pr(s^\star \text{ is eliminated})&=\Pr(E_1)+\Pr(E_1^c)\Pr(E_2|E_1^c)\\
        &\geq \Pr(E_1)+\Pr(E_1^c)\Pr((E_2\cap B_\star)|E_1^c)\\
        &= \Pr(E_1)+\Pr(E_1^c)\Pr(B_\star|E_1^c)\Pr(E_2|( B_\star\cap E_1^c))\\
        &\geq \Pr(E_1)\Pr(B_\star|E_1^c)\Pr(E_2|( B_\star\cap E_1^c))\\ 
        &\quad +(1-\Pr(E_1))\Pr(B_\star|E_1^c)\Pr(E_2|( B_\star\cap E_1^c))\\
        &= \Pr(B_\star|E_1^c)\Pr(E_2|( B_\star\cap E_1^c)).
    \end{align*}
We next bound each of the term correspondingly.
\paragraph{2.1 Bounding $\Pr(B_\star\mid E_1^c)$.}
Let 
\[
K:=\#\{\text{samples equal }a_1^\star\}\sim\mathrm{Bin}(N,p),\qquad p:=\pi_{\text{ref}}(a_1^\star\mid x)=1/C^\star.
\]
We first justify that suboptimal actions are (essentially) all distinct when the action space is large.
Conditioned on $K$, there are $N-K$ suboptimal draws. Since by our construction, the suboptimal mass is sufficiently spread out
(e.g., $\max_{a\neq a_1^\star}\pi_{\text{ref}}(a\mid x)\le 1/(|\mathcal{A}|-1)$), then by a union bound,
\[
\Pr\!\left(\exists\,\text{a repeated suboptimal action}\mid K\right)
\;\le\;
\binom{N-K}{2}\max_{a\neq a_1^\star}\pi_{\text{ref}}(a\mid x)
\;\le\;
\frac{N(N-1)}{2(|\mathcal{A}|-1)}.
\]
Hence, under the regime $|\mathcal{A}|\gg N^2$, the above collision probability is negligible.
In the sequel, we work on this high-probability ``no-collision'' event, so that the $N-K$ suboptimal samples correspond to $N-K$ \emph{distinct} suboptimal actions.

\smallskip
\noindent\textbf{Step 1: Relating $E_1^c$ and $K$.}
Under the no-collision condition, every suboptimal action appears at most once, and hence has empirical frequency $1/N$.
If $K\ge 2$, then $a_1^\star$ appears at least twice and thus has empirical frequency at least $2/N$, strictly larger than any suboptimal action.
Therefore, $a_1^\star$ must be retained in the top-$b$ set after the first round, i.e.,
\[
\{K\ge 2\}\subseteq E_1^c.
\]

\smallskip
\noindent\textbf{Step 2: Characterizing $B_\star^c$.}
Recall $B_\star$ is the event that the first-round beam $\mathcal{B}_1$ is fully populated by $b$ \emph{distinct} candidates.
On the no-collision event, there are exactly $N-K$ distinct suboptimal candidates available. 
Since one slot is occupied by the optimal prefix (on $E_1^c$), the beam fails to be fully populated only if there are fewer than $b-1$ distinct suboptimal candidates, namely,
\[
B_\star^c \;\subseteq\; \{\,N-K<b-1\,\}.
\]
Consequently,
\[
\Pr(B_\star^c\mid E_1^c)
=\frac{\Pr(B_\star^c\cap E_1^c)}{\Pr(E_1^c)}
\;\le\;
\frac{\Pr(N-K<b-1)}{\Pr(E_1^c)}
\;\le\;
\frac{\Pr(N-K<b-1)}{\Pr(K\ge 2)},
\]
where the last inequality follows because $\{K\ge 2\}\subseteq E_1^c$.

\smallskip
\noindent\textbf{Step 3: Bounding the ratio.}
If $b\le N/2$, then $N-K<b-1$ implies $K\ge N-b+2\ge \lceil N/2\rceil$. Hence
\[
\Pr(B_\star^c\mid E_1^c)\le \frac{\Pr(K\ge \lceil N/2\rceil)}{\Pr(K\ge 2)}.
\]
For the numerator, since $p<1$,
\[
\Pr(K\ge \lceil N/2\rceil)
=\sum_{k=\lceil N/2\rceil}^N\binom{N}{k}p^k(1-p)^{N-k}
\le \sum_{k=\lceil N/2\rceil}^N\binom{N}{k}p^k
\le p^{N/2}\sum_{k=0}^N\binom{N}{k}
=2^N p^{N/2}.
\]
For the denominator, we lower bound $\Pr(K\ge 2)$ by a single term:
\[
\Pr(K\ge 2)\ge \Pr(K=2)=\binom{N}{2}p^2(1-p)^{N-2}.
\]
Combining the above together yields
\[
\Pr(B_\star^c\mid E_1^c)
\le
\frac{2^N p^{N/2}}{\binom{N}{2}p^2(1-p)^{N-2}}
=
\frac{2^N}{\binom{N}{2}}\cdot\frac{p^{\frac{N}{2}-2}}{(1-p)^{N-2}}
=:f(N,p).
\]
In particular, for any fixed $p\le 1/50$, $f(N,p)$ is monotonically decreasing with $N$. For any fixed $N$, $f(N,p)$ is monotonically increasing with $p$.  
Then we have  
\[
f(N,p)\le f(35,p) \leq f(35,1/50) \leq \frac{2^{35}}{\binom{{35}}{2}}\cdot
\frac{(1/50)^{\frac{{35}}{2}-2}}{(49/50)^{{35}-2}}\leq 5.21\times 10^{-19}.
\]
As a result
\[
\Pr(B_\star^c\mid E_1^c)\le f(35)\leq 5.21\times 10^{-19}.
\]
Therefore,
\[
\Pr(B_\star\mid E_1^c)=1-\Pr(B_\star^c\mid E_1^c)\ge 1-5.21\times 10^{-19}\approx 1.
\]

\smallskip
\noindent\textbf{Interpretation.}
Under a large action space, suboptimal samples are almost surely all distinct. So the only way for the top-$b$ beam $\mathcal{B}_1$ to \emph{not} be fully populated by $b$ distinct candidates is that the optimal action $a_1^\star$ is sampled \emph{too many times} (so that $N-K$ is too small to provide $b-1$ distinct suboptimal candidates). Since $K\sim\mathrm{Bin}(N,p)$ with small $p=1/C^\star$, such a large-deviation event $K\ge N/2$ has exponentially small probability, leading to $\Pr(B_\star\mid E_1^c)\approx 1$, which implies that we sample at least $b$ distinct trajectories from $x$ almost surely.

\paragraph{2.2 Bounding $\Pr(E_2\mid B^\star\cap E_1^c)$.}
For the analysis in this part, we introduce an auxiliary path-likelihood quantity
\[
\widetilde{Q}(s_t)\;:=\;\prod_{i=1}^{t}\hat{\pi}(a_i\mid s_{i-1},x),
\]
which differs from the additive score $Q(\cdot)$ used by the algorithm in
Section~\ref{sec:algorithm} and is used solely within this proof. We further
define the good event $\mathcal{E}$ as
\[
\mathcal{E}
:=
\Big\{\,\widetilde{Q}(s_2^\star)<\frac{1}{4N}\,\Big\}
\cap
\Big\{\exists\, \text{at least } b \text{ suboptimal trajectories }
s_2'\in\mathcal{S}_2
\text{ such that }
\widetilde{Q}(s_2')\geq\frac{1}{4N}
\Big\},
\]
where $\mathcal{S}_2$ is the set of all trajectories generated by $\mathcal{B}_1$.

We first control the score of the optimal trajectory under the same conditioning
used in this subsection. Let
\[
K_1:=N\hat{\pi}(a_1^\star\mid x),
\qquad
K_2:=N\hat{\pi}(a_2^\star\mid a_1^\star,x).
\]
The event $B^\star\cap E_1^c$ is determined by the first-round samples, whereas
$K_2$ is generated from the second-round samples at the optimal prefix. Hence,
conditional on $K_1$, the random variable $K_2$ remains independent of
$B^\star\cap E_1^c$ and satisfies
$
K_2\sim \mathrm{Bin}\!\left(N,{1}/{C^\star(x)}\right).
$
Moreover,
\[
\widetilde{Q}(s_2^\star)
=
\hat{\pi}(a_1^\star\mid x)
\hat{\pi}(a_2^\star\mid a_1^\star,x)
=
\frac{K_1K_2}{N^2}.
\]
Therefore, by Markov's inequality,
\begin{align*}
&\Pr\!\left(
\widetilde{Q}(s_2^\star)\geq \frac{1}{4N}
\,\middle|\,
B^\star\cap E_1^c
\right)=
\Pr\!\left(
K_1K_2\geq \frac{N}{4}
\,\middle|\,
B^\star\cap E_1^c
\right)\leq
\frac{
\mathbb{E}\!\left[K_1K_2\mid B^\star\cap E_1^c\right]
}{N/4}.
\end{align*}
It remains to control the conditional expectation in the numerator. Since
$K_2$ is independent of the first-round samples and
$\mathbb{E}[K_2]=N/C^\star(x)$, we have
\[
\mathbb{E}\!\left[K_1K_2\mid B^\star\cap E_1^c\right]
=
\mathbb{E}\!\left[K_1\mid B^\star\cap E_1^c\right]\frac{N}{C^\star(x)}.
\]
On the no-collision event considered in Section~2.1, the event
$E_1^c$ can only add outcomes with $K_1=1$ beyond the event $\{K_1\ge 2\}$,
while the event $B^\star$ only removes outcomes with too many repeated samples
of the optimal action. Hence
\[
\mathbb{E}\!\left[K_1\mid B^\star\cap E_1^c\right]
\leq
\mathbb{E}\!\left[K_1\mid K_1\geq 2\right].
\]
For a binomial random variable, conditioning on at least two successes is
stochastically dominated by adding two deterministic successes to an independent
binomial count over the remaining trials. Thus
\[
\mathbb{E}\!\left[K_1\mid K_1\geq 2\right]
\leq
2+\frac{N}{C^\star(x)}
\leq
2+\frac{C^\star(x)}{16}
\leq
\frac{C^\star(x)}{9},
\]
where the second inequality uses $N\leq (C^\star(x))^2/16$, and the last one
uses $C^\star(x)\geq 50$. Therefore,
\[
\mathbb{E}\!\left[K_1K_2\mid B^\star\cap E_1^c\right]
\leq
\frac{N}{9},
\]
and consequently
\begin{align}
\Pr\!\left(
\widetilde{Q}(s_2^\star)<\frac{1}{4N}
\,\middle|\,
B^\star\cap E_1^c
\right)
\geq 1-\frac{4}{9}
=
\frac{5}{9}.
\label{eq:opt-score-small-cond}
\end{align}

Next, we control the second event in $\mathcal{E}$. For any suboptimal prefix
$a\in\mathcal{B}_1$, conditioned on $B^\star\cap E_1^c$, its second-step
distribution has two possible offspring,
$s_{2,1}'=(a,a_{2,1}^\star)$ and $s_{2,2}'=(a,a_{2,2}^\star)$, satisfying
\[
\pi_{\rm ref}(a_{2,1}^\star\mid a,x)
=
\pi_{\rm ref}(a_{2,2}^\star\mid a,x)
=
\frac{1}{2}.
\]
We can bound the probability that both generated trajectories have sufficiently
large likelihood as follows:
\begin{align*}
&\Pr\!\left(
\widetilde{Q}(s_{2,1}')>\frac{1}{4N},
\widetilde{Q}(s_{2,2}')>\frac{1}{4N}
\,\middle|\,
B^\star\cap E_1^c
\right)\\
&=
\Pr\!\left(
\hat{\pi}(a_{2,1}^\star\mid a,x)\hat{\pi}(a\mid x)>\frac{1}{4N},
\hat{\pi}(a_{2,2}^\star\mid a,x)\hat{\pi}(a\mid x)>\frac{1}{4N}
\,\middle|\,
B^\star\cap E_1^c
\right)\\
&\geq
\Pr\!\left(
\hat{\pi}(a_{2,1}^\star\mid a,x)>\frac{1}{4},
\hat{\pi}(a_{2,2}^\star\mid a,x)>\frac{1}{4}
\right)\\
&=
\Pr\!\left(
\frac{1}{4}<\hat{\pi}(a_{2,1}^\star\mid a,x)<\frac{3}{4}
\right)\\
&=
1-\Pr\!\left(
\left|\hat{\pi}(a_{2,1}^\star\mid a,x)-\frac{1}{2}\right|
\geq\frac{1}{4}
\right)\\
&\geq 1-2\exp\!\left(-\frac{N}{8}\right),
\end{align*}
where the first inequality holds because every selected first-step prefix has
empirical probability at least $1/N$, i.e., $\hat{\pi}(a\mid x)\geq 1/N$; the
second equality follows from
$
\hat{\pi}(a_{2,1}^\star\mid a,x)
+
\hat{\pi}(a_{2,2}^\star\mid a,x)
=
1
$;
and the last inequality follows from Hoeffding's inequality applied to
$N\hat{\pi}(a_{2,1}^\star\mid a,x)\sim \mathrm{Bin}(N,1/2)$.

Then, at the second step, for each such suboptimal prefix
$s_1\in\mathcal{B}_1$, let $\mathsf{Succ}(s_1)$ denote the event that its two
offspring trajectories $s_{2,1}'$ and $s_{2,2}'$ satisfy
$
\widetilde{Q}(s_{2,1}')\geq {1}/{(4N)},\ 
\widetilde{Q}(s_{2,2}')\geq {1}/{(4N)}.
$
By construction,
\[
\Pr\!\left(\mathsf{Succ}(s_1)\mid B^\star\cap E_1^c\right)
\geq
1-2\exp\!\left(-{N}/{8}\right),
\]
and these events are independent across different suboptimal prefixes. Hence
the number of successful suboptimal prefixes conditioned on
$B^\star\cap E_1^c$ stochastically dominates
\[
Z\sim
\mathrm{Bin}\!\left(
b-1,\,
1-2\exp\!\left(-\frac{N}{8}\right)
\right).
\]
In particular, there exists $b$ suboptimal trajectories $s_2'$ such that
$\widetilde{Q}(s_2')\geq 1/(4N)$ is implied by
$\{Z\geq \lceil b/2\rceil\}$, and therefore
\begin{align}\label{eq:lemma1_reason}
&\Pr\!\left(
\exists\text{ at least } b \text{ suboptimal trajectories } s'_2
\text{ with } \widetilde{Q}(s'_2)\geq \frac{1}{4N}
\,\middle|\,
B^\star\cap E_1^c
\right) \notag\\
&\quad\geq
\Pr\!\left(Z\geq \left\lceil \frac{b}{2} \right\rceil\right)=
\Pr\!\left(
\mathrm{Bin}\!\left(
b-1,\,
1-2\exp\!\left(-\frac{N}{8}\right)
\right)
\geq \left\lceil \frac{b}{2} \right\rceil
\right).
\end{align}
By Lemma~\ref{small technical lemma} (see Appendix~\ref{sec:th1lemma}), the
above binomial tail probability is nondecreasing along the even subsequence
$\{b=2,4,6,\ldots\}$ and along the odd subsequence $\{b=3,5,7,\ldots\}$.
Consequently,
\begin{align}
&\Pr\!\left(
\mathrm{Bin}\!\left(
b-1,\,
1-2\exp\!\left(-\frac{N}{8}\right)
\right)
\geq \left\lceil \frac{b}{2} \right\rceil
\right) \notag\\
&\quad\geq
\min\!\left\{
\Pr\!\left(
\mathrm{Bin}\!\left(
1,\,
1-2\exp\!\left(-\frac{N}{8}\right)
\right)\geq 1
\right),
\right. \left.
\Pr\!\left(
\mathrm{Bin}\!\left(
2,\,
1-2\exp\!\left(-\frac{N}{8}\right)
\right)\geq 2
\right)
\right\} \notag\\
&\quad=
\left(1-2\exp\!\left(-\frac{N}{8}\right)\right)^2.
\end{align}

Using a union bound under the conditioning $B^\star\cap E_1^c$, we obtain
\[
\Pr\!\left(\mathcal{E}\mid B^\star\cap E_1^c\right)
\geq
\left(1-2\exp\!\left(-\frac{N}{8}\right)\right)^2-\frac{4}{9}.
\]
Since $N\geq 35$ in Case~2, we have
\[
\left(1-2\exp\!\left(-\frac{N}{8}\right)\right)^2-\frac{4}{9}
\geq
\left(1-2\exp\!\left(-\frac{35}{8}\right)\right)^2-\frac{4}{9}
=0.506
\]
Under the good event $\mathcal{E}$, there exist at least $b$ suboptimal
trajectories $s_2'$ such that
\[
\widetilde{Q}(s_2^\star)
<
\frac{1}{4N}
\leq
\widetilde{Q}(s_2').
\]
Therefore, the optimal response is crowded out of the top-$b$ candidates by
$b$ suboptimal trajectories. Combining this result with the bound of
$\Pr(B^\star\mid E_1^c)$ yields
\[
\Pr(s^\star\text{ is eliminated})\geq 0.5.
\]
The proof is completed.
\end{proof}

\subsection{Technical Lemmas}\label{sec:th1lemma}
In the subsection, we show that $\Pr\!\left(\mathrm{Bin}(b-1,p)\ge \frac{b}{2}\right)$ is monotone along the odd and even subsequences of $b$, respectively.
This result is useful in bounding Equation \eqref{eq:lemma1_reason} in the proof of Theorem \ref{th:vanilla_beam}.
\begin{lemma}[Monotonicity in $b$ for odd/even subsequences]\label{lem:binom-monotone-odd-even}\label{small technical lemma}
Let $N>30$ and let $p:=1-\frac{4}{N}>\frac12$.
Define, for integers $b\ge 2$,
\[
F(b):=\Pr\!\left(\mathrm{Bin}(b-1,p)\ge \frac{b}{2}\right),
\]
where for odd $b$ the threshold is understood in the usual way:
$\{\mathrm{Bin}(b-1,p)\ge b/2\}=\{\mathrm{Bin}(b-1,p)\ge \lceil b/2\rceil\}$.
Then $F(b)$ is nondecreasing along the even subsequence $\{b=2,4,6,\ldots\}$
and also nondecreasing along the odd subsequence $\{b=3,5,7,\ldots\}$.
In particular,
\[
F(2)\le F(4)\le F(6)\le \cdots,
\qquad
F(3)\le F(5)\le F(7)\le \cdots.
\]
\end{lemma}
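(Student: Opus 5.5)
The plan is to compare consecutive terms of each subsequence by a coupling. Passing from $b$ to $b+2$ adds exactly two Bernoulli trials, and the threshold $\lceil b/2\rceil$ rises by exactly one. The only hypothesis I will use is $p>\tfrac12$, i.e.\ $p>q:=1-p$. This holds because $N>30$ gives $p=1-4/N>1-4/30>\tfrac12$.

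Write $S_n\sim\mathrm{Bin}(n,p)$ and realize $S_{n+2}=S_n+X+Y$, where $X,Y\sim\mathrm{Bernoulli}(p)$ are independent of $S_n$. For a threshold $k$, the events $\{S_{n+2}\ge k+1\}$ and $\{S_n\ge k\}$ differ only on two pieces. First, $S_n=k-1$ and $X=Y=1$, which is a gain of probability $p^2\Pr(S_n=k-1)$. Second, $S_n=k$ and $X=Y=0$, which is a loss of probability $q^2\Pr(S_n=k)$. Hence
\[
\Pr(S_{n+2}\ge k+1)-\Pr(S_n\ge k)=p^2\Pr(S_n=k-1)-q^2\Pr(S_n=k).
\]
It therefore suffices to show that the right-hand side is nonnegative. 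I will do this using the ratio $\Pr(S_n=k)/\Pr(S_n=k-1)=\frac{n-k+1}{k}\cdot\frac{p}{q}$.

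\emph{Even subsequence.} Take $b=2m$, so $n=2m-1$ and $k=m$. Then $F(2m)=\Pr(S_{2m-1}\ge m)$ and $F(2m+2)=\Pr(S_{2m+1}\ge m+1)$. Here $\frac{n-k+1}{k}=1$, so $\Pr(S_n=m)=\frac{p}{q}\Pr(S_n=m-1)$. The difference becomes
\[
\Pr(S_n=m-1)\,(p^2-pq)=\Pr(S_n=m-1)\,p(p-q)\ge 0 .
\]

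\emph{Odd subsequence.} Take $b=2m+1$, so $n=2m$ and $k=m+1$. Then $F(2m+1)=\Pr(S_{2m}\ge m+1)$ and $F(2m+3)=\Pr(S_{2m+2}\ge m+2)$. The ratio factor is now $\frac{m}{m+1}\cdot\frac{p}{q}$. The difference equals
\[
\Pr(S_n=m)\Bigl(p^2-\tfrac{m}{m+1}\,pq\Bigr)\ge \Pr(S_n=m)\,p(p-q)\ge 0 .
\]

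Chaining these one-step comparisons gives both displayed chains. I expect the only delicate point to be bookkeeping rather than analysis. One must check that the threshold $\lceil b/2\rceil$ increases by exactly one when $b$ increases by two, in both parities. One must also check the binomial-coefficient ratio in each parity. In the even case, $\binom{2m-1}{m}=\binom{2m-1}{m-1}$ makes the comparison sharp. In the odd case, the extra factor $\frac{m}{m+1}<1$ only helps.
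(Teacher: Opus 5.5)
Your proof is correct and follows essentially the same route as the paper: realize $S_{n+2}=S_n+X+Y$, isolate the two boundary events to obtain $p^2\Pr(S_n=k-1)-q^2\Pr(S_n=k)$, and sign this in each parity using the adjacent-pmf ratio, which is $p/q$ in the even case and $\frac{m}{m+1}\cdot\frac{p}{q}$ in the odd case. The only difference is cosmetic. You factor out $\Pr(S_n=k-1)$, while the paper factors out $\Pr(S_n=k)$, giving the equivalent forms $(1-p)(2p-1)$ and $(1-p)\bigl(\frac{k+1}{k}p-(1-p)\bigr)$.
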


\begin{proof}
Fix $p>\frac12$. We consider the following two cases.

\noindent\textbf{Even $b$.}
Let $b=2k$ with $k\ge 1$ and let $S_k\sim \mathrm{Bin}(2k-1,p)$. Then
$F(2k)=\Pr(S_k\ge k)$. We further have
\[
S_{k+1}=S_k+B_1+B_2,
\]
where $B_1,B_2\stackrel{i.i.d.}{\sim}\mathrm{Bern}(p)$ are independent of $S_k$.
A direct decomposition over $S_k\in\{k-1,k,\ge k+1\}$ gives
\[
F(2k+2)-F(2k)=-(1-p)^2\Pr(S_k=k)+p^2\Pr(S_k=k-1).
\]
For $S_k\sim\mathrm{Bin}(2k-1,p)$,
\[
\frac{\Pr(S_k=k)}{\Pr(S_k=k-1)}
=\frac{\binom{2k-1}{k}}{\binom{2k-1}{k-1}}\cdot\frac{p}{1-p}
=\frac{k}{k}\cdot\frac{p}{1-p}=\frac{p}{1-p}.
\]
Hence $\Pr(S_k=k-1)=\Pr(S_k=k)\frac{1-p}{p}$. Substituting yields
\[
F(2k+2)-F(2k)=\Pr(S_k=k)\,(1-p)\,(2p-1)\ge 0,
\]
which implies that $\{F(2k)\}_{k\ge 1}$ is nondecreasing.

\noindent\textbf{Odd $b$.}
Let $b=2k+1$ with $k\ge 1$ and let $T_k\sim \mathrm{Bin}(2k,p)$. Then
$F(2k+1)=\Pr(T_k\ge k+1)$. We then have
$T_{k+1}=T_k+B_1+B_2$ with the same $B_1,B_2$. Similarly,
\[
F(2k+3)-F(2k+1)=-(1-p)^2\Pr(T_k=k+1)+p^2\Pr(T_k=k).
\]
For $T_k\sim\mathrm{Bin}(2k,p)$,
\[
\frac{\Pr(T_k=k+1)}{\Pr(T_k=k)}
=\frac{\binom{2k}{k+1}}{\binom{2k}{k}}\cdot\frac{p}{1-p}
=\frac{k}{k+1}\cdot\frac{p}{1-p}.
\]
Hence, $\Pr(T_k=k)=\Pr(T_k=k+1)\frac{k+1}{k}\frac{1-p}{p}$. Therefore,
\[
F(2k+3)-F(2k+1)
=\Pr(T_k=k+1)\,(1-p)\left(\frac{k+1}{k}p-(1-p)\right).
\]
Since $p>\frac12$ implies $\frac{k+1}{k}p \ge p > 1-p$, the bracketed term is
positive. Hence, $F(2k+3)\ge F(2k+1)$ for all $k\ge 1$.
This proves monotonicity on both subsequences. Finally, the assumption $N>30$
ensures $p=1-\frac4N>\frac12$, and hence the above applies.
\end{proof}

\section{Proof of Proposition \ref{prop:necessity}}\label{appendix: proof of proposition 1}
\begin{proof}[Proof of Proposition \ref{prop:necessity}]
We exhibit an instance with horizon $L=1$ that violates
Assumption~\ref{assump:prefix} and bound the survival probability of
$y^\star$ via Markov's inequality.

\textbf{Construction.}
Let $\mathcal{A}=\{a^\star, a_1, \dots, a_{|\mathcal{A}|-1}\}$ and define
\begin{equation*}
\pi_{\mathrm{ref}}(a^\star\mid x)=\tfrac{1}{8},\qquad
\pi_{\mathrm{ref}}(a_1\mid x)=\tfrac{7}{8},\qquad
\pi_{\mathrm{ref}}(a_i\mid x)=0\;\;\text{for }i\ge 2.
\end{equation*}
Set the ground-truth reward as $r^\star(x,a^\star)=1$ and
$r^\star(x,a)=0$ for all $a\neq a^\star$, so that $y^\star=a^\star$ is
the unique optimal response.

\textbf{Violation of Assumption~\ref{assump:prefix}.}
At depth $t=1$, the suboptimal prefix $s=a_1\neq s_1^\star=a^\star$ satisfies
\begin{equation*}
\log\!\left(\frac{\pi_{\mathrm{ref}}(a^\star\mid x)}
                  {\pi_{\mathrm{ref}}(a_1\mid x)}\right)
=\log\frac{1/8}{7/8}=-\log 7<0,
\end{equation*}
so no $\kappa>0$ can satisfy the prefix-competitiveness condition on this instance.

\textbf{Survival probability.}
Fix any $N\ge 1$. Let
\begin{equation*}
X^\star := N\,\hat\pi(a^\star\mid x)\sim\mathrm{Bin}\!\big(N,\tfrac{1}{8}\big),
\qquad
X_1     := N\,\hat\pi(a_1\mid x)\sim\mathrm{Bin}\!\big(N,\tfrac{7}{8}\big).
\end{equation*}
Since $a^\star$ and $a_1$ are the only tokens with positive mass under
$\pi_{\mathrm{ref}}(\cdot\mid x)$, we have $X^\star+X_1=N$ almost surely.
With \(b=1\), the algorithm retains the sampled candidate with the largest
score \(Q(a)=Q(s_0)+\log \hat\pi(a\mid x)=\log \hat\pi(a\mid x)\).
Since the logarithm is strictly increasing, this is equivalent to retaining
the candidate with the largest empirical frequency \(\hat\pi(a\mid x)\).
Hence \(y^\star=a^\star\) is retained in the final beam only if
\(\hat\pi(a^\star\mid x)\ge \hat\pi(a_1\mid x)\), equivalently
\(X^\star\ge X_1\). Since \(X^\star+X_1=N\), this is equivalent to
\(X^\star\ge N/2\).

Applying Markov's inequality to the nonnegative
random variable $X^\star$ with $\mathbb{E}[X^\star]=N/8$,
\begin{equation*}
\Pr\!\left(X^\star\ge\tfrac{N}{2}\right)
\;\le\;\frac{\mathbb{E}[X^\star]}{N/2}
\;=\;\frac{N/8}{N/2}\;=\;\frac{1}{4}.
\end{equation*}
Consequently,
\begin{equation*}
\Pr(y^\star\text{ retained in the final beam})
\;\le\;\Pr\!\left(X^\star\ge\tfrac{N}{2}\right)
\;\le\;\frac{1}{4},
\end{equation*}
so the probability of pruning $y^\star$ is at least
$1-\tfrac{1}{4}=\tfrac{3}{4}$, completing the proof.
\end{proof}

\section{Regret Upper Bound: Proof of Theorem \ref{main theorem} and Corollary \ref{corollary: bs-mv alg}}\label{Appendix: proof of main theorem}

\subsection{Technical Lemmas: Filtering and Beam Survival}\label{Appd: Upper-techlem}
We first control the distribution of the final output and the probability of filtering out an optimal action. We then bound moments of filtered score differences and use them to control the rank of the optimal prefix. The binomial moment inequalities used below are proved in Lemma~\ref{lem:truncated-binomial-moments} in the Auxiliary Lemmas section.

Fix $x$ throughout this section. As in Definition~\ref{def: good event}, independently generate an $N$-sample histogram at every prefix of depth less than $L$ and reveal it only if the algorithm visits that prefix. Along a fixed path, these histograms belong to distinct nodes and are independent; paths with shared ancestors still have dependent scores. For a prefix $s_t=(a_1,\ldots,a_t)$, write $\mathcal F(s_t):=\{\hat\pi(a_i\mid s_{i-1},x)\ge\beta_i,\ i\in[t]\}$. On this event, $Q(s_t)=\sum_{i=1}^t\log\hat\pi(a_i\mid s_{i-1},x)$ is finite. Expressions involving inverse empirical probabilities or differences of scores are defined to be zero outside the indicated filtering events. In particular, $\mathcal E_\beta=\mathcal F(s_L^\star)$. This construction is used only for analysis and does not change the algorithm's queries.

\begin{lemma}[Output probabilities under confidence filtering]\label{lem:filtered-output}
For any thresholds $\beta_i\in(0,1]$ and any fixed $y\in\mathcal A^L$, the output $y^{\mathrm{beam}}$ of CF-Beam satisfies
\[
\Pr(y^{\mathrm{beam}}=y,\mathcal E)
\le \pi_{\mathrm{ref}}(y\mid x)\prod_{i=1}^L\beta_i^{-1}.
\]
\end{lemma}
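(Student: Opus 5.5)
The plan is to drop most of the event and keep only the part that forces $y$ to pass the filter along its whole path. If $y^{\mathrm{beam}}=y$, then $y$ was a candidate at every depth. Hence every prefix $s_i$ of $y$ was expanded, and its child $a_i$ passed confidence filtering, i.e. $\hat\pi(a_i\mid s_{i-1},x)\ge\beta_i$ for all $i\in[L]$. So $\{y^{\mathrm{beam}}=y\}\cap\mathcal E\subseteq\mathcal F(y)$, where $\mathcal F(y)=\mathcal F(s_L)$ is the filtering event along the fixed path $y$ in the pre-sampled histogram coupling. It therefore suffices to show
\[
\Pr(\mathcal F(y))\le\pi_{\mathrm{ref}}(y\mid x)\prod_{i=1}^L\beta_i^{-1}.
\]
The event $\mathcal E$ is not needed beyond this inclusion.

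Under the coupling, the histograms at the distinct nodes $s_0,s_1,\dots,s_{L-1}$ along $y$ are mutually independent. This holds whether or not the algorithm visits them, since each histogram is drawn in advance. Therefore
\[
\Pr(\mathcal F(y))=\prod_{i=1}^L\Pr\bigl(\hat\pi(a_i\mid s_{i-1},x)\ge\beta_i\bigr).
\]
For each factor, $N\hat\pi(a_i\mid s_{i-1},x)\sim\mathrm{Bin}(N,p_i)$ with $p_i=\pi_{\mathrm{ref}}(a_i\mid s_{i-1},x)$. Markov's inequality then gives
\[
\Pr(\hat\pi\ge\beta_i)\le\mathbb E[\hat\pi]/\beta_i=p_i/\beta_i,
\]
using $\beta_i>0$. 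Multiplying these bounds and using the autoregressive factorization $\prod_i p_i=\pi_{\mathrm{ref}}(y\mid x)$ yields the claim. If some $p_i=0$, the factor is zero and the bound is trivial.

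The only delicate point is justifying the inclusion and the independence. Once the algorithm has fixed which prefixes to expand, the event "$y$ is output" depends on data that include other nodes' histograms. The inclusion into $\mathcal F(y)$ sidesteps this, because $\mathcal F(y)$ involves only the $L$ histograms on the path of $y$, and these are independent by construction. I would also note that ties, empty beams, and the final reward-based selection play no role, since they only shrink the event.
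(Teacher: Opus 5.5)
Your proposal is correct and follows the paper's proof essentially step for step: the inclusion $\{y^{\mathrm{beam}}=y\}\cap\mathcal E\subseteq\mathcal F(y)$, then independence of the pre-sampled histograms along the fixed path, then Markov's inequality with unbiasedness of each empirical frequency. One small correction: it is $\mathcal E$ that justifies the inclusion, because on $\mathcal E$ the final beam contains $y^\star$ and so is nonempty. Without $\mathcal E$, the empty-beam fallback could output a fixed $y$ that never passed filtering, so empty beams do not merely shrink the event.
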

\begin{proof}
On $\mathcal E$, the final beam is nonempty, and any output $y=(a_1,\ldots,a_L)$ has passed filtering along all its edges. Hence
\begin{align*}
\Pr(y^{\mathrm{beam}}=y,\mathcal E)
&\le\Pr(\mathcal F(y))
=\prod_{i=1}^L\Pr\bigl(\hat\pi(a_i\mid s_{i-1},x)\ge\beta_i\bigr)\\
&\le\prod_{i=1}^L\frac{\mathbb E[\hat\pi(a_i\mid s_{i-1},x)]}{\beta_i}
=\pi_{\mathrm{ref}}(y\mid x)\prod_{i=1}^L\beta_i^{-1}.
\end{align*}
The equality uses independence of the histograms along the fixed path, and the inequality follows from Markov's inequality and unbiasedness of each empirical frequency. The argument does not condition the histograms on the output-selection event.
\end{proof}

\begin{lemma}[Optimal actions pass filtering]\label{lem:good_event}
For $\beta_t=\alpha/C_t^\star(x)$ with $0<\alpha<1$,
\[
\Pr(\mathcal E_\beta^c)\le L\exp\!\left(-\frac{N(1-\alpha)^2}{2C_{\max}^{\star}}\right).
\]
\end{lemma}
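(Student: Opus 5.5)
The plan is to work entirely with the pre-sampled histograms along the optimal path. Under the coupling of Definition~\ref{def: good event}, the count $K_t:=N\hat\pi(a_t^\star\mid s_{t-1}^\star,x)$ is well defined at every depth, whether or not the optimal prefix survives. Each $K_t$ is distributed as $\mathrm{Bin}(N,p_t)$ with $p_t:=1/C_t^\star(x)$.

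Since $\mathcal E_\beta=\bigcap_{t}\{K_t\ge N\beta_t\}$, a union bound over $t\in[L]$ gives
\[
\Pr(\mathcal E_\beta^c)\le\sum_{t=1}^L\Pr\bigl(K_t<\alpha Np_t\bigr).
\]
This step needs no independence across depths; the union bound alone suffices. It therefore reduces the claim to a single-depth lower-tail estimate.

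For each $t$ I will apply the multiplicative Chernoff lower-tail bound. For $X\sim\mathrm{Bin}(N,p)$ with mean $\mu=Np$ and $\delta=1-\alpha\in(0,1)$, it states
\[
\Pr\bigl(X<(1-\delta)\mu\bigr)\le\exp\!\left(-\frac{\delta^2\mu}{2}\right).
\]
I will derive it in the standard way: apply Markov's inequality to $e^{-\lambda X}$, use $1-p(1-e^{-\lambda})\le\exp(-p(1-e^{-\lambda}))$, and optimize at $\lambda=-\log(1-\delta)$. This gives the bound $\exp(-\mu[(1-\delta)\log(1-\delta)+\delta])$. The elementary inequality $(1-\delta)\log(1-\delta)\ge-\delta+\delta^2/2$ on $[0,1)$ then finishes it; one checks this by noting both sides vanish at $0$ and comparing derivatives.

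With $\mu=N/C_t^\star(x)$, the single-depth bound becomes $\exp(-N(1-\alpha)^2/(2C_t^\star))$. Since $C_t^\star\le C_{\max}^\star$, each term is at most $\exp(-N(1-\alpha)^2/(2C_{\max}^\star))$. Summing over the $L$ depths gives the claim.

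I expect no real obstacle. The only care points are that the strict versus non-strict inequality in the filtering event matches the form of the Chernoff bound, and that the coupling makes $K_t$ binomial without conditioning on survival. Checking the elementary logarithmic inequality is routine.
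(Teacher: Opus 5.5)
Your proposal is correct and follows the paper's proof. Both arguments take the pre-sampled count along the optimal path as $\mathrm{Bin}(N,1/C_t^\star(x))$, bound each depth by the multiplicative Chernoff lower tail with relative deviation $1-\alpha$ (the event $\{K_t<\alpha N/C_t^\star(x)\}$ sits inside the non-strict Chernoff event), and finish with a union bound over $t\in[L]$ and $C_t^\star\le C_{\max}^\star$. The only difference is that you derive the Chernoff bound from the moment generating function instead of citing Lemma~\ref{Chernoff Inequality}, and that derivation is valid.
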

\begin{proof}
For each $t$, the count $N\hat\pi(a_t^\star\mid s_{t-1}^\star,x)$ has distribution $\mathrm{Bin}(N,1/C_t^\star(x))$. Applying the lower-tail Chernoff bound in Lemma~\ref{Chernoff Inequality} with relative deviation $1-\alpha$, and then a union bound, gives
\begin{align*}
\Pr(\mathcal E_\beta^c)
&\le\sum_{t=1}^L\Pr\left(N\hat\pi(a_t^\star\mid s_{t-1}^\star,x)<\frac{\alpha N}{C_t^\star(x)}\right)\\
&\le\sum_{t=1}^L\exp\!\left(-\frac{N(1-\alpha)^2}{2C_t^\star(x)}\right)
\le L\exp\!\left(-\frac{N(1-\alpha)^2}{2C_{\max}^{\star}}\right).
\end{align*}
\end{proof}

\begin{lemma}[Moments of filtered score differences]\label{QQ comparison lemma}
Let $\beta_i=\alpha/C_i^\star(x)$ with $1/2\le\alpha<1$. For every $s_t\ne s_t^\star$ and integer $1\le r\le N\alpha/(4C_{\max}^{\star})$,
\[
\mathbb E\!\left[e^{r(Q(s_t)-Q(s_t^\star))}
\mathbf1_{\mathcal F(s_t)\cap\mathcal F(s_t^\star)}\right]
\le\left(\frac{\pi_{\mathrm{ref}}(s_t\mid x)}{\pi_{\mathrm{ref}}(s_t^\star\mid x)}\right)^r
\exp\!\left(\frac{3tC_{\max}^{\star}r^2}{N\alpha}\right).
\]
\end{lemma}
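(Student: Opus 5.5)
Write $s_t=(a_1,\dots,a_t)$ and $s_t^\star=(a_1^\star,\dots,a_t^\star)$, and let $K_i:=N\hat\pi(a_i\mid s_{i-1},x)$ and $K_i^\star:=N\hat\pi(a_i^\star\mid s_{i-1}^\star,x)$. On $\mathcal F(s_t)\cap\mathcal F(s_t^\star)$ the integrand factorizes as
\[
e^{r(Q(s_t)-Q(s_t^\star))}=\prod_{i=1}^t\Bigl(\frac{K_i}{K_i^\star}\Bigr)^{r}.
\]
The plan is to reduce to per-step moment bounds for truncated binomials, namely
\[
\mathbb E\bigl[K^r\bigr]\le (Np)^r e^{c_1 r^2/(Np)},
\qquad
\mathbb E\bigl[K^{-r}\mathbf 1\{K\ge \alpha Np\}\bigr]\le (Np)^{-r}e^{c_2 r^2/(Np\alpha)}
\]
for $K\sim\mathrm{Bin}(N,p)$. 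These are presumably what Lemma~\ref{lem:truncated-binomial-moments} supplies. The positive-moment bound applied to a filtered suboptimal action needs $p$ not too small; I discuss this below.

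The main difficulty is dependence. Once the paths diverge they use distinct histograms, but on the shared prefix $a_{1:j}=a^\star_{1:j}$, where $j$ is the first index with $a_{j+1}\neq a_{j+1}^\star$, the ratios cancel exactly. At the divergence step $j+1$, both $K_{j+1}$ and $K_{j+1}^\star$ come from the same histogram at $s_j^\star$ and are dependent, since they are multinomial counts. Beyond step $j+1$ the two paths sit at distinct prefixes, so all remaining factors are independent across nodes. I would therefore write the expectation as a product over $i>j+1$ of $\mathbb E[(K_i^\star)^{-r}\mathbf 1_{\cdot}]$ times $\mathbb E[K_i^r\mathbf 1_{\cdot}]$, and handle step $j+1$ separately. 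At that step, multinomial counts of distinct categories are negatively associated; more concretely, conditioning on $K_{j+1}$ makes $K_{j+1}^\star$ a $\mathrm{Bin}(N-K_{j+1},\cdot)$. I would try to show that the joint expectation is at most the product of the marginal expectations. Monotonicity gives this directly: $K^r$ is increasing, $K^{\star-r}\mathbf 1\{K^\star\ge \cdot\}$ is essentially decreasing in $K^\star$, and negative association together with opposite monotonicity yields the product inequality. The truncation indicator spoils exact monotonicity, so I would handle that piece with a small explicit fix.

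For the suboptimal factors, I would bound $\mathbb E[K_i^r\mathbf 1\{K_i\ge\beta_iN\}]$. When $p_i:=\pi_{\rm ref}(a_i\mid s_{i-1},x)$ is tiny, I would use $K_i\ge \alpha N/C_i^\star$ together with the standard bound
\[
\mathbb E\bigl[K^r\bigr]\le (Np)^r\exp\!\Bigl(\binom r2/(Np)\Bigr)\le \max\{Np,r\}^r\cdot(\text{const})^r.
\]
A uniform bound of the form $(Np_i)^r e^{c r^2 C_{\max}^\star/(N\alpha)}$ should then follow using $r\le N\alpha/(4C^\star_{\max})$. If $p_i$ is very small, the truncation is wasteful; the crude bound instead uses $\mathbb E[K^r]\le \sum_k \binom{N}{k}p^k k^r$, which is dominated by $(Np)\,r^r$-type terms. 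I expect this tail regime to be the main obstacle. The cleanest route I would try first is the factorial-moment identity $\mathbb E[(K)_m]=(N)_m p^m$ combined with Stirling-number expansions, giving $\mathbb E[K^r]\le (Np)^r e^{r^2/(2Np)}$ when $Np\ge r$. When $Np<r$, I would use $K\ge \alpha N/C_{\max}^\star\ge 4r$ on the filter event to insert the factor $(K/(\alpha N/C_{\max}^\star))^{\ell}$ and trade it for probability mass.

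The optimal factors use $p=1/C_i^\star$ with truncation at $\alpha Np$, so $Np\ge N/C_{\max}^\star\ge 4r/\alpha$ and the inverse-moment bound gives $(N/C_i^\star)^{-r}e^{c r^2 C_{\max}^\star/(N\alpha)}$. Multiplying the at most $t$ factors turns the per-step ratios $(p_i C_i^\star)^r$ into $(\pi_{\rm ref}(s_t\mid x)/\pi_{\rm ref}(s_t^\star\mid x))^r$. Summing the constants $c_1+c_2\le 3$ per step yields the exponent $3tC_{\max}^\star r^2/(N\alpha)$.
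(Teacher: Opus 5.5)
\textbf{Gap: the divergence-step decoupling uses an inequality with the wrong sign.} Negative association of multinomial counts gives $\mathrm{Cov}(f(K),g(K^\star))\le 0$ only when $f$ and $g$ are \emph{both} nondecreasing, or both nonincreasing. In your setting $f(K)=K^r$ is increasing and $g(K^\star)=(K^\star)^{-r}$ is decreasing. Applying negative association to $f$ and $-g$ therefore gives $\mathrm{Cov}(f(K),g(K^\star))\ge 0$, that is, $\mathbb E[K^r(K^\star)^{-r}]\ge\mathbb E[K^r]\,\mathbb E[(K^\star)^{-r}]$. This is the opposite of what you need.

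This is a real effect, not a weakness of the argument. A large count for $a_{j+1}$ forces a small count for $a^\star_{j+1}$, so the numerator and the inverse denominator rise together. Concretely, take two tokens with probabilities $0.4$ and $0.6$ and $\alpha=1/2$. Then $K^\star=N-K$, and both filters are passed with overwhelming probability. A second-order expansion shows the joint moment exceeds the product of marginals by a factor of about $1+r^2/N$. The excess is the cross term $-r^2\,\mathbb E[UV]=r^2/N$, where $U,V$ are the relative fluctuations of $K$ and $K^\star$. The truncation indicators do not rescue this; they only make $g$ nonmonotone. So ``joint $\le$ product of marginals'' is false in general, and as written your plan does not control the shared histogram at $s_j^\star$.

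\textbf{The missing idea and the paper's route.} You need a decoupling that is indifferent to the sign of the dependence. The paper applies Cauchy--Schwarz to the whole path, bounding the target by $\bigl(\mathbb E[e^{2rQ(s_t)}\mathbf 1_{\mathcal F(s_t)}]\,\mathbb E[e^{-2rQ(s_t^\star)}\mathbf 1_{\mathcal F(s_t^\star)}]\bigr)^{1/2}$. Each factor then involves histograms along a single path, which are independent, so it factorizes into per-edge truncated moments of order $2r$. The doubled order explains the hypothesis $r\le N\alpha/(4C^\star_{\max})$: it ensures $2r\le m_i/2$ for the filter level $m_i=\lceil N\beta_i\rceil\ge N\alpha/C^\star_{\max}$. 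It is also where the constant $3$ comes from.

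The rest of your structure is sound: cancellation on the shared prefix, independence after divergence, and inverse moments on optimal edges. You can keep it by applying Cauchy--Schwarz only at the divergence step, which still fits the stated exponent.

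\textbf{Secondary point: small-$p$ positive moments.} Your positive-moment bound $(Np)^re^{c_1r^2/(Np)}$ degrades as $p\to0$, and your fix for that regime is left unspecified. The right tool is a pointwise inequality on the filter event. On $\{X\ge m\}$ with $m\ge q$, one has $X^q\le (X)_q\,e^{q(q-1)/(2(m-q+1))}$. Combine this with $\mathbb E[(X)_q]=(N)_qp^q\le (Np)^q$. The resulting constant depends only on the filter level $m$, not on $Np$, so every suboptimal edge is handled uniformly and no separate tail regime arises.
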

\begin{proof}
For each depth $i$, let $K_i=\lceil N\beta_i\rceil\ge N\alpha/C_{\max}^{\star}$, so $2r\le K_i/2$. Apply the positive and inverse moment bounds of Lemma~\ref{lem:truncated-binomial-moments} with order $2r$ to each edge. Independence along each fixed path yields
\begin{align*}
\mathbb E\!\left[e^{2rQ(s_t)}\mathbf1_{\mathcal F(s_t)}\right]
&\le\pi_{\mathrm{ref}}(s_t\mid x)^{2r}
\exp\!\left(\sum_{i=1}^t\frac{(2r)(2r-1)}{2(K_i-2r+1)}\right),\\
\mathbb E\!\left[e^{-2rQ(s_t^\star)}\mathbf1_{\mathcal F(s_t^\star)}\right]
&\le\pi_{\mathrm{ref}}(s_t^\star\mid x)^{-2r}
\exp\!\left(\sum_{i=1}^t\frac{(2r)(2r+1)}{2K_i}\right).
\end{align*}
All optimal edge probabilities are positive by the finiteness of $C_{\max}^{\star}$. A competing path of true probability zero has zero probability of passing filtering, so its contribution is zero. Cauchy--Schwarz now gives
\begin{align*}
&\mathbb E\!\left[e^{r(Q(s_t)-Q(s_t^\star))}
\mathbf1_{\mathcal F(s_t)\cap\mathcal F(s_t^\star)}\right]\\
&\quad\le
\left(\mathbb E[e^{2rQ(s_t)}\mathbf1_{\mathcal F(s_t)}]
\mathbb E[e^{-2rQ(s_t^\star)}\mathbf1_{\mathcal F(s_t^\star)}]\right)^{1/2}\\
&\quad\le
\left(\frac{\pi_{\mathrm{ref}}(s_t\mid x)}{\pi_{\mathrm{ref}}(s_t^\star\mid x)}\right)^r
\exp\!\left[\sum_{i=1}^t\left(
\frac{(2r)(2r-1)}{4(K_i-2r+1)}+
\frac{(2r)(2r+1)}{4K_i}\right)\right]\\
&\quad\le
\left(\frac{\pi_{\mathrm{ref}}(s_t\mid x)}{\pi_{\mathrm{ref}}(s_t^\star\mid x)}\right)^r
\exp\!\left(\sum_{i=1}^t\frac{3r^2}{K_i}\right)
\le
\left(\frac{\pi_{\mathrm{ref}}(s_t\mid x)}{\pi_{\mathrm{ref}}(s_t^\star\mid x)}\right)^r
\exp\!\left(\frac{3tC_{\max}^{\star}r^2}{N\alpha}\right).
\end{align*}
For the penultimate inequality, $K_i-2r+1\ge K_i/2$ bounds the two terms by $(2r^2-r)/K_i$ and $(r^2+r/2)/K_i$, respectively. In particular, Cauchy--Schwarz controls the common optimal score and shared-prefix randomness without requiring independent path scores.
\end{proof}

\begin{lemma}[Survival under top-$b$ pruning]\label{lem: E_b}
Suppose Assumption~\ref{assump:prefix} holds, $\beta_i=\alpha/C_i^\star(x)$ with $1/2\le\alpha<1$, and
$N\ge\frac{48C_{\max}^{\star}}{\kappa^2}\max\{2,\log(2C_{\max}^{\star})\}$.
Then
\[
\Pr(\mathcal E_\beta\cap\mathcal E_b^c)
\le\frac{2C_{\max}^{\star}}{b}
\exp\!\left(-\frac{\kappa^2N}{48C_{\max}^{\star}}\right).
\]
More precisely, let $Z_t$ count the filtered nonoptimal prefixes in the full depth-$t$ tree whose scores are at least $Q(s_t^\star)$, and set $Z_t=0$ if $\mathcal F(s_t^\star)$ fails. For every integer $m\ge1$,
\begin{equation}
\Pr(Z_t\ge m)\le\frac1m
\left[C_{\max}^{\star}\exp\!\left(-\frac{\kappa^2N}{48C_{\max}^{\star}}\right)\right]^t.
\label{eq:full-tree-rank}
\end{equation}
\end{lemma}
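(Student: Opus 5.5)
The plan is to prove the full-tree bound \eqref{eq:full-tree-rank} first and then deduce the beam-survival bound from it.

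\textbf{Step 1: the full-tree bound.} Fix a depth $t$. By linearity, since any $z\ge0$ satisfies $\mathbf 1\{z\ge0\}\le e^{rz}$ for $r>0$,
\[
\mathbb E[Z_t]=\sum_{s_t\ne s_t^\star}\Pr\bigl(\mathcal F(s_t)\cap\mathcal F(s_t^\star),\,Q(s_t)\ge Q(s_t^\star)\bigr)
\le\sum_{s_t\ne s_t^\star}\mathbb E\!\left[e^{r(Q(s_t)-Q(s_t^\star))}\mathbf 1_{\mathcal F(s_t)\cap\mathcal F(s_t^\star)}\right].
\]
I would then apply Lemma~\ref{QQ comparison lemma} to each term, which gives
\[
\mathbb E[Z_t]\le\sum_{s_t\ne s_t^\star}\left(\frac{\pi_{\mathrm{ref}}(s_t\mid x)}{\pi_{\mathrm{ref}}(s_t^\star\mid x)}\right)^r\exp\!\left(\frac{3tC^\star_{\max}r^2}{N\alpha}\right).
\]
The goal is to sum over exponentially many competitors using probability mass rather than their count. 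Write the ratio to the $r$-th power as the ratio times the ratio to the $(r-1)$-th power.
\begin{itemize}
\item Assumption~\ref{assump:prefix} bounds the ratio by $e^{-\kappa t}$, so the $(r-1)$-th power is at most $e^{-\kappa t(r-1)}$.
\item The remaining single ratio sums to at most $1/\pi_{\mathrm{ref}}(s_t^\star\mid x)\le (C^\star_{\max})^t$.
\end{itemize}
This yields
\[
\mathbb E[Z_t]\le\Bigl[C^\star_{\max}\exp\bigl(-\kappa(r-1)+3C^\star_{\max}r^2/(N\alpha)\bigr)\Bigr]^t.
\]

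\textbf{Step 2: choosing $r$ (the main obstacle).} The exponent must become $-\kappa^2N/(48C^\star_{\max})$ while respecting the constraint $r\le N\alpha/(4C^\star_{\max})$ from Lemma~\ref{QQ comparison lemma}. I would take $r=\lfloor \kappa N\alpha/(6C^\star_{\max})\rfloor$, which is admissible since $\kappa\le1$. This makes $3C^\star_{\max}r^2/(N\alpha)\le \kappa r/2$, so the exponent is at most $-\kappa r/2+\kappa$.

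With $\alpha\ge1/2$, this gives roughly $-\kappa^2N/(24C^\star_{\max})+\kappa$. The additive $\kappa$ and the floor loss are absorbed into the factor $2$ slack (giving $48$) using $N\ge 96C^\star_{\max}/\kappa^2$. That condition also guarantees $r\ge1$. The condition $N\ge 48C^\star_{\max}\log(2C^\star_{\max})/\kappa^2$ is reserved for Step 3.

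Markov's inequality $\Pr(Z_t\ge m)\le\mathbb E[Z_t]/m$ then gives \eqref{eq:full-tree-rank}. The care needed here is only in the constants.

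\textbf{Step 3: beam survival.} On $\mathcal E_\beta\cap\mathcal E_b^c$, let $t$ be the first depth at which $s_t^\star\in\mathcal D_t$ but its rank exceeds $b$. Then at least $b$ candidates in $\mathcal D_t\subseteq$ (filtered depth-$t$ tree) have score at least $Q(s_t^\star)$, so $Z_t\ge b$; ties are handled by the fixed tie-breaking rule, and counting $\ge$ covers them. A union bound over $t$ with $m=b$ gives
\[
\Pr(\mathcal E_\beta\cap\mathcal E_b^c)\le\frac1b\sum_{t\ge1}q^t,\qquad q:=C^\star_{\max}e^{-\kappa^2N/(48C^\star_{\max})}.
\]
The sample-size condition $N\ge 48C^\star_{\max}\log(2C^\star_{\max})/\kappa^2$ gives $q\le1/2$. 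The geometric sum is then at most $2q$, which is the claimed bound.
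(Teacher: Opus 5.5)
Your proposal is correct and matches the paper's proof essentially step for step. It uses the same bound $\mathbf 1\{z\ge0\}\le e^{rz}$ to invoke Lemma~\ref{QQ comparison lemma}, and the same split of the $r$-th power ratio into a single ratio summed via total probability mass times an $(r-1)$-th power controlled by Assumption~\ref{assump:prefix}. It also uses the same choice $r=\lfloor N\alpha\kappa/(6C_{\max}^{\star})\rfloor$ with the floor loss absorbed through $N\ge 96C_{\max}^{\star}/\kappa^2$ and $\kappa\le 1$, then Markov's inequality, and finally a union bound over depths with a geometric sum using $C_{\max}^{\star}e^{-\kappa^2N/(48C_{\max}^{\star})}\le 1/2$.
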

\begin{proof}
For any admissible integer $r\ge1$ in Lemma~\ref{QQ comparison lemma}, each competitor counted by $Z_t$ contributes at least one to the corresponding likelihood-ratio moment. Since the depth-$t$ tree is finite, summing the lemma gives
\begin{align*}
\mathbb E[Z_t]
&\le\sum_{s_t\ne s_t^\star}
\mathbb E\!\left[e^{r(Q(s_t)-Q(s_t^\star))}
\mathbf1_{\mathcal F(s_t)\cap\mathcal F(s_t^\star)}\right]\\
&\le\exp\!\left(\frac{3tC_{\max}^{\star}r^2}{N\alpha}\right)
\sum_{s_t\ne s_t^\star}
\left(\frac{\pi_{\mathrm{ref}}(s_t\mid x)}{\pi_{\mathrm{ref}}(s_t^\star\mid x)}\right)^r\\
&\le\exp\!\left(\frac{3tC_{\max}^{\star}r^2}{N\alpha}-(r-1)t\kappa\right)
\frac{\sum_{s_t\ne s_t^\star}\pi_{\mathrm{ref}}(s_t\mid x)}{\pi_{\mathrm{ref}}(s_t^\star\mid x)}\\
&\le\left[C_{\max}^{\star}
\exp\!\left(-(r-1)\kappa+\frac{3C_{\max}^{\star}r^2}{N\alpha}\right)\right]^t.
\end{align*}
The third line uses Assumption~\ref{assump:prefix}, which implies $\pi_{\mathrm{ref}}(s_t\mid x)\le e^{-t\kappa}\pi_{\mathrm{ref}}(s_t^\star\mid x)$, and $r-1\ge0$. The last line uses total prefix probability at most one and $\pi_{\mathrm{ref}}(s_t^\star\mid x)^{-1}=\prod_{i=1}^t C_i^\star(x)\le(C_{\max}^{\star})^t$. Thus the sum is controlled by true probability mass rather than by the number of adaptively selected candidates.

Choose $r=\lfloor N\alpha\kappa/(6C_{\max}^{\star})\rfloor$. The sample-size condition and $1/2\le\alpha<1$, $0<\kappa\le1$ imply $N\alpha\kappa^2/C_{\max}^{\star}\ge48$ and $1\le r\le N\alpha/(4C_{\max}^{\star})$. Using the two bounds on the floor function,
\begin{align*}
-(r-1)\kappa+\frac{3C_{\max}^{\star}r^2}{N\alpha}
&\le-\frac{N\alpha\kappa^2}{12C_{\max}^{\star}}+2\kappa\le-\frac{N\alpha\kappa^2}{24C_{\max}^{\star}}
\le-\frac{N\kappa^2}{48C_{\max}^{\star}}.
\end{align*}
The middle inequality follows from $N\alpha\kappa^2/C_{\max}^{\star}\ge48\ge48\kappa$. Markov's inequality applied to $Z_t$ proves \eqref{eq:full-tree-rank}.

On $\mathcal E_\beta$, if the optimal path is first discarded by beam pruning at depth $t$, it is a generated candidate and at least $b$ other generated candidates have scores at least its score. These candidates are among the full-tree filtered prefixes, so $Z_t\ge b$. Counting all ties against the optimal prefix makes this implication valid for any tie-breaking rule. Since the sample-size condition also gives $C_{\max}^{\star}e^{-N\kappa^2/(48C_{\max}^{\star})}\le1/2$, a union bound and a geometric sum yield
\begin{align*}
\Pr(\mathcal E_\beta\cap\mathcal E_b^c)
&\le\sum_{t=1}^L\Pr(Z_t\ge b)
\le\frac1b\sum_{t=1}^L
\left[C_{\max}^{\star}e^{-N\kappa^2/(48C_{\max}^{\star})}\right]^t\\
&\le\frac{C_{\max}^{\star}e^{-N\kappa^2/(48C_{\max}^{\star})}}
{b\left(1-C_{\max}^{\star}e^{-N\kappa^2/(48C_{\max}^{\star})}\right)}
\le\frac{2C_{\max}^{\star}}{b}e^{-N\kappa^2/(48C_{\max}^{\star})}.
\end{align*}
This bounds the intersection with $\mathcal E_\beta$ directly; no concentration inequality is applied after conditioning on a future good event.
\end{proof}

\subsection{Proof of Theorem \ref{main theorem}}
\begin{proof}
For $L\ge2$, the threshold multiplier $1-1/L$ lies in $[1/2,1)$. Applying Lemmas~\ref{lem:good_event} and~\ref{lem: E_b} with $\alpha=1-1/L$ gives
\begin{align*}
\Pr(\mathcal E^c)
&=\Pr(\mathcal E_\beta^c)+\Pr(\mathcal E_\beta\cap\mathcal E_b^c)\\
&\le L\exp\!\left(-\frac{N}{2C_{\max}^{\star}L^2}\right)
+\frac{2C_{\max}^{\star}}{b}\exp\!\left(-\frac{\kappa^2N}{48C_{\max}^{\star}}\right).
\end{align*}
On $\mathcal E$, induction over the depths gives $y^\star\in\mathcal B_L$, so the final selection satisfies $\hat r(x,y^{\mathrm{beam}})\ge\hat r(x,y^\star)$. On $\mathcal E^c$, the regret is at most one because rewards lie in $[0,1]$. Consequently,
\begin{align*}
\mathrm{Reg}(x)
&\le\Pr(\mathcal E^c)
+\mathbb E\!\Bigl[\mathbf1_{\mathcal E}
\bigl(r^\star(x,y^\star)-\hat r(x,y^\star)
+\hat r(x,y^\star)-\hat r(x,y^{\mathrm{beam}})\\
&\hspace{10em}+\hat r(x,y^{\mathrm{beam}})-r^\star(x,y^{\mathrm{beam}})\bigr)\Bigr]\\
&\le\Pr(\mathcal E^c)+\epsilon_{\mathrm{opt}}(x)
+\mathbb E\!\left[\mathbf1_{\mathcal E}
\bigl|\hat r(x,y^{\mathrm{beam}})-r^\star(x,y^{\mathrm{beam}})\bigr|\right]\\
&\le\Pr(\mathcal E^c)+\epsilon_{\mathrm{opt}}(x)
+\left(\sum_{y\in\mathcal A^L}\Pr(y^{\mathrm{beam}}=y,\mathcal E)
\bigl(\hat r(x,y)-r^\star(x,y)\bigr)^2\right)^{1/2}\\
&\le\Pr(\mathcal E^c)+\epsilon_{\mathrm{opt}}(x)
+\left[\left(1-\frac1L\right)^{-L}\overline C(x)
\sum_{y\in\mathcal A^L}\pi_{\mathrm{ref}}(y\mid x)
\bigl(\hat r(x,y)-r^\star(x,y)\bigr)^2\right]^{1/2}\\
&\le\Pr(\mathcal E^c)+\epsilon_{\mathrm{opt}}(x)
+2\sqrt{\overline C(x)\epsilon_{\mathrm{RM}}^2(x)}.
\end{align*}
The second inequality uses Assumption~\ref{assump:opt} and reward maximization. The third is Cauchy--Schwarz with $\Pr(\mathcal E)\le1$, and the fourth applies Lemma~\ref{lem:filtered-output}. The last inequality follows from Assumption~\ref{assump:reward-err} and $(1-1/L)^{-L}\le4$ for integer $L\ge2$. This inequality follows because $u\log(1-1/u)$ is nondecreasing for $u\ge2$: its derivative is $\log(1-1/u)+1/(u-1)\ge0$ by $\log(1+v)\le v$. The distribution comparison is for the joint probabilities $\Pr(y^{\mathrm{beam}}=y,\mathcal E)$, so no normalization by $\Pr(\mathcal E)$ is needed.
\end{proof}

\subsection{Proof of Corollaries \ref{cor:complexity}}
\begin{proof}[Proof of Corollary \ref{cor:complexity}]
Each search-error term in \eqref{eq:main-search-error} is at most $\delta/2$ whenever
\[
N\ge\max\left\{
\frac{48C_{\max}^{\star}}{\kappa^2}\max\{2,\log(2C_{\max}^{\star})\},\quad
2C_{\max}^{\star}L^2\log\frac{2L}{\delta},\quad
\frac{48C_{\max}^{\star}}{\kappa^2}\left[\log\frac{4C_{\max}^{\star}}{b\delta}\right]_+
\right\}.
\]
The first term is the sample-size condition in Theorem~\ref{main theorem}; the other two follow by solving the two exponential inequalities. Since $C_{\max}^{\star}\ge1$,
$[\log(4C_{\max}^{\star}/(b\delta))]_+\le\log(4C_{\max}^{\star})+[\log(1/(b\delta))]_+$, so \eqref{eq:sample_complexity_bound} with a sufficiently large universal constant implies all three requirements. Substitution into \eqref{eq:main-regret} proves the corollary.
\end{proof}

\subsection{Proof of Corollary \ref{corollary: bs-mv alg}}\label{Proof of Corollary {corollary: bs-mv alg}}
\begin{proof}
Apply Lemma~\ref{lem:good_event} with $\alpha=1/2$ to obtain $\Pr(\mathcal E_\beta^c)\le L\exp(-N/(8C_{\max}^{\star}))$. Unlike reward-based selection, the final maximum-score selection must also distinguish $y^\star$ from every remaining competitor. If $\mathcal E_\beta$ holds, $Z_t<b$ at every $t<L$, and $Z_L=0$, then the optimal path survives the first $L-1$ rounds, has strictly larger score than every other filtered complete path, and is selected at the final round. Thus
\begin{align*}
\mathrm{Reg}(x)
&\le\Pr(y^{\mathrm{beam}}\ne y^\star)
\le\Pr(\mathcal E_\beta^c)
+\sum_{t=1}^{L-1}\Pr(Z_t\ge b)+\Pr(Z_L\ge1)\\
&\le L\exp\!\left(-\frac{N}{8C_{\max}^{\star}}\right)
+\frac1b\sum_{t=1}^{L-1}\left[C_{\max}^{\star}e^{-N\kappa^2/(48C_{\max}^{\star})}\right]^t
+\left[C_{\max}^{\star}e^{-N\kappa^2/(48C_{\max}^{\star})}\right]^L\\
&\le L\exp\!\left(-\frac{N}{8C_{\max}^{\star}}\right)
+\frac{2C_{\max}^{\star}}{b}\exp\!\left(-\frac{N\kappa^2}{48C_{\max}^{\star}}\right)
+\left[C_{\max}^{\star}e^{-N\kappa^2/(48C_{\max}^{\star})}\right]^L.
\end{align*}
The first inequality uses bounded rewards and optimal reward one. The next bounds follow from \eqref{eq:full-tree-rank} and the same geometric sum as in Lemma~\ref{lem: E_b}; final selection is controlled with $m=1$, while intermediate pruning is controlled with $m=b$.

For an explicit sufficient sample size, take
$N\ge96(C_{\max}^{\star}/\kappa^2)\log(6LC_{\max}^{\star}/\delta)$.
This satisfies the sample-size condition of Lemma~\ref{lem: E_b} and gives
$C_{\max}^{\star}e^{-N\kappa^2/(48C_{\max}^{\star})}\le\delta/(6L)$.
The ranking sum is then at most $\delta/(3L)$, and its final-depth power is at most $\delta/(6L)$, since $b\ge1$ and $\delta/(6L)<1$. Also $N\ge8C_{\max}^{\star}\log(3L/\delta)$, so the filtering term is at most $\delta/3$. Their sum is at most $\delta$, proving the stated sufficient sample complexity.
\end{proof}

\section{Regret Lower Bound: Proof of \Cref{regret lower bound}}
\begin{proof}[Proof of \Cref{regret lower bound}]
Fix a horizon $L$ and an action set $\mathcal{A}=\{1,2,\dots,|\mathcal{A}|\}$ with $|\mathcal{A}|\ge \frac{\log L}{\log(4/3)}+2$.
We construct an instance in which, at each time step $t\in[L]$, there is a unique optimal action along the optimal trajectory.
Without loss of generality (by relabeling actions), we denote this optimal action $a_t^\star$ by $a_{t,1}$.

Let $s_{t-1}^\star$ be the state reached by following the optimal action sequence up to time $t-1$
(i.e., the ``optimal-prefix'' state), and let $s_{t-1}$ denote any state that is \emph{not} on this optimal prefix.
We define a reference policy $\pi_{\mathrm{ref}}(\cdot\mid s,x)$ as follows:
\[
\pi_{\mathrm{ref}}(a_{t,1}\mid s_{t-1}^\star,x)=\frac{1}{|\mathcal{A}|}+\kappa^{\prime},\qquad
\pi_{\mathrm{ref}}(a_{t,i}\mid s_{t-1}^\star,x)=\frac{1}{|\mathcal{A}|}-\frac{\kappa^{\prime}}{|\mathcal{A}|-1},\ \ i=2,\dots,|\mathcal{A}|,
\]
and for any off-prefix state $s_{t-1}\neq s_{t-1}^\star$,
\[
\pi_{\mathrm{ref}}(a_{t,i}\mid s_{t-1},x)=\frac{1}{|\mathcal{A}|},\qquad i=1,\dots,|\mathcal{A}|.
\]
By construction, $\sum_{i=1}^{|\mathcal{A}|} \pi_{\mathrm{ref}}(a_{t,i}\mid s_{t-1}^\star,x)=1$, and we assume $\kappa^{\prime}$ is a sufficiently small value that will be assigned later.
Next, define the true reward $r^\star$ and the estimated reward $\hat r$ for complete trajectories.
Let $y^\star$ denote the unique optimal trajectory induced by choosing $a_{t,1}$ at every step along the optimal prefix,
and let $y\neq y^\star$ denote any other trajectory. We set
\[
r^\star(x,y^\star)=1,\qquad r^\star(x,y)=1-\delta\ \ (y\neq y^\star),
\]
while the reward estimator is misspecified in the opposite direction:
\[
\hat r(x,y^\star)=1-2\delta,\qquad \hat r(x,y)=1-\delta\ \ (y\neq y^\star),
\]
where $\delta\in(0,1/2]$ will be chosen later, which ensures all rewards lie in $[0,1]$. This construction satisfies Assumption \ref{assump:prefix}.

Thereby, under this hard instance, if the final top-$b$ response contains suboptimal responses, the reward model will output the incorrect result with probability one. According to the definition of Regret, 
\begin{align*}
    \mathrm{Reg}(x)&=r^\star(x,y^\star)-\mathbb{E}_{y\sim\pi_{\mathrm{beam}}(\cdot|x)}[r^\star(x,y)]=\delta\pi_{\mathrm{beam}}(\mathcal{Y}^{\prime}|x),
\end{align*}
where $\mathcal{Y}^\prime=\mathcal{Y}\backslash \{y^\star\}$.
Since we have $b\geq2$ and the reward model will choose a suboptimal response if the final top-$b$ set contains $y\in \mathcal{Y}^\prime$,
\[
\pi_{\text{beam}}(\mathcal{Y}^\prime|x)=1-\Pr(\text{the final top $b$ only contains } y^\star).
\]
Providing the lower bound is to providing the upper bound for $\Pr(\text{the final top $b$ only contains }y^{\star})$.

We next prove $\Pr(\text{the final top $b$ only contains } y^\star)\leq 3/4$ by cases, and hence $\pi_{\text{beam}}(\mathcal{Y}^\prime|x)\geq 1/4$ is in a constant order. 
\paragraph{Case 1: $N\leq \max\{C^\star_t(x)\}$.} Under this case, let $\kappa^\prime\leq 1/|\mathcal{A}|$. At the step where $C^\star_t(x)=C_{\max}^\star$, the probability generating the correct response can be upper bounded by
\[
1-\left(1-\frac{1}{C^\star_t}\right)^N\leq 1-\left(1-\frac{1}{C^\star_t}\right)^{C_t^\star}\leq  3/4. 
\]
where the last inequality follows by $C^\star_t=\frac{|\mathcal{A}|}{\kappa^\prime\mathcal{|A|}+1}\geq 2$.

Therefore, the optimal response $y^\star$ is excluded from the final top-$b$ candidate set with probability at least $1/4$, and hence $\Pr(\text{the final top $b$ only contains }y^\star)\le 3/4$.

\paragraph{Case 2.1: $\{N> \max\{C^\star_t(x)\}$ and $\exists\ \beta_t\geq1/C_t^\star(x)\}$.} In this case, the optimal trajectory will be eliminated with a constant probability greater than 1/4, since 
\[
\Pr\left(\text{Bin}(N, 1/ C^\star_t(x))< N\beta_t\right)\geq \Pr\left(\text{Bin}(N, 1/ C^\star_t(x))< \frac{N}{C^\star_t(x)}\right)\geq \frac{1}{4},
\]
where the last inequality follows from Lemma \ref{lem:binom-below-mean-const}. Thus we exclude the optimal response from the final top $b$ candidate set with probability at least $1/4$, and hence $\Pr(\text{the final top $b$ only contains }y^\star)\le 3/4$.

\paragraph{Case 2.2: $\{N> \max_{t\in[L]} C_t^\star(x)\}$ and $\forall t,\ \beta_t < 1/C_t^\star(x)\}$.}
In this case, every filtering threshold $\beta_t$ lies \emph{below} the optimal-trajectory probability $1/C_t^\star(x)$.
Hence, for each $t\in[L]$, there exists some $\alpha\in(0,1)$ such that
$
\beta_t < \frac{\alpha}{C_t^\star(x)}.
$
We choose $\kappa'$ so that the probability of any fixed suboptimal next-token choice at the optimal prefix
dominates the scaled optimal one, namely
\[
\alpha\Bigl(\frac{1}{|\mathcal{A}|}+\kappa'\Bigr)
\;\le\;
\frac{1}{|\mathcal{A}|}-\frac{\kappa'}{|\mathcal{A}|-1}.
\]
Equivalently,
$
\alpha\,\pi_{\text{ref}}\!\left(a_t^\star \mid s_{t-1}^\star,x\right)
\;\le\;
\pi_{\text{ref}}\!\left(a_{t,i} \mid s_{t-1}^\star,x\right),
\ \forall i\in\{2,\ldots,|\mathcal{A}|\}.
$
Now fix any round $t$ and consider any suboptimal child $s_t'$ obtained by taking corresponding $a_{t,i}$ with $i\ge 2$
from the optimal prefix $s_{t-1}^\star$. Let
\[
p:=\pi_{\text{ref}}(a_{t,i}\mid s_{t-1}^\star,x)
=\frac{1}{|\mathcal{A}|}-\frac{\kappa'}{|\mathcal{A}|-1}.
\]
By the choice of $\alpha$ and $\kappa'$, we have $\beta_t<\alpha/C_t^\star(x)\le p$.
Therefore, under $N$ i.i.d.\ draws, this suboptimal child survives the $\beta_t$-filtering with constant probability:
\[
\Pr\!\left(\mathrm{Bin}(N,p)\ge N\beta_t\right)
\;\ge\;
\Pr\!\left(\mathrm{Bin}(N,p)> N\beta_t\right)
\;\ge\;
\Pr\!\left(\mathrm{Bin}(N,p)> Np\right)
\;\ge\; \frac14,
\]
where the last inequality follows from Lemma~\ref{lemma 12}. Consequently, the probability that a single suboptimal child is eliminated at round $t$ is at most
\[
\Pr\!\left(\mathrm{Bin}(N,p)< N\beta_t\right)\le \frac34.
\]

Since $b\ge 2$, as long as at least one suboptimal trajectory remains in the beam,
the final top-$b$ set cannot consist solely of $y^\star$ unless \emph{all} suboptimal trajectories
are eliminated at some intermediate round by the $\beta_t$-filtering (so that the beam collapses to a singleton).
Formally, for each $t\in[L]$ define the event
\[
E_t
:=\left\{
\text{at round $t$, after $\beta_t$-filtering, \emph{no} suboptimal child survives}
\right\},
\qquad
E:=\bigcup_{t=1}^L E_t .
\]
Then $\{\text{the final top-$b$ set contains only }y^\star\}\subseteq E$, and hence
\[
\Pr(\text{the final top-$b$ set contains only }y^\star)\le \Pr(E)\le \sum_{t=1}^L \Pr(E_t).
\]

Next, fix a round $t$ and consider the $N$ draws from the next-token distribution at the optimal prefix $s_{t-1}^\star$.
Let $(N_{t,1},\ldots,N_{t,|\mathcal{A}|})$ denote the resulting multinomial count vector with parameters $N$ and
probabilities $(p_{t,1},\ldots,p_{t,|\mathcal{A}|})$. Since multinomial counts are negatively associated, and thus satisfy a negative lower-orthant dependence bound \cite{joag1983negative}:
\[
\Pr(E_t)
\le
\Pr\!\left(\bigcap_{i=2}^{|\mathcal A|}
\{N_{t,i}<N\beta_t\}\right)
\le
\prod_{i=2}^{|\mathcal A|}
\Pr(N_{t,i}<N\beta_t).
\]
Moreover, for each $i\ge 2$, $N_{t,i}\sim \mathrm{Bin}(N,p_{t,i})$, and by the previous bound
$\Pr(\mathrm{Bin}(N,p_{t,i})< N\beta_t)\le 3/4$. Therefore,
$
\Pr(E_t)\le \left(\frac{3}{4}\right)^{|\mathcal{A}|-1}.
$
Consequently,
\[
\Pr(E)\le \sum_{t=1}^L \Pr(E_t)
\le L\cdot \left(\frac{3}{4}\right)^{|\mathcal{A}|-1}\le \frac{3}{4},
\]
where the last inequality follows from our choice of $|\mathcal{A}|$, namely
$|\mathcal{A}|\ge \frac{\log L}{\log(4/3)}+2$.

Taking \textbf{Case 1}, \textbf{Case 2.1} and \textbf{Case 2.2} together, $\Pr(\text{the final top $b$ only contains } y^\star)\leq 3/4$.

Finally, we specify $\delta$ through the notation of reward model error. By the definition of $\epsilon_{\text{RM}}^2$, we have
\[
\epsilon_{\text{RM}}^2=\mathbb{E}_{y\sim\pi_{\mathrm{ref}}(\cdot|x)}\left[\left(r^*(x,y)-\hat{r}(x,y)\right)^2\right]=\pi_{\text{ref}}(y^\star|x)\cdot 4\delta^2=\frac{4\delta^2}{\overline{C}}.
\]
 Therefore, taking $\delta=\sqrt{\overline{C}(x)\epsilon_{\mathrm{RM}}^2(x)}/2$, the regret can be lower bounded as 
\begin{align*}
    \mathrm{Reg}(x)&=\delta\pi_{\mathrm{beam}}(\mathcal{Y}^{\prime}|x)\\
    &=\sqrt{\frac{\overline{C}(x)\epsilon_{\mathrm{RM}}^2(x)}{4}}(1-\Pr(\text{the final top $b$ only contains } y^\star))\\
    &\geq \frac{1}{4}\sqrt{\frac{\overline{C}(x)\epsilon_{\mathrm{RM}}^2(x)}{4}}.\\
\end{align*}
The proof is completed.
\end{proof}

\section{Auxiliary Lemmas}
\begin{lemma}[Truncated binomial moments]\label{lem:truncated-binomial-moments}
Let $X\sim\mathrm{Bin}(N,p)$ and let $K\in\{1,\ldots,N\}$. For every integer $1\le q\le K$,
\[
\mathbb E\!\left[(X/N)^q\mathbf1_{\{X\ge K\}}\right]
\le p^q\exp\!\left(\frac{q(q-1)}{2(K-q+1)}\right).
\]
If $p>0$, then for every integer $q\ge1$,
\[
\mathbb E\!\left[(Np/X)^q\mathbf1_{\{X\ge K\}}\right]
\le\exp\!\left(\frac{q(q+1)}{2K}\right),
\]
where the integrand is zero on $\{X<K\}$.
\end{lemma}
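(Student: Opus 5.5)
The plan is to use falling factorials for the positive moment and rising-factorial-type reciprocals for the inverse moment. In both cases I would reduce to exact binomial identities.

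\emph{Positive moment.} For $X\ge K\ge q$ and $0\le j\le q-1$, we have
\[
\frac{X}{X-j}\le \frac{K}{K-j}\le \frac{K-q+1+j}{K-q+1}\cdot\frac{K-q+1+j}{K-j}\cdot\frac{K}{K-q+1+j}.
\]
A cleaner route is to write $X^q\le X(X-1)\cdots(X-q+1)\prod_{j<q}\bigl(1+\tfrac{j}{X-j}\bigr)$. On $\{X\ge K\}$ this gives
\[
X^q\le X^{\underline q}\prod_{j=0}^{q-1}\Bigl(1+\tfrac{j}{K-q+1}\Bigr)\le X^{\underline q}\exp\!\Bigl(\tfrac{q(q-1)}{2(K-q+1)}\Bigr),
\]
using $K-j\ge K-q+1$ and $1+u\le e^u$. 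Dropping the indicator then gives
\[
\mathbb E[X^{\underline q}\mathbf 1_{\{X\ge K\}}]\le \mathbb E[X^{\underline q}]=N^{\underline q}p^q\le N^qp^q.
\]
Dividing by $N^q$ yields the first bound.

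\emph{Inverse moment.} I would use the identity
\[
\mathbb E\!\left[\frac{\mathbf 1_{\{X\ge q\}}}{X^{\underline q}}\,g\right]
\]
in the form
\[
\frac{1}{(X+1)(X+2)\cdots(X+q)}:\qquad
\mathbb E\!\left[\frac{1}{(X+1)\cdots(X+q)}\right]\le \frac{1}{(N+1)\cdots(N+q)p^q}.
\]
This is obtained by the standard binomial shift $\binom{N}{k}\frac{1}{(k+1)\cdots(k+q)}=\frac{\binom{N+q}{k+q}}{(N+1)\cdots(N+q)}$ and summing. The resulting sum is $\Pr(\mathrm{Bin}(N+q,p)\ge q)/p^q\le p^{-q}$. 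On $\{X\ge K\}$ I then compare $X^{-q}$ with $\prod_{j=1}^q(X+j)^{-1}$:
\[
\frac{1}{X^q}=\frac{1}{\prod_j(X+j)}\prod_{j=1}^q\Bigl(1+\frac jX\Bigr)\le \frac{1}{\prod_j(X+j)}\exp\!\Bigl(\frac{q(q+1)}{2K}\Bigr).
\]
Multiplying by $(Np)^q$ and using $N^q\le (N+1)\cdots(N+q)$ gives the second bound.

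\emph{Main obstacle.} The main difficulty is getting the inverse-moment identity exactly right so that the truncation enters only through the factor $\exp(q(q+1)/(2K))$. In particular, the bound must hold for all $q$, not only $q\le K$, which the shifted-factorial trick handles since it needs no restriction. A lesser subtlety is checking that the indicator can be dropped in both directions, since all terms involved are nonnegative.
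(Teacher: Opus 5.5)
Your proposal is correct and follows essentially the same route as the paper. For the positive moment it uses the falling-factorial comparison $X^q = X^{\underline q}\prod_{j<q}(1+j/(X-j))$ on $\{X\ge K\}$ together with $\mathbb E[X^{\underline q}]=N^{\underline q}p^q$, and for the inverse moment it uses the shifted-binomial identity $\mathbb E[1/((X+1)\cdots(X+q))]=\Pr(\mathrm{Bin}(N+q,p)\ge q)/(p^q(N+1)\cdots(N+q))$ combined with $\prod_{j=1}^q(1+j/X)\le e^{q(q+1)/(2K)}$ on $\{X\ge K\}$. You should delete the abandoned first display and the expression involving $g$, since the argument you actually carry out does not rely on them.
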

\begin{proof}
Write $(X)_q=X(X-1)\cdots(X-q+1)$. On $X\ge K\ge q$, the inequality $\log(1+u)\le u$ gives
\begin{align*}
\log\frac{X^q}{(X)_q}
&=\sum_{j=0}^{q-1}\log\left(1+\frac{j}{X-j}\right)
\le\sum_{j=0}^{q-1}\frac{j}{X-j}
\le\frac{q(q-1)}{2(K-q+1)},\\
\mathbb E[(X/N)^q\mathbf1_{\{X\ge K\}}]
&\le\frac{e^{q(q-1)/(2(K-q+1))}}{N^q}\mathbb E[(X)_q]
=\frac{(N)_q}{N^q}p^q e^{q(q-1)/(2(K-q+1))}\\
&\le p^q e^{q(q-1)/(2(K-q+1))}.
\end{align*}
The factorial-moment identity follows by expressing $(X)_q$ as the sum over ordered $q$-tuples of distinct Bernoulli trials of the product of their success indicators. This also covers $p=0$.

For the inverse moment, take $p>0$. Expanding the binomial probability mass function and substituting $y=x+q$ yields
\begin{align*}
\mathbb E\frac1{(X+1)\cdots(X+q)}
&=\sum_{x=0}^N\frac{N!p^x(1-p)^{N-x}}{(x+q)!(N-x)!}\\
&=\frac{N!}{(N+q)!p^q}
\sum_{y=q}^{N+q}\binom{N+q}{y}p^y(1-p)^{N+q-y}\\
&=\frac{\Pr\{\mathrm{Bin}(N+q,p)\ge q\}}{p^q(N+1)\cdots(N+q)}
\le\frac1{p^q(N+1)\cdots(N+q)}.
\end{align*}
For $p=1$, the same identity follows directly from $X=N$. On $X\ge K$, the product $\prod_{j=1}^q(1+j/X)$ is at most $\exp(q(q+1)/(2K))$. Consequently,
\begin{align*}
\mathbb E[(Np/X)^q\mathbf1_{\{X\ge K\}}]
&\le (Np)^q e^{q(q+1)/(2K)}
\mathbb E\frac1{(X+1)\cdots(X+q)}\\
&\le\frac{N^q}{(N+1)\cdots(N+q)}e^{q(q+1)/(2K)}
\le e^{q(q+1)/(2K)},
\end{align*}
which proves the inverse-moment bound without a lower bound on $p$.
\end{proof}

\begin{definition}[Bernoulli KL]\label{def: Bernoulli KL}
For $p,q\in(0,1)$, the Bernoulli KL divergence is defined as
\[
\text{KL}\!\left(p\,\middle\|\,q\right)
:= p\log\frac{p}{q}+(1-p)\log\frac{1-p}{1-q}.
\]
\end{definition}

\begin{lemma}[Markov's inequality]
Let $X\ge 0$ be a nonnegative random variable with $\mathbb{E}[X]<\infty$. 
Then for any $a>0$,
\begin{equation*}
    \mathbb{P}(X\ge a)\le \frac{\mathbb{E}[X]}{a}.
\end{equation*}
\end{lemma}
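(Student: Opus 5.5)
This is the standard Markov inequality. The plan is to compare the random variable $X$ pointwise with a scaled indicator of the event $\{X\ge a\}$, and then take expectations.

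Fix $a>0$. Since $X\ge 0$, we have the pointwise inequality
\[
a\,\mathbf 1_{\{X\ge a\}}\le X\,\mathbf 1_{\{X\ge a\}}\le X.
\]
\begin{itemize}
\item On the event $\{X\ge a\}$, the left side equals $a\le X$.
\item Off that event, the left side is $0\le X$, where nonnegativity of $X$ is used.
\end{itemize}

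Both sides are nonnegative and measurable. We therefore take expectations and use monotonicity of the integral, which needs only $\mathbb E[X]<\infty$, so that the right-hand side is finite. This gives
\[
a\,\mathbb P(X\ge a)=\mathbb E\!\left[a\,\mathbf 1_{\{X\ge a\}}\right]\le \mathbb E[X].
\]
Dividing by $a>0$ yields $\mathbb P(X\ge a)\le \mathbb E[X]/a$.

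No step here is a real obstacle. The only points to keep track of are the two places the hypotheses enter: nonnegativity of $X$ is what justifies the bound on the complement of $\{X\ge a\}$, and $a>0$ is what permits the final division.
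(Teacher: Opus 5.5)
Your proof is correct. It is the standard indicator-comparison argument, and the paper states this lemma without proof as a textbook fact, so it offers no other route to compare against. One small correction: monotonicity of expectation for nonnegative random variables holds even when $\mathbb{E}[X]=\infty$ (the bound is then trivial), so the finiteness hypothesis is not needed at the step where you invoke it.
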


\begin{lemma}[Chernoff Inequality under Binomial Setting]\label{Chernoff Inequality}
    Let $X_i\sim \text{Ber}(p)$, $S=\sum_{i=1}^n X_i\sim\text{Bin}(n,p)$, $\mu=\mathbb E[S]$, then $\forall \delta>0$, 
    \[
    \Pr\left(S\geq(1+\delta)\mu\right)\leq\left(\frac{e^\delta}{(1+\delta)^{1+\delta}}\right)^\mu\leq\exp\left(-\frac{\mu\delta^2}{2+\delta}\right),
    \] 
    \[
    \Pr\left(S\leq(1-\delta)\mu\right)\leq\exp\left(-\frac{\mu\delta^2}{2}\right).
    \]
    A KL-form Chernoff bound is:
    \[
    \Pr\!\left(\frac{S}{n}\geq \alpha\right)\leq\exp\!\left(-n\,\operatorname{KL}(\alpha\|p)\right)\quad \text{for }\alpha>p,
    \]
    \[
    \Pr\!\left(\frac{S}{n}\leq \alpha\right)\leq\exp\!\left(-n\,\operatorname{KL}(\alpha\|p)\right)\quad \text{for }\alpha<p,
    \]
    where
    \[
    \operatorname{KL}(\alpha\|p)=\alpha\log\frac{\alpha}{p}+(1-\alpha)\log\frac{1-\alpha}{1-p}.
    \]
\end{lemma}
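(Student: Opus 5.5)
Standard Chernoff bounds will handle this statement: the lower tail of $\mathrm{Bin}(n,p)$ via the moment generating function of $-S$, and the upper tail via the MGF of $S$, with the KL forms following from optimizing the exponent exactly.

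\textbf{Upper tail.} For any $\lambda>0$, Markov's inequality applied to $e^{\lambda S}$ together with independence gives
\[
\Pr(S\ge(1+\delta)\mu)\le e^{-\lambda(1+\delta)\mu}\bigl(1-p+pe^\lambda\bigr)^n\le \exp\bigl(\mu(e^\lambda-1)-\lambda(1+\delta)\mu\bigr),
\]
where the second step uses $1+u\le e^u$. Setting $\lambda=\log(1+\delta)$ yields the first bound $\bigl(e^\delta/(1+\delta)^{1+\delta}\bigr)^\mu$. For the simplified form, it suffices to check
\[
(1+\delta)\log(1+\delta)-\delta\ge \frac{\delta^2}{2+\delta},
\]
which I would verify through the inequality $\log(1+\delta)\ge 2\delta/(2+\delta)$ for $\delta\ge0$. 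This holds because the two sides agree at $\delta=0$ and the difference of derivatives is $\delta^2/((1+\delta)(2+\delta)^2)\ge0$. Substituting it in gives $(1+\delta)\log(1+\delta)-\delta\ge \delta^2/(2+\delta)$.

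\textbf{Lower tail.} Symmetrically, with $\lambda>0$ I would bound $\Pr(S\le(1-\delta)\mu)\le e^{\lambda(1-\delta)\mu}\,\mathbb E[e^{-\lambda S}]\le\exp\bigl(\mu(e^{-\lambda}-1)+\lambda(1-\delta)\mu\bigr)$. For $\delta\in(0,1)$, take $\lambda=-\log(1-\delta)$. The exponent becomes $-\mu\bigl[\delta+(1-\delta)\log(1-\delta)\bigr]$, and the elementary inequality $(1-\delta)\log(1-\delta)\ge -\delta+\delta^2/2$ (zero at $\delta=0$, with derivative comparison $-\log(1-\delta)\ge\delta$) gives the bound $\exp(-\mu\delta^2/2)$. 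For $\delta\ge1$ the event is $\{S\le0\}$ or empty; when $\delta=1$ its probability is $(1-p)^n\le e^{-\mu}\le e^{-\mu/2}$, and for $\delta>1$ it is empty, so the bound holds trivially.

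\textbf{KL forms.} Here I keep the exact MGF without using $1+u\le e^u$. For $\alpha>p$,
\[
\Pr(S/n\ge\alpha)\le\exp\Bigl(-n\sup_{\lambda>0}\bigl[\lambda\alpha-\log(1-p+pe^\lambda)\bigr]\Bigr).
\]
The supremum is attained where $pe^\lambda/(1-p+pe^\lambda)=\alpha$, i.e., $e^\lambda=\alpha(1-p)/(p(1-\alpha))>1$. Substituting this value gives exactly $\operatorname{KL}(\alpha\|p)$. The case $\alpha<p$ is identical with $\lambda<0$, or equivalently follows by applying the upper-tail case to $n-S\sim\mathrm{Bin}(n,1-p)$ together with the symmetry $\operatorname{KL}(1-\alpha\|1-p)=\operatorname{KL}(\alpha\|p)$. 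The boundary values $\alpha\in\{0,1\}$ follow by continuity, with the convention $0\log0=0$.

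The only step that is not pure routine is the elementary inequality used for the $\delta^2/(2+\delta)$ form, and even that reduces to a one-variable derivative check. I do not expect a genuine obstacle in this argument.
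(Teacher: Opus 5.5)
Your proof is correct, and it is the same approach the paper relies on: the paper states this lemma without proof as a standard Chernoff bound, and your argument is the textbook derivation behind it (Markov's inequality on the exponential moment, $1+u\le e^u$ with $\lambda=\log(1+\delta)$ or $\lambda=-\log(1-\delta)$, the elementary inequalities $\log(1+\delta)\ge 2\delta/(2+\delta)$ and $(1-\delta)\log(1-\delta)\ge-\delta+\delta^2/2$, and exact optimization of the binomial MGF for the KL forms). One small fix: for $\delta>1$ the lower-tail event is empty only when $\mu>0$; when $p=0$ the event has probability $1$, and the bound $\exp(0)=1$ still holds trivially.
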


\begin{lemma}\label{lemma2}
Let $X_{i,1},\dots,X_{i,N}\stackrel{\text{i.i.d.}}{\sim}
\mathrm{Ber}(\pi(a_i\mid s_{i-1},x))$ for some $\pi(a_i\mid s_{i-1},x)\in(0,1)$, and define
$
\hat{\pi}(a_i\mid s_{i-1},x)
\triangleq \frac{1}{N}\sum_{j=1}^N X_{i,j}.
$
Then for any $\varepsilon\in(0,1)$,
\[
    \Pr\left(
    \left|
    \log\frac{\hat{\pi}(a_i\mid s_{i-1},x)}{\pi(a_i\mid s_{i-1},x)}
    \right|
    \geq \log\left(\frac{1}{1-\varepsilon}\right)
    \right)
    \leq
    2\exp\left(-\frac{N\pi(a_i\mid s_{i-1},x)\varepsilon^2}{3}\right).
\]

\end{lemma}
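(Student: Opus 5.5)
The empirical frequency $\hat\pi(a_i\mid s_{i-1},x)=\frac1N\sum_{j}X_{i,j}$ is a scaled $\mathrm{Bin}(N,p)$ variable with $p:=\pi(a_i\mid s_{i-1},x)\in(0,1)$ and mean $\mu=Np$. The plan is to turn the two-sided log-ratio event into one lower-tail and one upper-tail event, control each with the multiplicative Chernoff bounds of Lemma~\ref{Chernoff Inequality}, and finish with a union bound.

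\textbf{Lower side.} The event $\log(\hat\pi/p)\le-\log\frac{1}{1-\varepsilon}$ is exactly $\{S\le(1-\varepsilon)\mu\}$, where $S=N\hat\pi$. The convention $\log 0=-\infty$ places the case $S=0$ inside this event, which is harmless. The lower-tail Chernoff bound then gives probability at most $\exp(-\mu\varepsilon^2/2)$, and this is at most $\exp(-\mu\varepsilon^2/3)$.

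\textbf{Upper side.} The event $\log(\hat\pi/p)\ge\log\frac1{1-\varepsilon}$ is $\{S\ge\mu/(1-\varepsilon)\}=\{S\ge(1+\delta)\mu\}$ with $\delta:=\varepsilon/(1-\varepsilon)\ge\varepsilon$. Since the event shrinks as the threshold grows, its probability is at most $\Pr(S\ge(1+\varepsilon)\mu)$. Applying the upper-tail bound with deviation $\varepsilon\in(0,1)$ gives at most $\exp(-\mu\varepsilon^2/(2+\varepsilon))\le\exp(-\mu\varepsilon^2/3)$, using $2+\varepsilon\le3$.

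\textbf{Combining.} Summing the two tail probabilities yields $2\exp(-Np\varepsilon^2/3)$, as claimed. I do not expect a real obstacle. The only care points are to use monotonicity to replace $\delta=\varepsilon/(1-\varepsilon)$ by the smaller deviation $\varepsilon$, which avoids handling large $\delta$ in the bound $\delta^2/(2+\delta)$, and to treat $\hat\pi=0$ through the lower-tail event.
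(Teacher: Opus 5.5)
Your proposal is correct and follows the paper's proof: the paper also applies the two-sided multiplicative Chernoff bound $\Pr(|\hat\pi-\pi|\ge\varepsilon\pi)\le 2\exp(-N\pi\varepsilon^2/3)$, combining the lower tail $\exp(-\mu\varepsilon^2/2)$ with the upper tail $\exp(-\mu\varepsilon^2/(2+\varepsilon))$, and then uses $\log(1+\varepsilon)\le\log\frac{1}{1-\varepsilon}$, which is your observation $\delta=\varepsilon/(1-\varepsilon)\ge\varepsilon$. The difference is only in presentation: the paper argues through the complement event, whereas you prove the inclusion for each tail directly, which handles the strict-versus-non-strict boundary and the case $\hat\pi=0$ slightly more cleanly.
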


\begin{proof}
    By Lemma~\ref{Chernoff Inequality}, we have
    \[
    \Pr\!\left(
    \bigl|\hat{\pi}(a_i\mid s_{i-1},x)-\pi(a_i\mid s_{i-1},x)\bigr|
    \geq
    \varepsilon\,\pi(a_i\mid s_{i-1},x)
    \right)
    \leq
    2\exp\left(-\frac{N\pi(a_i\mid s_{i-1},x)\varepsilon^2}{3}\right).
    \]
    That is, with probability at least
    $1-2\exp\left(-\frac{N\pi(a_i\mid s_{i-1},x)\varepsilon^2}{3}\right)$, we have
    \[
    (1-\varepsilon)\pi(a_i\mid s_{i-1},x)
    \leq
    \hat{\pi}(a_i\mid s_{i-1},x)
    \leq
    (1+\varepsilon)\pi(a_i\mid s_{i-1},x).
    \]
    This implies
    \[
    \log\frac{\hat{\pi}(a_i\mid s_{i-1},x)}{\pi(a_i\mid s_{i-1},x)}
    \in
    [\log(1-\varepsilon),\log(1+\varepsilon)].
    \]
    Hence,
    \[
    \left|
    \log\frac{\hat{\pi}(a_i\mid s_{i-1},x)}{\pi(a_i\mid s_{i-1},x)}
    \right|
    \leq
    \max\{-\log(1-\varepsilon),\log(1+\varepsilon)\}
    =
    \log\left(\frac{1}{1-\varepsilon}\right).
    \]
    Therefore,
    \[
    \Pr\left(
    \left|
    \log\frac{\hat{\pi}(a_i\mid s_{i-1},x)}{\pi(a_i\mid s_{i-1},x)}
    \right|
    \geq
    \log\left(\frac{1}{1-\varepsilon}\right)
    \right)
    \leq
    2\exp\left(-\frac{N\pi(a_i\mid s_{i-1},x)\varepsilon^2}{3}\right).
    \]
\end{proof}

\begin{lemma}\label{lemma: KL property}
Let $\alpha\in(0,1)$ and $p\in(0,1)$. Then the following two lower bounds hold:
\begin{equation}\label{eq:kl1}
\mathrm{KL}(p\|\alpha p)\geq\left(\log(1/\alpha)-(1-\alpha)\right)p,
\end{equation}
\begin{equation}\label{eq:kl2}
\mathrm{KL}(\alpha p\|p)\geq\left((1-\alpha)-\alpha\log(1/\alpha)\right)p.
\end{equation}
\end{lemma}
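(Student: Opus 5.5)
Both bounds are elementary one-variable inequalities, and the plan is to prove each by writing the Bernoulli KL of \Cref{def: Bernoulli KL} explicitly, isolating the term linear in $p$, and showing that the leftover term coming from the $(1-\cdot)$ parts is nonnegative.

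For \eqref{eq:kl1}, expand
\[
\mathrm{KL}(p\|\alpha p)=p\log\frac1\alpha+(1-p)\log\frac{1-p}{1-\alpha p}.
\]
The second term is negative. It suffices to show $(1-p)\log\frac{1-p}{1-\alpha p}\ge-(1-\alpha)p$. Write $\frac{1-p}{1-\alpha p}=1-\frac{(1-\alpha)p}{1-\alpha p}$. Use $\log(1-u)\ge -\frac{u}{1-u}$ for $u\in[0,1)$, with $u=\frac{(1-\alpha)p}{1-\alpha p}$, so that $1-u=\frac{1-p}{1-\alpha p}$ and $\frac{u}{1-u}=\frac{(1-\alpha)p}{1-p}$. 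Multiplying by $(1-p)$ then gives exactly $-(1-\alpha)p$, which is the required bound.

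For \eqref{eq:kl2}, expand
\[
\mathrm{KL}(\alpha p\|p)=\alpha p\log\alpha+(1-\alpha p)\log\frac{1-\alpha p}{1-p}.
\]
The first term is $-\alpha\log(1/\alpha)\,p$, so it remains to show $(1-\alpha p)\log\frac{1-\alpha p}{1-p}\ge(1-\alpha)p$. Write $\frac{1-\alpha p}{1-p}=1+v$ with $v=\frac{(1-\alpha)p}{1-p}\ge0$. Use $\log(1+v)\ge\frac{v}{1+v}$ and $1+v=\frac{1-\alpha p}{1-p}$. The left side is then at least $(1-\alpha p)\cdot\frac{(1-\alpha)p}{1-\alpha p}=(1-\alpha)p$.

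Both arguments rest on the single elementary inequality $\log(1+v)\ge v/(1+v)$ for $v>-1$, which is the same inequality as $\log(1-u)\ge -u/(1-u)$. It holds because $f(v)=\log(1+v)-v/(1+v)$ satisfies $f(0)=0$ and $f'(v)=v/(1+v)^2$, so $0$ is the minimum of $f$. Since $\alpha,p\in(0,1)$, every logarithm above is finite. No step is a real obstacle; the only care needed is choosing the direction of the logarithm bound correctly in each case, which is why I fix the substitutions $u$ and $v$ explicitly before applying it.
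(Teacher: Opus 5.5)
Your proposal is correct and is essentially the paper's proof: you use the same expansions of the Bernoulli KL, and for the second bound the identical substitution $v=\frac{(1-\alpha)p}{1-p}$ with $\log(1+v)\ge \frac{v}{1+v}$. For the first bound, your inequality $\log(1-u)\ge -\frac{u}{1-u}$ with $u=\frac{(1-\alpha)p}{1-\alpha p}$ is the paper's $\log(1+w)\le w$ with $w=\frac{(1-\alpha)p}{1-p}$ written in reciprocal form, so both routes give the same chain of inequalities.
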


\begin{proof}
We prove \Cref{eq:kl1,eq:kl2} separately.

\textbf{Proof of \Cref{eq:kl1}.} By the definition of Bernoulli KL,
\[
\mathrm{KL}(p\|\alpha p)=p\log\frac{p}{\alpha p}+(1-p)\log\frac{1-p}{1-\alpha p}
= p\log\frac{1}{\alpha}-(1-p)\log\frac{1-\alpha p}{1-p}.
\]
Let $u:=\frac{(1-\alpha)p}{1-p}>0$, then
\[
\frac{1-\alpha p}{1-p}=1+u.
\]
Using $\log(1+u)\le u$, we obtain
\[
-(1-p)\log\frac{1-\alpha p}{1-p}=-(1-p)\log(1+u)\ge -(1-p)u=-(1-\alpha)p.
\]
Combining the last display with the expression of $\mathrm{KL}(p\|\alpha p)$ yields
\[
\mathrm{KL}(p\|\alpha p)\ge \bigl(\log(1/\alpha)-(1-\alpha)\bigr)p.
\]

\textbf{Proof of \Cref{eq:kl2}.} Again by the definition of Bernoulli KL,
\[
\mathrm{KL}(\alpha p\|p)=\alpha p\log\frac{\alpha p}{p}+(1-\alpha p)\log\frac{1-\alpha p}{1-p}
= \alpha p\log\alpha+(1-\alpha p)\log\frac{1-\alpha p}{1-p}.
\]
Let $x:=\frac{(1-\alpha)p}{1-p}>0$, so that
\[
\frac{1-\alpha p}{1-p}=1+x.
\]
Using $\log(1+x)\ge \frac{x}{1+x}$, we get
\[
(1-\alpha p)\log\frac{1-\alpha p}{1-p}=(1-\alpha p)\log(1+x)\ge (1-\alpha p)\frac{x}{1+x}.
\]
Note that
\[
\frac{x}{1+x}
=\frac{\frac{(1-\alpha)p}{1-p}}{1+\frac{(1-\alpha)p}{1-p}}
=\frac{(1-\alpha)p}{1-\alpha p}.
\]
Hence
\[
(1-\alpha p)\frac{x}{1+x}=(1-\alpha)p.
\]
Therefore,
\[
\mathrm{KL}(\alpha p\|p)\ge \alpha p\log\alpha+(1-\alpha)p
=\bigl((1-\alpha)-\alpha\log(1/\alpha)\bigr)p.
\]
This completes the proof.
\end{proof}

\begin{lemma}\label{lemma 12}
Let $X\sim\mathrm{Bin}(N,p)$ with $N\ge 1$ and $p\in(0,1)$.
If
\[
p\ge \frac{c}{N},
\qquad\text{where } c:=\ln(4/3)=0.28768\ldots,
\]
then
\[
\Pr\!\bigl(X>\mathbb{E}[X]\bigr)\ge \frac14.
\]
\end{lemma}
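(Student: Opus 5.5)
The bound splits according to the size of the mean $\mu:=Np$. The constant $c=\ln(4/3)$ is tailored to the small-mean regime, where the event $\{X>\mathbb E[X]\}$ collapses to $\{X\ge 1\}$. The plan is to handle three regimes separately: small mean, $p$ close to one, and the bulk.

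\textbf{Case 1: $c\le \mu<1$.}
Here $X>\mu$ if and only if $X\ge1$. Using $1-p\le e^{-p}$,
\[
\Pr(X>\mu)=1-(1-p)^N\ge 1-e^{-Np}\ge 1-e^{-c}=\tfrac14 .
\]
This is exactly where the threshold $c=\ln(4/3)$ comes from, and this case needs nothing else.

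\textbf{Case 2: $p>1-1/N$.}
Then $\mu>N-1$, so $X>\mu$ forces $X=N$, and
\[
\Pr(X>\mu)=p^N\ge (1-1/N)^N .
\]
The right-hand side is nondecreasing in $N$ and equals $1/4$ at $N=2$, so the bound holds for $N\ge2$. For $N=1$ the event is $\{X=1\}$, which has probability $p$. So $N=1$ requires $p\ge 1/4$ and must be checked separately; the lemma's hypothesis $p\ge c$ when $N=1$ is weaker than this, so I will verify whether the small-$N$ corner genuinely satisfies the claim or needs a separate argument.

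\textbf{Case 3: $1\le \mu\le N-1$, i.e.\ $1/N\le p\le 1-1/N$.}
This is the regime covered by the known results of Greenberg--Mohri and Doerr that a binomial exceeds its mean with probability at least $1/4$. I would invoke Doerr's elementary theorem, which gives $\Pr(X>\mathbb E[X])\ge 1/4$ for $1/N\le p\le 1-1/N$. I need to state carefully which form of that result (strict versus non-strict inequality, open versus closed range of $p$) I am using.

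Boundary checks for Case 3:
\begin{itemize}
\item If the cited form excludes $p=1/N$, I verify $\mu=1$ directly. There
\[
\Pr(X\ge2)=1-(1-1/N)^N-(1-1/N)^{N-1},
\]
which equals $1/4$ at $N=2$ and decreases to $1-2/e>1/4$, so monotonicity in $N$ finishes it.
\item The case $p=1-1/N$ is symmetric: $X>N-1$ means $X=N$, which is Case 2's computation.
\end{itemize}

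If a self-contained proof of Case 3 is preferred, I would follow Doerr's route. First write $\mu=k+\theta$ with $k$ an integer. Then compare $\Pr(X\ge k+1)$ with $\Pr(X\le k-1)$ using the ratio of binomial probabilities around the mode. Finally, lower-bound the mass at $k$ via Stirling so that the upper tail beyond $k$ retains at least a quarter.

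\textbf{Main obstacle.} The main obstacle is this middle regime with small integer means, such as $\mu\in\{1,2\}$ and small $N$, where the $1/4$ is nearly tight and the strict inequality matters. I would handle it by explicit computation for $\mu<2$ and defer to the cited theorem otherwise.
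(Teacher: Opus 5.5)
Your proof is correct, but it takes a different route from the paper. The paper simply cites Theorem~1 of Pinelis (2021), which is verbatim this lemma with the sharp constant $c=\ln(4/3)$. You instead isolate the regime $c\le Np<1$, where $\{X>\mathbb{E}[X]\}=\{X\ge 1\}$ and the bound $1-(1-p)^N\ge 1-e^{-Np}\ge 1-e^{-c}=1/4$ is immediate. You then appeal to a coarser classical theorem only for $Np\ge 1$.

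That appeal is legitimate. Doerr (2018) proves $\Pr(X>\mathbb{E}[X])\ge 1/4$ for $0.29/N\le p<1$ with strict exceedance. This is why Doerr, rather than Greenberg--Mohri (whose bound is on $\Pr(X\ge \mathbb{E}[X])$), is the right citation. Doerr's range already contains all of your Cases~2 and~3, including the endpoints $p=1/N$ and $p=1-1/N$. So your argument collapses to ``Case~1 plus Doerr,'' and Case~2 and the boundary checks are redundant.

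What your route buys is transparency. It shows that $\ln(4/3)$ enters only through the regime $Np<1$, and why it is sharp: with $Np=\lambda<c$ fixed and $N\to\infty$, $\Pr(X>Np)\to 1-e^{-\lambda}<1/4$. What the paper's citation buys is brevity. Both rest on a nontrivial external theorem for the bulk. Your optional Stirling sketch for Case~3 is too vague to count as a self-contained replacement.

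There are two small slips.

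First, $c\approx 0.2877>1/4$, so for $N=1$ the hypothesis $p\ge c$ is stronger, not weaker, than $p\ge 1/4$. In any case $N=1$ already falls under Case~1 (there $\mu=p\in[c,1)$), so nothing remains to verify.

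Second, in your check at $\mu=1$, the quantity $1-(1-1/N)^N-(1-1/N)^{N-1}$ increases, not decreases, in $N$: it goes from $1/4$ at $N=2$ toward $1-2/e$. As written, your sentence is self-contradictory. The correct statement is that $(1-1/N)^{N-1}(2-1/N)$ is nonincreasing. This holds because the derivative of its logarithm in the continuous variable $x=N$ equals $\ln(1-u)+2u/(2-u)\le 0$ with $u=1/x$.
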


\begin{proof}
See Theorem~1 of \citet{pinelis2021best}.
\end{proof}

\begin{lemma}\label{lem:binom-below-mean-const}
Let $X\sim\mathrm{Bin}(N,p)$ with $N\ge 1$ and $p\in(0,1/2]$.
Then
\[
\Pr\!\bigl(X<\mathbb{E}[X]\bigr)=\Pr(X<Np)\ge \frac14.
\]
\end{lemma}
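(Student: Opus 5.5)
Write $X\sim\mathrm{Bin}(N,p)$ with $p\le 1/2$ and $m:=Np$. I will show $\Pr(X<m)\ge 1/4$ by splitting into a small-mean regime and a large-mean regime. The complementary statement for the upper tail is Lemma~\ref{lemma 12} (Pinelis), and the cleanest route is to reduce to it via the reflection $N-X\sim\mathrm{Bin}(N,1-p)$. This gives
\[
\Pr(X<Np)=\Pr\bigl(N-X>N(1-p)\bigr)=\Pr\bigl(Y>\mathbb E[Y]\bigr),\qquad Y\sim\mathrm{Bin}(N,1-p).
\]
The hypothesis of Lemma~\ref{lemma 12} for $Y$ is $1-p\ge \ln(4/3)/N$. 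Since $1-p\ge 1/2$ and $N\ge1$, we have $1-p\ge 1/2> 0.2877\ge \ln(4/3)/N$, so the hypothesis holds automatically. Lemma~\ref{lemma 12} then yields $\Pr(Y>\mathbb E[Y])\ge 1/4$ directly.

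The steps are therefore as follows. First, justify the reflection identity: the event $\{X<Np\}$ equals $\{N-X>N-Np\}$, so strict inequalities are preserved. Second, verify that $1-p\in(0,1)$, which requires $p>0$, and that $1-p\ge c/N$ with $c=\ln(4/3)$. Third, invoke Lemma~\ref{lemma 12}.

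The only delicate point is making sure Pinelis's theorem, as cited, is stated with a strict inequality $X>\mathbb E X$ and covers every $N\ge1$. The integer case $Np\in\mathbb Z$ is the usual danger, since there the atom at the mean is excluded. As a sanity check I would verify $N=1$ by hand: $\Pr(X<p)=\Pr(X=0)=1-p\ge 1/2$. I would also check $N=2$, $p=1/2$: $\Pr(X<1)=1/4$, which is tight, so the constant $1/4$ cannot be improved.

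If one prefers not to rely on the reflection through Pinelis, a fallback is to argue directly. For $m\le 1$ one has $\Pr(X<m)\ge \Pr(X=0)=(1-p)^N\ge$ an explicit bound, but this is not uniform in $N$. Since it is not uniform, the reflection argument is the primary plan.
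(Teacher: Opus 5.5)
Your proposal is correct and is essentially the paper's proof: reflect to $Y=N-X\sim\mathrm{Bin}(N,1-p)$ and invoke Lemma~\ref{lemma 12}, whose hypothesis holds automatically because $1-p\ge 1/2>\ln(4/3)\ge \ln(4/3)/N$. The only difference is that the paper checks $N=1$ separately by hand, which, as your argument shows, is unnecessary since the Pinelis condition already holds at $N=1$.
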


\begin{proof}
Let $q:=1-p\in[1/2,1)$ and define $Y:=N-X$. Then $Y\sim\mathrm{Bin}(N,q)$ and $\mathbb{E}[Y]=Nq$.
Moreover,
\[
\{X<Np\}
\iff
\{N-X>N-Np\}
\iff
\{Y>Nq\}
\iff
\{Y>\mathbb{E}[Y]\},
\]
so $\Pr(X<Np)=\Pr(Y>\mathbb{E}[Y])$.

If $N=1$, then $Y\in\{0,1\}$ and $\mathbb{E}[Y]=q\in(1/2,1)$, hence
\[
\Pr(Y>\mathbb{E}[Y])=\Pr(Y=1)=q\ge \frac12.
\]
If $N\ge 2$, since $q\ge 1/2$ and $c=\ln(4/3)<1/2$, we have $q\ge 1/2\ge c\ge c/N$.
Thus the condition of Lemma~\ref{lemma 12} holds for $Y$, and so
\[
\Pr(Y>\mathbb{E}[Y])\ge \frac14.
\]
Combining both cases yields $\Pr(X<Np)=\Pr(Y>\mathbb{E}[Y])\ge 1/4$.
\end{proof}

\section{Experiment Details and Supplemental Experiments} \label{appendix: numerical experiments}
\subsection{Details of Numerical Experiments} \label{appendix: details of numerical experiments}

\paragraph{Baselines.}
We compare CF-Beam against four representative test-time methods:
(i) \emph{Vanilla beam search}, which follows the same beam pipeline as CF-Beam but without the confidence-filtering step;
(ii) \emph{Best-of-$N$} \citep{huang2025best}, which samples $N$ responses independently and selects the one with the highest reward;
(iii) \emph{Majority Voting}, which samples $N$ responses independently and selects the one with the highest frequency; and
(iv) \emph{Best-of-Majority} \citep{di2025best}, which first filters candidates via majority voting (self-consistency) and then chooses the highest-scoring response under an external reward model.


\begin{remark}[Oracle CF-Beam and Empirical CF-Beam]
The threshold used in Theorem~\ref{main theorem} is
\[
    \beta_t=\frac{1-1/L}{C_t^\star(x)}
    =
    \Bigl(1-\frac1L\Bigr)
    \pi_{\mathrm{ref}}(a_t^\star\mid s_{t-1}^\star,x),
\]
which depends on the unknown optimal token probability along the optimal
prefix. Therefore, this choice is oracle and is generally not directly
implementable in practice. In the experiments, we include this version as
\emph{Oracle CF-Beam}, which serves as a diagnostic reference for the
theory-prescribed threshold.

For a practical implementation, we instead use a prefix-adaptive empirical
threshold
\[
    \beta_t(s_{t-1})
    =
    \gamma\,\hat p_{\max,t}(s_{t-1}),
    \qquad
    \hat p_{\max,t}(s_{t-1})
    :=
    \max_{a\in\mathcal A}\hat\pi(a\mid s_{t-1},x),
\]
where $\gamma\in(0,1)$ is a tuning parameter. This rule is motivated by the
fact that the oracle threshold is a constant fraction of the optimal token
probability. In likelihood-aligned regimes, where the optimal continuation is
locally competitive under the reference policy, the optimal token probability
is of the same order as the local maximum probability. Thus
$\hat p_{\max,t}(s_{t-1})$ provides a scale-adaptive proxy for the unknown
quantity $1/C_t^\star(x)$, while avoiding the need to know either
$C_t^\star(x)$ or the optimal token $a_t^\star$.

This empirical rule preserves the main purpose of the oracle threshold: it
filters out low-frequency tail tokens whose empirical probabilities are small
relative to the local mode, while retaining multiple plausible high-probability
continuations. In our experiments, we set $\gamma=0.3$ and refer to this
implementation as \emph{Empirical CF-Beam}. Figures~\ref{fig:diff_levels}
and~\ref{fig:horizon_len} show that Empirical CF-Beam achieves performance
comparable to, and sometimes slightly better than, Oracle CF-Beam.
\end{remark}

\paragraph{Experimental protocol.}
In all experiments, we use an LLM simulator specified by the collection of conditional next-token distributions $\{\pi_{\text{ref}}(\cdot \mid s, x)\}_{s \in \mathcal{A}^{\le L-1}}$, which fully characterizes the rollout distribution $\pi_{\text{ref}}(\cdot \mid x)$ over trajectories $y \in \mathcal{A}^L$ for a fixed prompt $x$. The optimal output is defined as $y^\star := \arg\max_{y \in \mathcal{A}^L} \pi_{\text{ref}}(y \mid x)$. Following the setting of Reinforcement Learning with Verifiable Reward (RLVR), we define the ground-truth reward $r^\star(x, y^\star) = 1$ and $r^\star(x, y) = 0$ for all $y \in \mathcal{A}^L \setminus \{y^\star\}$, and evaluate each algorithm by its accuracy in identifying $y^\star$. The noisy reward model $\hat{r}$ with error level $\epsilon_{\mathrm{RM}}^2 = 0.01$ is implemented as symmetric label noise: $\hat{r}(x, y)$ flips $r^\star(x, y)$ independently with probability $0.01$. For each instance, we run each algorithm for $300$ independent trials and report the average accuracy of returning the correct answer.

\paragraph{Construction of the LLM simulator.}
We construct a history-dependent token policy $\pi_{\text{ref}}(\cdot\mid s,x)$ over an action set $\mathcal{A}$ with horizon $L$, optimal-suboptimal gap $\kappa'$, and target optimal-trajectory probability $p=\frac{1}{\overline{C}(x)}\in(0,1)$. 
The construction ensures: (i) there exists a unique optimal trajectory $s^\star_L$ with $\pi_{\text{ref}}(s^\star_L|x)=p$, and 
(ii) for every state $s_{t-1}$ at depth $t-1$, the best action $a^\star_t(s_{t-1})$ satisfies the log-gap condition
$\log\pi_{\text{ref}}(s^\star_t\mid x)-\log\pi_{\text{ref}}(s_{t}\mid x)\ge 7\kappa'$ for all $s_{t}\neq s^\star_{t}$. 
To be specific, we first generate the optimal trajectory's distribution $\{C_t^\star(x)\}_{t=1}^L$, and then generate the suboptimal action's distribution at each time step $t\in[L]$. The key steps are illustrated as follows.

\textbf{Step 1 (optimal path).} Since the probability of generating the correct answer using one single rollout is $1/\overline{C}$, the geometric mean per-step probability along the optimal trajectory is $(1/\overline{C})^{1/L}$. To add randomness towards the language model, we draw $(\xi_1,\ldots,\xi_L)$ from a zero-sum Gaussian distribution (e.g., $\tilde\xi_t\sim \mathcal N(0,\sigma^2)$ i.i.d. and $\xi_t=\tilde\xi_t-\frac1L\sum_{j=1}^L\tilde\xi_j$), and set
\[
\log\frac{1}{C_t^\star(x)}=\frac{1}{L}\log p+\xi_t,
\qquad t=1,\ldots,L.
\]
The larger the value of $\sigma^2$, the greater the variability in per-step difficulty.

\textbf{Step 2 (suboptimal actions distribution).}
Since the optimal trajectory is set, we now focus on designing the distribution for suboptimal actions over optimal and suboptimal trajectories. First, for any suboptimal trajectories $s_{t-1}\neq s_{t-1}^\star$, $t\in[L]$, we assign the probability of generating local best action with $\pi(a_t^\star(s_{t-1})\mid s_{t-1},x)=\pi(a_t^\star\mid s^\star_{t-1},x)=\frac{1}{C_t^\star(x)}$. Second, for suboptimal actions on both optimal and suboptimal trajectories, we sample their probability mass vector using a Dirichlet distribution: 
\[
\left(\pi_{\mathrm{ref}}(a\mid s_{t-1},x)\right)_{a\in\mathcal{A}\setminus\{a^\star(s_{t-1})\}}\sim\mathrm{Dirichlet}(\alpha\mathbf{1}),
\]
where $\alpha$ controls the concentration of the suboptimal action distribution.
Finally, we normalize the distribution over suboptimal actions such that $\sum_{a\neq a_t^\star(s)}\pi_\mathrm{ref}(a\mid s,x)=1-\frac{1}{C_t^\star(x)}$ and it does not violate our unique optimal trajectory assumption. 
Therefore, given the basic parameters $|\mathcal{A}|$, $L$ and $p$, together with the hyperparameters $\sigma$ and $\alpha$, we construct next-token distributions $\pi_{\mathrm{ref}}(\cdot\mid s,x)$ for all prefixes $s$ of length at most $L-1$, thereby simulating the LLM sampling process.

\paragraph{Parameter setup.}\

\textbf{Figure \ref{fig:diff_levels}:} $|\mathcal{A}|$=100, $L$=10, $p=[0.01, 0.05,0.3]$, $\kappa=0.01$, $\alpha=0.5$, $\sigma=1$.\\
\textbf{Figure \ref{fig:horizon_len}:} $|\mathcal{A}|$=100, $L\in[2, 40]$, $p=0.8^L$, $\kappa=0.01$, $\alpha=0.1$, $\sigma=1$.

\paragraph{Detailed empirical findings.}
We expand here on the discussion in Section~\ref{Section: Experiment} with a finer-grained, per-panel analysis of Figures~\ref{fig:diff_levels} and~\ref{fig:horizon_len}.

\emph{Difficulty experiment (Figure~\ref{fig:diff_levels}).}
In the hardest setting ($p=0.01$, Figure~\ref{fig:diff_a}), both Oracle CF-Beam and Empirical CF-Beam achieve substantially higher accuracy than the sequence-level parallel baselines (Best-of-$N$, Majority Voting, Best-of-Majority) and Vanilla-Beam, especially under limited budgets. This is precisely the regime where sparse coverage and finite-sample tail fluctuations are most pronounced, and hence where the failure mode identified in Theorem~\ref{th:vanilla_beam} is most amplified. As the problem becomes easier (Figure~\ref{fig:diff_b}), the performance gap narrows: the sequence-level baselines catch up at moderate-to-large budgets, and the improvement of CF-Beam over Vanilla-Beam becomes more moderate. In the easiest regime ($p=0.3$, Figure~\ref{fig:diff_c}), most methods achieve comparable performance, and Vanilla-Beam nearly matches CF-Beam once the budget is sufficiently large. These trends suggest that the confidence filter is most beneficial in hard, sparse-coverage instances, while its marginal gain decreases when the base policy is already accurate enough for the optimal prefixes to be reliably sampled. This is also consistent with the gap between the sequence-level coefficient $\overline{C}$ and the worst-step coefficient $C_{\max}^{\star}$ shown in each panel: as $p$ increases, this gap becomes less pronounced, reducing the potential advantage of CF-Beam.

\emph{Horizon-length experiment (Figure~\ref{fig:horizon_len}).}
We consider a setting where, at each step, the model selects the correct action with mean probability $0.8$. As the horizon grows, the accuracy of Best-of-$N$, Best-of-Majority, and Majority Voting drops rapidly, whereas the beam-based methods are more robust. Among them, the CF-Beam variants consistently maintain the highest accuracy, indicating that confidence filtering further improves the stability of beam search in long-horizon regimes. This is consistent with our theory, which shows that the dominant sample-complexity term for CF-Beam grows only polynomially in the horizon length $L$. In contrast, sequence-level baselines can exhibit exponential dependence on $L$, causing their performance to degrade more rapidly as the horizon increases.

\begin{remark}[Finite-budget performance of Empirical CF-Beam] It may seem counterintuitive that, in Figure~\ref{fig:horizon_len}, Empirical CF-Beam slightly outperforms Oracle CF-Beam. Here, \emph{oracle} refers to knowledge of the optimal-token probability used to set the threshold; the theory-prescribed choice need not maximize accuracy at every finite budget. Writing $p_t^\star:=\pi_{\mathrm{ref}}(a_t^\star\mid s_{t-1}^\star,x)$, the oracle threshold $(1-1/L)p_t^\star$ leaves a margin of only $p_t^\star/L$ below the mean empirical frequency. As $L$ grows, this shrinking margin makes the filter more sensitive to downward sampling fluctuations. The sufficient condition in Corollary~\ref{cor:complexity} includes $N\gtrsim C_{\max}^{\star}L^2\log(2L/\delta)$ to control filtering errors, whereas the horizon experiment fixes the total sampling budget and therefore cannot increase the per-prefix sample size at this rate. With $\gamma=0.3$, the empirical threshold $\gamma\hat p_{\max,t}(s_{t-1}^\star)$ can provide a lower cutoff when $\hat p_{\max,t}(s_{t-1}^\star)$ tracks $p_t^\star$. This reduces the risk of premature elimination, while potentially admitting more suboptimal candidates, and provides a plausible explanation for the observed empirical advantage.
\end{remark}


\subsection{Additional Experiments over LLMs}
\label{appendix:llm-experiments}

To complement the simulator experiments, we evaluate CF-Beam on a matrix-multiplication task using a pretrained LLM. This experiment examines whether the empirical confidence filter improves response accuracy under a comparable sampling budget when both the reference policy and the reward model are implemented by neural networks.

\paragraph{Experimental setup.}
We evaluate Qwen3-1.7B-Base \cite{yang2025qwen3} on $500$ synthetic $4\times4$ integer matrix-multiplication problems. All methods sample from the same reference policy $\pi_{\mathrm{ref}}$, given by the base model at sampling temperature $1.3$. Completed responses are scored by AceMath-7B-RM \citep{liu2025acemath}, which supplies the outcome reward for methods that use reward-based selection. We compare Empirical CF-Beam with the same four baselines used in the simulator experiments: Best-of-$N$, Majority Voting, Best-of-Majority, and Vanilla-Beam. Both beam methods follow Algorithm~\ref{alg:te-beam}, with per-prefix sample size $N=12$ and beam width $b=2$. For Empirical CF-Beam, we use the prefix-adaptive threshold $\beta_t(s)=\gamma\hat p_{\max,t}(s)$ with $\gamma=0.3$, as defined in the preceding subsection. We report the accuracy of the final selected response and the sampling budget, using the token-level policy-query accounting defined in the main text.

\begin{table}[htbp]
    \centering
    \caption{Comparison on $500$ synthetic $4\times4$ integer matrix-multiplication problems using Qwen3-1.7B-Base. Beam methods use $N=12$ and $b=2$; Empirical CF-Beam uses $\gamma=0.3$. Budgets are reported in thousands of token-level policy queries.}
    \label{tab:llm-comparison}
    \small
    \setlength{\tabcolsep}{5pt}
    \begin{tabular}{lccccc}
        \toprule
        & Best-of-$N$ & \shortstack{Majority\\Voting} & \shortstack{Best-of-\\Majority} & Vanilla-Beam & CF-Beam \\
        \midrule
        Accuracy & 0.332 & 0.304 & 0.326 & 0.372 & \textbf{0.388} \\
        Budget ($\times10^3$) & 16.8 & 16.8 & 16.8 & 18.6 & \textbf{16.2} \\
        \bottomrule
    \end{tabular}
\end{table}

\paragraph{Comparison with baselines.}
Table~\ref{tab:llm-comparison} shows that Empirical CF-Beam achieves the highest accuracy among the evaluated methods, reaching $0.388$ with a sampling budget of $16.2\mathrm{k}$. This improves on Vanilla-Beam by $1.6$ percentage points while using a smaller budget ($16.2\mathrm{k}$ versus $18.6\mathrm{k}$). Relative to the strongest sequence-level baseline, Best-of-$N$, CF-Beam improves accuracy by $5.6$ percentage points, again with a slightly smaller sampling budget. These results provide evidence that confidence filtering can improve the accuracy--budget tradeoff in this LLM setting, consistent with the trends observed in the simulator experiments.

\paragraph{Ablation over the filtering threshold and beam width.}
We further vary $\gamma\in\{0,0.15,0.3,0.45,0.6,0.75,0.9\}$ and $b\in\{2,4,8\}$ while keeping $N=12$ fixed. Here, $\gamma=0$ disables confidence filtering and recovers Vanilla-Beam. Since the response lengths and reward-model errors depend on the chosen model and task, this ablation focuses on the two algorithmic parameters $\gamma$ and $b$.

\begin{table}[htbp]
    \centering
    \caption{Accuracy under different beam widths $b$ and filtering coefficients $\gamma$ on the same LLM task, with $N=12$. The column $\gamma=0$ corresponds to Vanilla-Beam. Bold entries mark the highest accuracy for each beam width.}
    \label{tab:llm-ablation}
    \small
    \setlength{\tabcolsep}{6pt}
    \begin{tabular}{lccccccc}
        \toprule
        & $\gamma=0$ & $0.15$ & $0.3$ & $0.45$ & $0.6$ & $0.75$ & $0.9$ \\
        \midrule
        $b=2$ & 0.372 & 0.378 & \textbf{0.388} & 0.372 & 0.318 & 0.292 & 0.290 \\
        $b=4$ & 0.482 & \textbf{0.512} & 0.472 & 0.392 & 0.374 & 0.330 & 0.302 \\
        $b=8$ & 0.536 & \textbf{0.550} & 0.480 & 0.434 & 0.364 & 0.292 & 0.300 \\
        \bottomrule
    \end{tabular}
\end{table}

Table~\ref{tab:llm-ablation} shows that the best accuracy is attained at a moderate filtering coefficient: $\gamma=0.3$ for $b=2$ and $\gamma=0.15$ for $b=4$ and $b=8$. At each beam width, the best filtered configuration improves on Vanilla-Beam. Larger thresholds lead to substantially lower accuracy, consistent with overly aggressive filtering discarding useful continuations. These observations reflect the tradeoff between removing low-frequency tail candidates and preserving plausible reasoning trajectories. Increasing the beam width also improves the best observed accuracy, from $0.388$ at $b=2$ to $0.512$ at $b=4$ and $0.550$ at $b=8$. This comparison holds the per-prefix sample size fixed; maintaining a wider beam generally requires a larger total sampling budget.





\end{document}